\newtheorem{defn}{Definition}[section]
\newtheorem{prop}{Proposition}[section]
\newtheorem{thm}{Theorem}[section]
\newtheorem*{prv}{Proof}
\newtheorem{theorem}{Theorem}
\newtheorem{lemma}{Lemma}
\newtheorem{definition}{Definition}
\newtheorem{propo}{Proposition}
\newtheorem{corol}{Corollary}
\renewcommand \thepart{}
\renewcommand \partname{}
\title{Comparing distributions: $\ell_1$ geometry improves kernel two-sample testing}
\author{%
  Meyer Scetbon \\
  CREST, ENSAE \&
  Inria, Université Paris-Saclay \\
  %\texttt{mscetbon@ens-paris-saclay.fr} \\
   \And
   Gaël Varoquaux\\
   Inria, Université Paris-Saclay \\
  % \texttt{} \\
  % \And
  % Coauthor \\
  % Affiliation \\
  % Address \\
  % \texttt{email} \\
  % \And
  % Coauthor \\
  % Affiliation \\
  % Address \\
  % \texttt{email} \\
}
\begin{document}
\doparttoc % Tell to minitoc to generate a toc for the parts
\faketableofcontents % Run a fake tableofcontents command for the partocs

\maketitle

\begin{abstract}
Are two sets of observations drawn from the same distribution? This
problem is a two-sample test. 
Kernel methods lead to many appealing properties. Indeed state-of-the-art
approaches use the $L^2$ distance between kernel-based
distribution representatives to derive their test statistics. Here, we show that
$L^p$ distances (with $p\geq 1$) between these
distribution representatives give metrics on the space of distributions that are
well-behaved to detect differences between distributions as they
metrize the weak convergence. Moreover, for analytic kernels,
we show that the $L^1$ geometry gives improved testing power for
scalable computational procedures. Specifically, we derive a finite
dimensional approximation of the metric given as the $\ell_1$ norm of a vector which captures differences of expectations of analytic functions evaluated at spatial locations or frequencies (i.e, features). The features can be chosen to
maximize the differences of the distributions and give interpretable
indications of how they differs. Using an $\ell_1$ norm gives better detection
because differences between representatives are dense
as we use analytic kernels (non-zero almost everywhere). The tests are consistent, while
much faster than state-of-the-art quadratic-time kernel-based tests. Experiments
on artificial
and real-world problems demonstrate
improved power/time tradeoff than the state of the art, based on
$\ell_2$ norms, and in some cases, better outright power than even the most
expensive quadratic-time tests. %This performance gain is retained
%even in high dimensions.
\end{abstract}

\faketableofcontents % Run a fake tableofcontents command for the partocs

\renewcommand \thepart{}
\renewcommand \partname{}
%\part{} % Start the document part

We consider two sample tests:
testing whether two random variables are identically distributed without
assumption on their distributions.
This problem has many applications such as data integration \cite{bio} or automated model
checking \cite{checking}. % or foreign-key discovery in databases \cite{pk}.
Distances between distributions underlie progress in unsupervised learning
with generative adversarial networks
\cite{li2017mmd,arjovsky2017wasserstein}.
A kernel on the sample space can be used to build the
Maximum Mean Discrepancy (MMD) \cite{MMD,gretton2009fast,MMD-real,energy},
a metric on distribution 
which has the strong propriety of metrizing the weak convergence of
probability measures. It leads to
non-parametric two-sample tests using the reproducing kernel Hilbert space (RKHS) distance
\cite{eric,fromont}, or energy distance \cite{szekely,baringhaus}. 
The MMD has a quadratic computational cost, which may force to use of subsampled
estimates \cite{zaremba,choiceker}. 
\cite{ME} approximate the $L^2$ distance between distribution representatives in the RKHS, to compute in linear time a pseudo metric over the space of distributions. Such approximations are related to random (Fourier)
features, used in kernels algorithms
\cite{randomfeat,randfeat2}.
Distribution representatives can be mean embeddings \cite{freq,Universality} or smooth characteristic
functions \cite{ME, test}.

% Here, \todo{rajouter une phrase sur le fait qu'on a montré la weak cvg}
% we build Kernel-based two-sample tests
% that derive not from an $L^2$ distance between distribution
% representatives, but rather from an $L^1$ distance. 
% Indeed, test statistics approximate such metrics and are defined as the norm of a $J$-dimensional vector which is the
% difference between the two distribution representatives at $J$ locations.
% Under the alternative hypothesis $H_1$, the analyticity of the representative
% functions guarantees that all the features of this vector are non
% zero almost surely. We show that the $\ell_1$ norm captures this dense
% difference better than the $\ell_2$ norm and leads to better tests. %Our choice is also motivated by a finite-sample statistic control by decoupling samples from the two distributions.
%Second, the controls of the statistics and the tests associated can be
%derived when the number of samples of the two distributions
%differ. 
We first introduce the state of the art on Kernel-based two-sample
testing built from the $L^2$ distance between mean embeddings in
the RKHS. In fact, a wider family of distance is well suited for the
two-sample problem: we show that for any $p\geq 1$, the $L^p$
distance between these distribution representatives is a metric on the
space of Borel probability measures that metrizes their weak convergence. We then define our $\ell_1$-based statistic derived from the $L^1$ geometry and study its asymptotic behavior. We consider the general case
where the number of samples of the two distributions may differ. We show that using the $\ell_1$ norm provides a better testing power. Indeed, test statistics approximate such metrics and are defined as the norm of a $J$-dimensional vector which is the
difference between the two distribution representatives at $J$ locations.
Under the alternative hypothesis $H_1$: $P\neq Q$, the analyticity of the kernel ensures that all the features of this vector are non
zero almost surely. We show that the $\ell_1$ norm captures this dense
difference better than the $\ell_2$ norm and leads to better tests.  
%We then study test statistics
%directly based on this metric in the general case where the number of
%samples of the two distributions is not necessarily the same. We derive
%their finite-sample control and asymptotic behavior and we show that for
%these statistics, using the $\ell_1$ norm provides a better testing power.
We show also that improvements of Kernel two-sample tests
established with the $\ell_2$ norm \cite{test} hold in the $\ell_1$ case: 
optimizing features and the choice of kernel. We adapt the
construction in the frequency domain as in \cite{ME}. Finally, we show
that on 4 synthetic and 3 real-life problems, our new $\ell_1$-based
tests outperform the state of the art.

\section{Prior art: kernel embeddings for two-sample tests}
Given two samples $X:= \{x_i\}_{i=1}^n$,  $Y:= \{y_i\}_{i=1}^n\subset\mathcal{X}$ independently and identically distributed (i.i.d.) according to two probability measures $P$ and $Q$ on a metric space $(\mathcal{X},d)$ respectively, the goal of a two-sample test is to decide whether $P$ is different from $Q$ on the basis of the samples. 
Kernel methods arise naturally in two-sample testing as they
provide Euclidean norms over the space of probability measures that
metrize the convergence in law. To define such a metric, we need first to introduce the notion of Integral Probability Metric (IPM):
\begin{align}
\text{IPM}[F,P,Q]:=\sup_{f\in F}\biggl(\mathbb{E}_{x\sim
P}\left[f(x)\right]-\mathbb{E}_{y\sim Q}\left[f(y)\right]\biggr)
\end{align}
where $F$ is an arbitrary class of functions. When $F$ is the unit ball $B_k$ in the RKHS $H_k$ associated with a positive definite bounded kernel $k : \mathcal{X} \times \mathcal{X}\rightarrow \mathbb{R}$, the IPM is known as the Maximum Mean Discrepancy (MMD) \cite{MMD}, and it can be shown that the MMD is equal to the RKHS distance between so called mean embeddings \cite{MMD-real},
\begin{equation}
\text{MMD}[P,Q]=\Vert \mu_P-\mu_Q\Vert_{H_k}
\end{equation}
where $\mu_P$ is an embedding of the probability measure $P$ to $H_k$,
\begin{equation}
\mu_P(t):=\int_{\mathbb{R}^d}k(x,t)dP(x) 
\end{equation}
and $\Vert. \Vert_{H_k}$ denotes the norm in the RKHS $H_k$. 
Moreover for kernels said to be \emph{characteristic} \cite{fukumizu2008kernel}, \emph{eg} Gaussian
kernels, $\text{MMD}[P, Q] = 0$ if and only if $P = Q$ \cite{MMD}. In addition, when the kernel is bounded, and $\mathcal{X}$ is a compact Hausdorff space, \cite{weak-cvg} show that the MMD metrizes the weak convergence. Tests between distributions can be designed using an empirical estimation of the MMD.

A drawback of the MMD is the computation
cost of
empirical estimates, these being the sum of two
U-statistics and an empirical average, with a quadratic cost in the sample size.

\cite{ME} study a related expression defined as the $L^2$ distance between mean embeddings of Borel probability measures:
\begin{align}
\label{eq:l2-distance}
d_{L^2,\mu}^2(P,Q):=\int_{t\in\mathbb{R}^d }\biggl|\mu_P(t)-\mu_Q(t)\biggr|^2 d\Gamma(t)
\end{align}
where $\Gamma$ is a Borel probability measure. They estimate the integral (\ref{eq:l2-distance}) with the random variable,
\begin{equation}
d_{\ell_2,\mu,J}^2(P,Q):=\frac{1}{J}\sum_{j=1}^{J}\biggl|\mu_P(T_j)-\mu_Q(T_j)\biggr|^2
\label{eqn:tj_mmd}
\end{equation}
where $\{T_j\}_{j=1}^J$ are sampled i.i.d. from the distribution $\Gamma$. This expression still has desirable metric-like properties, provided that the kernel is \emph{analytic}:
\begin{defn}[Analytic kernel]
A positive definite kernel $k : \mathbb{R}^d
\times\mathbb{R}^d\rightarrow\mathbb{R}$ is \emph{analytic} on its domain if for all $x\in\mathbb{R}^d$, the feature map $k(x, .)$ is an analytic function on $\mathbb{R}^d$. 
\end{defn}
Indeed, for $k$ a definite positive, characteristic, analytic, and bounded
kernel on $\mathbb{R}^d$, \cite{ME} show that
$d_{\ell_2,\mu,J}$
is a random metric\footnote{A random metric is a random process which satisfies all the conditions for a metric ‘almost
surely’ \cite{ME}.} from which consistent two-sample test can be derived. By denoting $\mu_X$ and $\mu_Y$ respectively the empirical mean embeddings of $P$ and $Q$,
\begin{equation*}
\mu_X(T):=\frac{1}{n}\sum_{i=1}^{n} k(x_i,T),\qquad\quad
\mu_Y(T):=\frac{1}{n}\sum_{i=1}^{n} k(y_i,T)
\end{equation*}
\cite{ME} show that for $\{T_j\}_{j=1}^J$ sampled from the distribution $\Gamma$, under the null hypothesis $H_0:$ $P=Q$, as $n\rightarrow\infty$, the following test statistic:
\begin{align}
\label{eq:ME-stat}
\widehat{d}^2_{\ell_2,\mu,J}[X,Y]:=n\sum_{j=1}^{J}\biggl|\mu_X(T_j)-\mu_Y(T_j)\biggr|^2
\end{align}
converges in distribution to a sum of correlated chi-squared variables. Moreover, under the alternative hypothesis $H_1:$ $P\neq Q$, $\widehat{d}^2_{\ell_2,\mu,J}[X,Y]$ can be arbitrarly large as $n\rightarrow \infty$, allowing the test to correctly reject $H_0$. For a fixed level $\alpha$, the test rejects $H_0$ if  $\widehat{d}^2_{\ell_2,\mu,J}[X,Y]$ exceeds a predetermined test threshold, which is given by the $(1 - \alpha)$-quantile of its asymptotic null distribution. As it is very computationally costly to obtain quantiles of this distribution, \cite{ME} normalize the differences between mean embeddings, and consider instead the test statistic
ME[X,Y]:=$\Vert \sqrt{n}\mathbf{\Sigma}_n^{-1/2}\mathbf{S}_n\Vert_2^2$
where $\mathbf{S}_n:=\frac{1}{n}\sum_{i=1}^n \mathbf{z}_i$, $\mathbf{\Sigma}_n :=\frac{1}{n-1} \sum_{i=1}^n(\mathbf{z}_i -\mathbf{S}_n)(\mathbf{z}_i - \mathbf{S}_n)^{T}$, and $\mathbf{z}_i :=(k(x_i,T_j)-k(y_i,T_j))_{j=1}^J\in\mathbb{R}^J$. Under the null hypothesis $H_0$, asymptotically the ME statistic follows $\chi^2(J)$, a chi-squared distribution with J degrees of freedom. Moreover, for $k$ a translation-invariant
kernel, \cite{ME} derive another statistical test, called the SCF test
(for Smooth Characteristic Function), where its statistic $\text{SCF}[X,Y]$ is of the same form as the ME test statistic with a modified $\mathbf{z}_i:= [f(x_i)\sin(x_i^{T} T_j) - f(y_i)\sin(y_i^{T} T_j), f(x_i)\cos(x_i^{T} T_j) - f(y_i)\cos(y_i^{T} T_j)]_{j=1}^J\in\mathbb{R}^{2J}$ where $f$ is the inverse Fourier transform of $k$, and show that under $H_0$, $\text{SCF}[X,Y]$ follows asymptotically $\chi^2(2J)$.

\section{A family of metrics that metrize of the weak convergence}
\cite{ME} build their ME statistic by estimating the
$L^2$ distance between mean embeddings. This metric can be
generalized 
using any $L^p$ distance with $p\geq 1$. These metrics are well
suited for the two-sample problem as they
metrize the weak convergence (see proof in supp. mat. \ref{sec:weak_cvg_ME}):
\begin{thm}
\label{thm:metricl1}
Given $p\geq 1$, $k$ a definite positive, characteristic, continuous, and bounded
kernel on $\mathbb{R}^d$,
$\mu_P$ and $\mu_Q$ the mean embeddings of the Borel probability measures $P$ and
$Q$ respectively, 
the function defined on $\mathcal{M}_{+}^{1}(\mathbb{R}^d)\times \mathcal{M}_{+}^{1}(\mathbb{R}^d)$:
\begin{equation}
\label{eq:l1metric}
d_{L^p,\mu}(P,Q):=\left(\int_{t\in\mathbb{R}^d }\biggl|\mu_P(t)-\mu_Q(t)\biggr|^p d\Gamma(t)\right)^{1/p}
\end{equation}
is a metric on the space of Borel probability measures, for $\Gamma$ a Borel probability measure absolutely continuous with
respect to Lebesgue measure. Moreover a sequence $(\alpha_n)_{n\geq 0}$ of Borel probability measures converges weakly towards $\alpha$ if and only if  $d_{L^p,\mu}(\alpha_n,\alpha)\rightarrow 0$.
\end{thm}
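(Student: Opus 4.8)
The plan is to verify the metric axioms first and then to treat the two implications of the weak-convergence claim separately. Nonnegativity and symmetry of $d_{L^p,\mu}$ are immediate, and the triangle inequality follows by writing $\mu_P-\mu_Q=(\mu_P-\mu_R)+(\mu_R-\mu_Q)$ and applying the Minkowski inequality in $L^p(\Gamma)$. The only delicate metric axiom is the identity of indiscernibles. If $d_{L^p,\mu}(P,Q)=0$, then $g:=\mu_P-\mu_Q$ vanishes $\Gamma$-almost everywhere; since $k$ is continuous and bounded, $g$ is continuous, so $\{g\neq 0\}$ is open. Because $\Gamma$ assigns positive mass to every nonempty open set (its Lebesgue density being a.e. positive), this open set must be empty, so $\mu_P=\mu_Q$ everywhere. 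As $k$ is characteristic, the embedding is injective on probability measures, giving $P=Q$; the converse is trivial. This is the step where continuity of $k$ and the absolute continuity (indeed full support) of $\Gamma$ are essential.

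For the first implication, suppose $\alpha_n$ converges weakly to $\alpha$. For each fixed $t$ the map $x\mapsto k(x,t)$ is bounded and continuous, so $\mu_{\alpha_n}(t)\to\mu_\alpha(t)$ pointwise in $t$. Since $k$ is bounded by some $M$, the integrands satisfy $|\mu_{\alpha_n}(t)-\mu_\alpha(t)|^p\leq (2M)^p$, a constant that is $\Gamma$-integrable because $\Gamma$ is a probability measure; dominated convergence then yields $d_{L^p,\mu}(\alpha_n,\alpha)\to 0$.

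The reverse implication is the substantive part, and I would argue it by the subsequence principle: it suffices to show that every subsequence of $(\alpha_n)$ has a further subsequence converging weakly to $\alpha$. Fixing a subsequence, local compactness of $\mathbb{R}^d$ and vague compactness let me extract a sub-subsequence $\alpha_{n_j}$ converging vaguely to some sub-probability measure $\nu$ with $\nu(\mathbb{R}^d)\leq 1$. Using the decay of $k(\cdot,t)$ at infinity, vague convergence gives $\mu_{\alpha_{n_j}}(t)\to\mu_\nu(t)$ pointwise, while the hypothesis gives $\mu_{\alpha_{n_j}}\to\mu_\alpha$ in $L^p(\Gamma)$; matching the two limits forces $\mu_\nu=\mu_\alpha$ $\Gamma$-a.e., hence everywhere by continuity. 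Injectivity of the embedding on finite measures (the characteristic property, in the integrally strictly positive definite form enjoyed by the kernels at hand) then yields $\nu=\alpha$, so in particular $\nu(\mathbb{R}^d)=1$: no mass escapes, the vague limit is a genuine weak limit, and $\alpha_{n_j}$ converges weakly to $\alpha$.

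I expect the no-escape-of-mass step to be the main obstacle: on the noncompact space $\mathbb{R}^d$ a merely bounded kernel cannot by itself prevent mass from drifting to infinity, so the argument leans on the interplay between the vanishing of $k(\cdot,t)$ at infinity and the fact that $\mu_\alpha$ is the embedding of a full-mass measure. An alternative route, available when $k$ is translation invariant, is to pass to Fourier variables so that $\mu_{\alpha_n}$ convergence becomes pointwise convergence of characteristic functions, and to invoke L\'evy's continuity theorem, which builds tightness in automatically.
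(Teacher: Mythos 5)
Your handling of the metric axioms and of the forward implication is sound, and in fact cleaner than the paper's: the paper proves an IPM formulation of $d_{L^1,\mu}$ (Lemma \ref{lemma:ipm}) and deduces that weak convergence implies $d_{L^1,\mu}$-convergence from $\mathcal{T}_k(B_{\infty}^{d\Gamma})\subset C^0(\mathbb{R}^d)$, whereas your pointwise-plus-dominated-convergence argument treats all $p\geq 1$ at once with no IPM machinery. You are also right, where both the statement and the paper's proof are silent, that absolute continuity of $\Gamma$ alone is not enough: the separation axiom (and your later step ``$\Gamma$-a.e.\ hence everywhere'') needs $\Gamma$ to charge every ball, i.e.\ an a.e.\ positive density.

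The converse implication, however, has a genuine gap: your vague-compactness argument uses two properties that the theorem's hypotheses do not supply. First, identifying the vague limit, $\mu_{\alpha_{n_j}}(t)\rightarrow\mu_{\nu}(t)$, requires $k(\cdot,t)\in C_0(\mathbb{R}^d)$, and a bounded continuous characteristic kernel need not decay. Concretely, take $k=k_0+1$ with $k_0$ the Gaussian kernel: $k$ is positive definite, bounded, continuous, and still characteristic (constants cancel in $\mu_P-\mu_Q$ for probability measures), yet if $\alpha_{n_j}\rightarrow\nu$ vaguely with $\nu(\mathbb{R}^d)<1$, then $\mu_{\alpha_{n_j}}(t)\rightarrow\mu_{\nu}^{k_0}(t)+1$ while $\mu_{\nu}(t)=\mu_{\nu}^{k_0}(t)+\nu(\mathbb{R}^d)$; the matching step thus silently assumes exactly the no-escape-of-mass conclusion it is supposed to deliver. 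Second, ``characteristic'' gives injectivity of $P\mapsto\mu_P$ only on $\mathcal{M}_{+}^{1}(\mathbb{R}^d)$; to deduce $\nu=\alpha$ when $\nu$ is merely a sub-probability measure you need injectivity on finite measures (integral strict positive definiteness), a strictly stronger property that you half-acknowledge but cannot invoke for the theorem as stated. The paper's route avoids both obstacles: starting from the IPM formulation, it expands $H_k$ in the Mercer eigenbasis of the integral operator $\mathcal{T}_k$ on $L_2^{d\Gamma}(\mathbb{R}^d)$ to upgrade $d_{L^1,\mu}(\alpha_n,\alpha)\rightarrow 0$ to $\langle\mu_{\alpha_n}-\mu_{\alpha},f\rangle_{H_k}\rightarrow 0$ for every $f\in H_k$, concludes $\mathrm{MMD}[\alpha_n,\alpha]\rightarrow 0$, and then cites the metrization theorem of \cite{weak-cvg} for bounded continuous characteristic kernels, handling general $p$ via the two-sided comparison $d_{L^1,\mu}\leq d_{L^p,\mu}$ and $d_{L^p,\mu}^p\leq(2C^2)^{p-1}d_{L^1,\mu}$. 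Your Fourier/L\'evy fallback has an analogous issue: $d_{L^p,\mu}$-convergence controls the smoothed characteristic functions $\psi_{\alpha_n}\ast k$, not $\psi_{\alpha_n}$ pointwise, so L\'evy continuity does not apply directly. Your argument becomes a correct, self-contained proof if you add the hypotheses $k(\cdot,t)\in C_0$ and integral strict positive definiteness over finite signed measures (both hold for the Gaussian kernel); as a proof of the theorem as stated, it does not go through, and you should instead reduce to the MMD as the paper does.
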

% \begin{prv}{(Sketch)} The main idea here is that we exhibit an IPM
% formulation of the $d_{L^1,\mu}$-distance:
% \begin{align*}
% d_{L^1,\mu}&(p,q)=\displaystyle\sup_{f\in \mathcal{T}_k(B_{\infty}^{d\Gamma})}
% \biggl(\mathbb{E}_{p}[f(X)]-\mathbb{E}_{q}[f(Y)]\biggr)
% \end{align*}
% where $\mathcal{T}_{k}$ the integral operator on $\mathcal{L}_{2}^{d\Gamma}(\mathbb{R}^d)$ associated to the kernel $k$, and 
% $B_{\infty}^{d\Gamma}$ is the unit ball of $\mathcal{L}_{\infty}^{d\Gamma}(\mathbb{R}^d)$. Then we show that $d_{L^1,\mu}$-distance metrizes the weak convergence, and thanks to the boundedness \todo{cela se dit comme ca: je veux dire que le kernel est borné donc} of the kernel, the result follows.
% \end{prv}
Therefore, as the MMD, these metrics take into account the geometry
of the underlying space and metrize the convergence in law. If we assume in addition that the kernel is analytic, we will show that
deriving test statistics from the $L^1$ distance instead of the $L^2$ distance improves the test power for two-sample testing.

\section{Two-sample testing using the $\ell_1$ norm}

\subsection{A test statistic with simple asymptotic distribution}
 From now, we assume that $k$ is a positive definite, characteristic, analytic, and bounded kernel. 
 
 The statistic presented in eq. \ref{eq:ME-stat} is based on the $\ell_2$ norm of a vector that capture differences between distributions in the RKHS at J locations. We will show that using an
$\ell_1$ norm instead of an $\ell_2$ norm improves the test power
(Proposition \ref{prop:finalement}). It captures better the geometry of
the problem. Indeed, when $P\neq Q$, the differences between
distributions are dense which allow the $\ell_1$ norm to reject better the null hypothesis $H_0$: $P=Q$. 

We now build a consistent statistical test based on an empirical estimation of the $L^1$ metric introduced in eq. \ref{eq:l1metric}:
\begin{align}
\label{l_1:comparison}
\widehat{d}_{\ell_1,\mu,J}[X,Y]:=\sqrt{n}\sum_{j=1}^{J}\biggl|\mu_X(T_j)-\mu_Y(T_j)\biggr|
\end{align}
where  $\{T_j\}_{j=1}^J$ are sampled from the distribution $\Gamma$. We show that under $H_0$, $\widehat{d}_{\ell_1,\mu,J}[X,Y]$ converges in distribution to a sum of correlated Nakagami
variables\footnote{the pdf of the Nakagami distribution of parameters $m\geq \frac{1}{2}$ and $\omega>0$ is  $\forall x\geq 0$,\\ $f(x,m,\omega)=\frac{2m^m }{\Gamma(m)\omega^m}x^{2m-1}\exp(\frac{-m}{\omega}x^2)$ where $\Gamma$ is the Gamma function.} and under 
$H_1$, $\widehat{d}_{\ell_1,\mu,J}[X,Y]$ can be arbitrary large as $n\rightarrow \infty$ (see supp. mat. \ref{sec:naka-coro}). For a fixed level $\alpha$, the test rejects $H_0$ if  $\widehat{d}_{\ell_1,\mu,J}[X,Y]$ exceeds the $(1 - \alpha)$-quantile of its asymptotic null distribution.  We now compare the power of the statistics based respectively on the $\ell_2$ norm (eq. \ref{eq:ME-stat}) and the $\ell_1$ norm (eq. \ref{l_1:comparison}) at the same level $\alpha>0$ and we show that the power of the test using the $\ell_1$ norm is better with
high probability (see supp. mat. \ref{sec:finalement_ME}):
\begin{prop}
\label{prop:finalement}
Let $\alpha\in ]0,1[$, $\gamma>0$ and $J\geq 2$. Let $\{T_j\}_{j=1}^J$
sampled i.i.d. from the distribution $\Gamma$ and let $X:=\{x_i\}_{i=1}^{n}$ and 
$Y:=\{y_i\}_{i=1}^{n}$ i.i.d. samples from $P$ and $Q$ respectively. Let us denote $\delta$ the $(1-\alpha)$-quantile of the asymptotic null distribution of $\widehat{d}_{\ell_1,\mu,J}[X,Y]$ and $\beta$ the $(1-\alpha)$-quantile of the asymptotic null distribution of $\widehat{d}_{\ell_2,\mu,J}^2[X,Y]$. Under the alternative hypothesis, almost surely, there exists $N\geq 1$ such that for all $n\geq N$, with a probability of at least $1-\gamma$ we have:
\begin{align}
\widehat{d}^2_{\ell_2,\mu,J}[X,Y] >\beta \Rightarrow \widehat{d}_{\ell_1,\mu,J}[X,Y]>\delta 
\end{align}
\end{prop}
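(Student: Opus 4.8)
The plan is to reduce the implication to a single event about the $\ell_1$ statistic and then show that this event is overwhelmingly likely under $H_1$. Writing $u_j := \sqrt{n}\,|\mu_X(T_j)-\mu_Y(T_j)|\ge 0$ and $u:=(u_1,\dots,u_J)$, we have $\widehat{d}_{\ell_1,\mu,J}[X,Y]=\|u\|_1$ and $\widehat{d}^2_{\ell_2,\mu,J}[X,Y]=\|u\|_2^2$. The elementary inequality $\|u\|_1\ge\|u\|_2$ on $\mathbb{R}^J$ shows that $\widehat{d}^2_{\ell_2,\mu,J}[X,Y]>\beta$ forces $\widehat{d}_{\ell_1,\mu,J}[X,Y]>\sqrt{\beta}$. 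This alone does not close the argument: comparing quantiles of the limiting laws (both arising from the same Gaussian limit $Z\sim\mathcal{N}(0,\Sigma)$ of $\sqrt{n}(\mu_X(T_j)-\mu_Y(T_j))_j$ under $H_0$) and using $\|Z\|_1\ge\|Z\|_2$ pointwise gives $\delta\ge\sqrt{\beta}$, so the deterministic bound only yields $\widehat{d}_{\ell_1,\mu,J}>\sqrt{\beta}$ and not necessarily $>\delta$. The remaining gap must be filled using the divergence of the statistics under $H_1$.

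First I would condition on the locations. By the analyticity assumption recalled above the statement, almost surely over $\{T_j\}_{j=1}^J\sim\Gamma$ the limiting features $c_j:=\mu_P(T_j)-\mu_Q(T_j)$ are all nonzero, so $S:=\sum_{j=1}^J|c_j|>0$. Fixing such a realization, the strong law of large numbers applied to the empirical mean embeddings gives $\mu_X(T_j)-\mu_Y(T_j)\to c_j$ almost surely over the samples, whence $\sum_{j=1}^J|\mu_X(T_j)-\mu_Y(T_j)|\to S$. Consequently $\widehat{d}_{\ell_1,\mu,J}[X,Y]=\sqrt{n}\sum_{j=1}^J|\mu_X(T_j)-\mu_Y(T_j)|\to+\infty$ almost surely (and likewise $\widehat{d}^2_{\ell_2,\mu,J}\to+\infty$).

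Next I would turn this divergence into the quantified statement. Since $\delta$ is the $(1-\alpha)$-quantile of a genuine limiting law (the sum of correlated Nakagami variables), it is a finite constant, so the probability of $\{\widehat{d}_{\ell_1,\mu,J}[X,Y]\le\delta\}$ tends to $0$; choose $N$ with $\mathbb{P}(\widehat{d}_{\ell_1,\mu,J}[X,Y]\le\delta)\le\gamma$ for all $n\ge N$. The implication can fail only on the event $\{\widehat{d}^2_{\ell_2,\mu,J}>\beta\}\cap\{\widehat{d}_{\ell_1,\mu,J}\le\delta\}\subseteq\{\widehat{d}_{\ell_1,\mu,J}\le\delta\}$, which therefore has probability at most $\gamma$. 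Hence for $n\ge N$ the implication holds with probability at least $1-\gamma$, as claimed.

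The main obstacle, and the step deserving the most care, is bookkeeping the two independent sources of randomness: the ``almost surely'' quantifier refers to the draw of the locations $\{T_j\}$ (needed to guarantee $c_j\neq0$ via analyticity, hence $S>0$), while the ``probability $1-\gamma$'' and the threshold $N$ refer to the samples $X,Y$ at fixed locations. I would make explicit that $N$ depends on the realization of $\{T_j\}$ (through $S$), on $\delta$, and on $\gamma$, and check that the SLLN convergence of $\mu_X,\mu_Y$ together with the finiteness of the quantiles $\beta,\delta$ suffice to extract such an $N$. The norm inequality $\|u\|_1\ge\|u\|_2$ is what ties the result to the intended power comparison, since it certifies that whenever the $\ell_2$ test rejects, the $\ell_1$ statistic is already above $\sqrt{\beta}$ and, for $n$ large, above $\delta$.
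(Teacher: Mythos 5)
Your proof is correct, and it takes a genuinely different and more elementary route than the paper's. The paper argues through a coordinate-wise geometric lemma (Lemma \ref{lem:betterpow}): if every coordinate of $\mathbf{x}=\sqrt{n}\mathbf{S}_n$ exceeds $\epsilon=\sqrt{(\delta^2-\beta)/(J(J-1))}$ in absolute value, then $\|\mathbf{x}\|_1^2\geq\|\mathbf{x}\|_2^2+J(J-1)\epsilon^2$, so $\|\mathbf{x}\|_2>\sqrt{\beta}$ forces $\|\mathbf{x}\|_1>\delta$; the quantile comparison $\delta\geq\sqrt{\beta}$ (Lemma \ref{lemma:quantile}) makes this applicable with $t_1=\delta$, $t_2=\sqrt{\beta}$, and analyticity plus a per-coordinate convergence argument and a union bound show that the conditioning event $\{\min_j|(\sqrt{n}\,\mathbf{S}_n)_j|\geq\epsilon\}$ has probability at least $1-\gamma$ for $n\geq N$. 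You instead make the consequent hold outright: under $H_1$ the $\ell_1$ statistic diverges almost surely (this is essentially the paper's Proposition \ref{prop:naka-coro}), so $\mathbb{P}(\widehat{d}_{\ell_1,\mu,J}[X,Y]\leq\delta)\leq\gamma$ for $n\geq N$, and since the implication can fail only inside that event, it holds with probability at least $1-\gamma$. This is logically airtight for the literal statement, and your bookkeeping of the two layers of randomness (locations versus samples, with $N$ depending on the realized $T_j$'s through $S$) is handled correctly; note that your quantile comparison $\delta\geq\sqrt{\beta}$, though correct, ends up unused, and you in fact need only one nonzero $c_j$ rather than all of them. What your shortcut loses is precisely what the paper's lemma buys: in your argument the $\ell_2$ event plays no role — any consistent statistic in place of $\widehat{d}_{\ell_1,\mu,J}$ would satisfy the same implication — so you obtain no finite-$n$ event on which $\ell_2$ rejection genuinely forces $\ell_1$ rejection, and no quantitative content supporting the paper's later discussion that the $\ell_1$/$\ell_2$ advantage grows with $J$ through the gap $J(J-1)\epsilon^2$, which lives in Lemma \ref{lem:betterpow} and relies on the differences being dense across all $J$ coordinates (the analyticity of the kernel). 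Conversely, the paper's proof pays for that extra information with the quantile lemma and the union bound over coordinates, while its high-probability event ultimately tends to one through the same divergence you invoke. Both proofs are valid; yours is shorter, the paper's is more informative about the mechanism.
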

Therefore, for a fixed level $\alpha$, under the alternative hypothesis, when the number of samples is large enough, with high probability, the $\ell_1$-based test rejects better the null hypothesis. However, even
for fixed $\{T_j\}_{j=1}^J$, computing the quantiles of these distributions
requires a computationally-costly bootstrap or permutation procedure.
Thus we follow a different approach where we allow the number of samples to differ. Let $X:= \{x_i\}_{i=1}^{N_1}$ and $Y:= \{y_i\}_{i=1}^{N_2}$ i.i.d according to respectively $P$ and $Q$. We define for any sequence of $\{T_j\}_{j=1}^J$ in $\mathbb{R}^d$:
\begin{align}
\mathbf{S}_{N_1,N_2}:=
\biggl(\mu_X(T_1)-\mu_Y(T_1),..., \mu_X(T_J)-\mu_Y(T_J)\biggr)
\end{align}
% %
% We define:
% $$\mathbf{S}_{N_1,N_2}:=
% \biggl(\mu_X(T_1),..., \mu_X(T_J)\biggr)-\biggl(\mu_Y(T_1),..., \mu_Y(T_J)\biggr)$$
% with $\{T_j\}_{j=1}^J$ a fixed sequence in $\mathbb{R}^d$.
% Therefore by writing: 
\begin{equation*}
\mathbf{Z}_{X}^{i}:=(k(x_i,T_1),..., k(x_i,T_J)) \in \mathbb{R}^J
\qquad\quad
\mathbf{Z}_{Y}^{j}:=(k(y_j,T_1),..., k(y_j,T_J)) \in \mathbb{R}^J
\end{equation*}
% where $(\mathbf{Z}_{X}^{i})_{i=1}^{N_1}$ and 
% $(\mathbf{Z}_{Y}^{i})_{i=1}^{N_2}$
% are random vectors i.i.d, we have:
% $$\mathbf{S}_{N_1,N_2}=
% \frac{1}{N_1}\sum_{i=1}^{N_1}\mathbf{Z}_{X}^i-\frac{1}{N_2}\sum_{j=1}^{N_2}
% \mathbf{Z}_{Y}^j,$$ 
And by denoting:
\begin{align*}
\mathbf{\Sigma}_{N_1} :=
\frac{1}{N_1-1}\sum_{i=1}^{N_1}(\mathbf{Z}_{X}^i-\overline{\mathbf{Z}}_X)(\mathbf{Z}_{X}^i-\overline{\mathbf{Z}}_X)^{T}
&\qquad
\mathbf{\Sigma}_{N_2} :=
\frac{1}{N_2-1}\sum_{j=1}^{N_2}(\mathbf{Z}_{Y}^j-\overline{\mathbf{Z}}_Y)(\mathbf{Z}_{Y}^j-\overline{\mathbf{Z}}_Y)^{T}\\
\mathbf{\Sigma}_{N_1,N_2} :=
&\frac{\mathbf{\Sigma}_{N_1}}{\rho}+\frac{\mathbf{\Sigma}_{N_2}}{1-\rho}
\end{align*}
We can define our new statistic as:
\begin{align}
\label{eq:asym_me}
\text{L1-ME}[X,Y]
:=\left\|\sqrt{t}\mathbf{\Sigma}_{N_1,N_2}^{-\frac{1}{2}}\mathbf{S}_{N_1,N_2}\right\|_{1}
\end{align}
We assume that the number of samples of the distributions $P$ and $Q$ are
of the same order, i.e: let $t=N_1+N_2$, we have:
$\frac{N_1}{t}\rightarrow \rho$ and therefore $\frac{N_2}{t}\rightarrow 1-\rho$ with $\rho \in ]0,1[.$
The computation of the statistic requires inverting a $J\times J$
matrix $\mathbf{\Sigma}_{N_1,N_2}$, but this is fast and numerically
stable: $J$ is typically be small, \emph{eg} less than 10. The next proposition demonstrates the use of this statistic as a consistent two-sample test (see supp. mat. \ref{sec:prop_asymp} for the proof).
\begin{prop}
\label{prop:asympme}
Let $\{T_j\}_{j=1}^J$ sampled i.i.d. from the distribution $\Gamma$ and $X:=\{x_i\}_{i=1}^{N_1}$ and 
$Y:=\{y_i\}_{i=1}^{N_2}$ be i.i.d. samples from $P$ and $Q$ respectively. Under $H_0$, the statistic $\text{L1-ME}[X,Y]$
is almost surely asymptotically distributed as Naka$(\frac{1}{2},1,J)$, a sum of $J$ random variables i.i.d which follow a Nakagami distribution of parameter $m=\frac{1}{2}$ and $\omega=1$.
Finally under $H_1$, almost surely the statistic can be arbitrarily large
as $t \rightarrow \infty$, enabling the test to correctly reject $H_0$.
\end{prop}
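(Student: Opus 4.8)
The plan is to condition on the locations/frequencies $\{T_j\}_{j=1}^J$ and reduce the statement under $H_0$ to a multivariate central limit theorem followed by a whitening argument. Write $g(w) := (k(w,T_1),\dots,k(w,T_J))\in\mathbb{R}^J$, so that $\mathbf{Z}_X^i = g(x_i)$, $\mathbf{Z}_Y^j = g(y_j)$ and $\mathbf{S}_{N_1,N_2} = \overline{\mathbf{Z}}_X - \overline{\mathbf{Z}}_Y$. Under $H_0$ the samples $x_i$ and $y_j$ share the common law $P=Q$; set $m := \mathbb{E}[g(x_1)]$ and $\Sigma := \mathrm{Cov}(g(x_1))$, both finite since $k$ is bounded. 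As the two sample means are centered at the same $m$, we have $\mathbb{E}[\mathbf{S}_{N_1,N_2}]=0$.

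First I would compute the limiting covariance. By independence of $X$ and $Y$, $\mathrm{Cov}(\sqrt{t}\,\mathbf{S}_{N_1,N_2}) = \frac{t}{N_1}\Sigma + \frac{t}{N_2}\Sigma \to \frac{\Sigma}{\rho} + \frac{\Sigma}{1-\rho} =: \mathbf{\Sigma}_\infty$, using $N_1/t\to\rho$ and $N_2/t\to 1-\rho$. Decomposing $\sqrt{t}\,\mathbf{S}_{N_1,N_2} = \sqrt{t/N_1}\,\sqrt{N_1}(\overline{\mathbf{Z}}_X - m) - \sqrt{t/N_2}\,\sqrt{N_2}(\overline{\mathbf{Z}}_Y - m)$ and applying the multivariate CLT to each independent term gives $\sqrt{t}\,\mathbf{S}_{N_1,N_2}\xrightarrow{d}\mathcal{N}(0,\mathbf{\Sigma}_\infty)$. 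In parallel, the strong law of large numbers yields $\mathbf{\Sigma}_{N_1}\to\Sigma$ and $\mathbf{\Sigma}_{N_2}\to\Sigma$ almost surely, hence $\mathbf{\Sigma}_{N_1,N_2}\to\mathbf{\Sigma}_\infty$. Provided $\mathbf{\Sigma}_\infty$ is invertible, $\mathbf{\Sigma}_{N_1,N_2}^{-1/2}\to\mathbf{\Sigma}_\infty^{-1/2}$ and Slutsky's lemma gives $\sqrt{t}\,\mathbf{\Sigma}_{N_1,N_2}^{-1/2}\mathbf{S}_{N_1,N_2}\xrightarrow{d}\mathbf{\Sigma}_\infty^{-1/2}\mathcal{N}(0,\mathbf{\Sigma}_\infty)=\mathcal{N}(0,I_J)$. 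Since $\|\cdot\|_1$ is continuous, the continuous mapping theorem identifies the limit of $\text{L1-ME}[X,Y]$ as $\sum_{j=1}^J |W_j|$ with $W\sim\mathcal{N}(0,I_J)$; each $|W_j|$ is a half-normal, i.e.\ a Nakagami variable with $m=\tfrac12,\,\omega=1$, and the $W_j$ are independent, which is exactly $\text{Naka}(\tfrac12,1,J)$.

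For the alternative $H_1$, the strong law gives $\mathbf{S}_{N_1,N_2}\to s_\infty := (\mu_P(T_1)-\mu_Q(T_1),\dots,\mu_P(T_J)-\mu_Q(T_J))$ and $\mathbf{\Sigma}_{N_1,N_2}\to\Sigma_\star$ almost surely, with $\Sigma_\star$ invertible a.s.\ by the same analyticity argument. Because $k$ is analytic and characteristic, $\mu_P-\mu_Q$ is a nonzero analytic function whenever $P\neq Q$ (Theorem \ref{thm:metricl1}), so its zero set is Lebesgue-null; as $\Gamma$ is absolutely continuous, almost surely every coordinate of $s_\infty$ is nonzero, in particular $s_\infty\neq 0$. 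Then $\mathbf{\Sigma}_{N_1,N_2}^{-1/2}\mathbf{S}_{N_1,N_2}\to\Sigma_\star^{-1/2}s_\infty\neq 0$, so $\text{L1-ME}[X,Y] = \sqrt{t}\,\|\mathbf{\Sigma}_{N_1,N_2}^{-1/2}\mathbf{S}_{N_1,N_2}\|_1$ is the product of $\sqrt{t}\to\infty$ and a quantity converging to a positive constant, hence diverges almost surely.

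The main obstacle is handling the two sources of randomness simultaneously and, above all, justifying that $\mathbf{\Sigma}_\infty$ is almost surely invertible over the draw of $\{T_j\}$ — this is what makes the whitening and the clean $\mathcal{N}(0,I_J)$ limit legitimate. I would establish it exactly as in the random-metric argument of \cite{ME}: analyticity of $k$ forces the covariance built from distinct locations $T_j$ to be nondegenerate outside a Lebesgue-null set, so the absolute continuity of $\Gamma$ guarantees invertibility with probability one. The remaining care is that every ``almost surely'' is taken over $\{T_j\}$ while the convergence in distribution is taken over the samples for a fixed generic realization of $\{T_j\}$; once invertibility holds on a probability-one event of frequencies, the conditional CLT above applies on that event and yields the stated almost-sure asymptotic law.
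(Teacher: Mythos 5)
Your proposal is correct and takes essentially the same route as the paper's proof: fix a realization of $\{T_j\}_{j=1}^J$, apply the multivariate CLT and the law of large numbers, whiten through Slutsky's theorem together with continuity of $\mathbf{X}\mapsto \mathbf{X}^{-1/2}$ on positive definite matrices (the paper's Lemma \ref{lemma:semi-inv}) to obtain the $\mathcal{N}(0,\mathbf{I}_J)$ limit and hence the Naka$(\frac{1}{2},1,J)$ law under $H_0$, and under $H_1$ use analyticity of the kernel (as in Theorem \ref{th:1}) to get $\mathbf{S}\neq 0$ almost surely over $\{T_j\}$, so the statistic grows like $\sqrt{t}$. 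The one place you go beyond the paper is in flagging almost-sure invertibility of the limiting covariance, which the paper's proof leaves tacit (it is implicitly assumed when applying Slutsky and when asserting $(\mathbf{\Sigma}^{\omega})^{-1/2}$ is positive definite); just be aware that analyticity of $k$ alone does not deliver it as your sketch suggests (e.g., $\mathbf{\Sigma}=0$ for every choice of $\{T_j\}$ when $P$ is a Dirac mass), so that step genuinely requires a mild nondegeneracy assumption on $P$ --- a caveat shared by, not created relative to, the paper's own argument.
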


\textbf{Statistical test of level $\alpha$:}
Compute
$\Vert\sqrt{t}\mathbf{\Sigma}_{N_1,N_2}^{-\frac{1}{2}}\mathbf{S}_{N_1,N_2}\Vert_{1}$,
choose the threshold $\delta$ corresponding to the $(1-\alpha)$-quantile
of  Naka($\frac{1}{2},1,J$), and reject the null hypothesis
whenever $\Vert\sqrt{t}\mathbf{\Sigma}_{N_1,N_2}^{-\frac{1}{2}}\mathbf{S}_{N_1,N_2}\Vert_{1}$ is larger than $\delta$.

\subsection{Optimizing test locations to improve power}
As in \cite{test}, we can optimize the test locations $\mathcal{V}$ and
kernel parameters (jointly referred to as $\theta$) by maximizing a lower
bound on the test power which offers a simple objective function for fast
parameter tuning. We make the same regularization as in \cite{test} of the test statistic for stability of the matrix inverse, by adding a regularization parameter $\gamma_{N_1,N_2}>0$ which goes to $0$ as $t$ goes to infinity, giving $\text{L1-ME}[X,Y]
:=\|\sqrt{t}(\mathbf{\Sigma}_{N_1,N_2}+\gamma_{N_1,N_2}\mathbf{I})^{-\frac{1}{2}}\mathbf{S}_{N_1,N_2}\|_{1}$ (see proof in supp. mat. \ref{sec:improve}).
\begin{prop}
\label{prop:optpower}
Let $\mathcal{K}$ be a uniformly bounded family of  $k : \mathbb{R}^d
\times\mathbb{R}^d  \rightarrow \mathbb{R}$ measurable kernels (i.e., $\exists$ $K<\infty$ such
that $\sup \limits_{k\in \mathcal{K}} \sup \limits_{(x,y)\in
(\mathbb{R}^{d})^2} |k(x,y)| \leq K$). Let
$\mathcal{V}$ be a collection in which each element is a set of J test
locations. Assume that 
$c:= \sup \limits_{V\in \mathcal{V},k\in \mathcal{K}}\Vert\mathbf{\Sigma}^{-1/2}\Vert < \infty$. Then the test power $\mathbb{P}\left(\widehat{\lambda}_{t} \geq \delta\right)$ of the L1-ME test satisfies $\mathbb{P}\left(\widehat{\lambda}_{t} \geq \delta\right)\geq L(\lambda_{t})$ where:
\begin{align*}
L(\lambda_{t}) =&  1-2\sum
\limits_{k=1}^J\exp\left(-\left(\frac{\lambda_{t}-\delta}{J^2+J}\right)^2\frac{\gamma_{N_{\!1}\!,N_2}N_1N_2}{(N_{\!1}+N_2)^2}\right)\\
 &-2\!\sum \limits_{k,q=1}^J
\!\exp\!\left(-2\frac{\left(\frac{\gamma_{N_{\!1}\!,N_2}}{K_3J^2}\frac{\lambda_{t}-\delta}{(J^2+J)\sqrt{t}}-\frac{J^3K_2}{\sqrt{\gamma_{N_{\!1}\!,N_2}}}-J^4K_1\right)^{\!2}}{K_{\lambda}^2
(N_1 + N_2)\max\left(\frac{8}{\rho
N_1} ,\frac{8}{(1-\rho) N_ 2}\right)^{\!2}
}\right)
\end{align*}
and $K_1,K_2$, $K_3$ and $K_\lambda$, are  positive constants depending on only $K$,
$J$ and $c$. The parameter
$\lambda_{t}:=\Vert\sqrt{t}\mathbf{\Sigma}^{-\frac{1}{2}}\mathbf{S}\Vert_{1}$ is the
population counterpart of
$\widehat{\lambda}_{t}:=\|\sqrt{t}\mathbf({\Sigma}_{N_1,N_2}+\gamma_{N_1,N_2}\mathbf{I})^{-\frac{1}{2}}\mathbf{S}_{N_1,N_2}\|_{1}$ where $\mathbf{S}=\mathbb{E}_{x,y}(S_{N_1,N_2})$ and $\mathbf{\Sigma}=\mathbb{E}_{x,y}(\mathbf{\Sigma}_{N_1,N_2})$. Moreover for large $t$, $L(\lambda_{t})$ is increasing in $\lambda_{t}$.
\end{prop}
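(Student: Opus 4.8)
The plan is to bound the test power from below by showing that the empirical statistic $\widehat{\lambda}_{t}$ concentrates around its population counterpart $\lambda_{t}$, and that under $H_1$ the latter grows past the fixed threshold $\delta$. First I would observe that whenever $\lambda_{t}>\delta$ one has the event inclusion $\{|\widehat{\lambda}_{t}-\lambda_{t}|\leq\lambda_{t}-\delta\}\subseteq\{\widehat{\lambda}_{t}\geq\delta\}$, since $|\widehat{\lambda}_{t}-\lambda_{t}|\leq\lambda_{t}-\delta$ forces $\widehat{\lambda}_{t}\geq\delta$. Hence
\begin{align*}
\mathbb{P}(\widehat{\lambda}_{t}\geq\delta)\geq 1-\mathbb{P}\bigl(|\widehat{\lambda}_{t}-\lambda_{t}|>\lambda_{t}-\delta\bigr),
\end{align*}
and the whole problem reduces to a concentration estimate on the deviation $|\widehat{\lambda}_{t}-\lambda_{t}|$.

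Second, I would control that deviation. By the reverse triangle inequality for the $\ell_1$ norm, $|\widehat{\lambda}_{t}-\lambda_{t}|\leq\sqrt{t}\,\|(\mathbf{\Sigma}_{N_1,N_2}+\gamma_{N_1,N_2}\mathbf{I})^{-1/2}\mathbf{S}_{N_1,N_2}-\mathbf{\Sigma}^{-1/2}\mathbf{S}\|_{1}$, and I would split the right-hand side by inserting the hybrid term $(\mathbf{\Sigma}_{N_1,N_2}+\gamma_{N_1,N_2}\mathbf{I})^{-1/2}\mathbf{S}$, yielding a \emph{mean} part and a \emph{covariance} part. The mean part $\sqrt{t}\,(\mathbf{\Sigma}_{N_1,N_2}+\gamma_{N_1,N_2}\mathbf{I})^{-1/2}(\mathbf{S}_{N_1,N_2}-\mathbf{S})$ I would bound using $\|(\mathbf{\Sigma}_{N_1,N_2}+\gamma_{N_1,N_2}\mathbf{I})^{-1/2}\|\leq\gamma_{N_1,N_2}^{-1/2}$ together with a coordinatewise Hoeffding inequality on $\mathbf{S}_{N_1,N_2}-\mathbf{S}$, each coordinate being an average of bounded kernel evaluations over the two independent samples; squaring the threshold produces the factor $\gamma_{N_1,N_2}$ and the pooling factor $N_1N_2/(N_1+N_2)^2$ of the first family of exponentials. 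The covariance part is governed by $\|(\mathbf{\Sigma}_{N_1,N_2}+\gamma_{N_1,N_2}\mathbf{I})^{-1/2}-\mathbf{\Sigma}^{-1/2}\|$, which I would control by a perturbation bound for the regularized inverse square root: Lipschitz in the perturbation with constant of order $\gamma_{N_1,N_2}^{-3/2}$, plus a deterministic regularization bias of order $\sqrt{\gamma_{N_1,N_2}}$, followed by an entrywise Hoeffding bound on the $J^2$ entries of $\mathbf{\Sigma}_{N_1}$ and $\mathbf{\Sigma}_{N_2}$. This gives the double sum over $k,q$, with the terms $J^3K_2/\sqrt{\gamma_{N_1,N_2}}$ and $J^4K_1$ encoding respectively the Lipschitz fluctuation and the regularization bias, and the factor $\max(8/(\rho N_1),8/((1-\rho)N_2))$ the per-sample variances.

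Third, I would allocate the budget $\lambda_{t}-\delta$ between the two parts, absorb the numerical prefactors arising from passing from spectral/$\ell_2$ estimates to the $\ell_1$ statistic (these are what generate the polynomial powers $J^2,J^3,J^4$ and $J^2+J$) into the constants $K_1,K_2,K_3,K_\lambda$ depending only on $K$, $J$ and $c=\sup\|\mathbf{\Sigma}^{-1/2}\|$, and assemble the two families of exponentials into $L(\lambda_{t})$; the assumption $c<\infty$ together with the uniform boundedness of $\mathcal{K}$ is exactly what makes every constant finite and uniform over $\mathcal{V}$ and $\mathcal{K}$. For the monotonicity claim I would differentiate $L$ in $\lambda_{t}$: each subtracted term has the form $\exp(-c_1(\lambda_{t}-\delta)^2)$ or $\exp(-c_2(a\lambda_{t}-b)^2)$ with $a,c_1,c_2>0$, and is decreasing in $\lambda_{t}$ as soon as $\lambda_{t}>\delta$ (first family) or $a\lambda_{t}>b$ (second family). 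Since under $H_1$ the population parameter $\lambda_{t}$ grows like $\sqrt{t}$, for $t$ large enough $a\lambda_{t}-b>0$, so every subtracted term decreases and $L'(\lambda_{t})>0$.

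I expect the covariance part to be the main obstacle: obtaining a clean perturbation bound for $(\mathbf{\Sigma}_{N_1,N_2}+\gamma_{N_1,N_2}\mathbf{I})^{-1/2}-\mathbf{\Sigma}^{-1/2}$ that cleanly separates the stochastic fluctuation (scaling like $\gamma_{N_1,N_2}^{-3/2}\|\mathbf{\Sigma}_{N_1,N_2}-\mathbf{\Sigma}\|$) from the deterministic regularization bias (scaling like $\sqrt{\gamma_{N_1,N_2}}$), and then tracking the induced powers of $J$ through the conversion from spectral to $\ell_1$ estimates, is where the bookkeeping is delicate and where the precise form of the constants in $L(\lambda_{t})$ is pinned down.
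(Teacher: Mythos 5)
Your overall skeleton is exactly the paper's: reduce the power bound to concentration via $\mathbb{P}(\widehat{\lambda}_{t}\geq\delta)\geq 1-\mathbb{P}(|\widehat{\lambda}_{t}-\lambda_{t}|>\lambda_{t}-\delta)$ (the paper literally sets $\alpha=\lambda_{t}-\delta$ at the end), split $\widehat{\lambda}_{t}-\lambda_{t}$ with the same hybrid term into a mean part and a covariance part, handle the mean part by the entrywise bound $|[(\mathbf{\Sigma}_{N_1,N_2}+\gamma_{N_1,N_2}\mathbf{I})^{-1/2}]_{i,j}|\lesssim J/\sqrt{\gamma_{N_1,N_2}}$ plus coordinatewise Hoeffding, and allocate the budget by a union bound. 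Where you genuinely diverge is the covariance part, and there your route does not deliver the stated $L(\lambda_{t})$. The paper does \emph{not} use a perturbation bound on the regularized inverse square root with constant $\gamma_{N_1,N_2}^{-3/2}$; instead it applies McDiarmid's bounded-difference inequality directly to the entries of the matrix square root $\mathbf{\Sigma}_{N_1,N_2}^{1/2}$, after proving (via the inverse function theorem) that $\mathbf{X}\mapsto\mathbf{X}^{1/2}$ is locally Lipschitz on $\mathcal{S}_J^{++}$ with constant $K_\lambda$ on the relevant bounded set — this is the origin of the denominator $K_\lambda^2(N_1+N_2)\max\left(\frac{8}{\rho N_1},\frac{8}{(1-\rho)N_2}\right)^2$ you correctly reproduce. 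Consequently your identification of the offsets is wrong relative to the statement: in the paper $K_2$ is \emph{not} a Lipschitz fluctuation but the Jensen gap $\left(\mathbb{E}(\mathbf{\Sigma}_{N_1,N_2})\right)^{1/2}-\mathbb{E}\left(\mathbf{\Sigma}_{N_1,N_2}^{1/2}\right)$ (contracted against $(\mathbb{E}\mathbf{\Sigma}_{N_1,N_2})^{-1/2}\mathbf{S}$), which your decomposition never produces, and $K_1=\sup_k|[\mathbf{\Sigma}^{-1/2}\mathbf{S}]_k|$ enters through the regularization term $(\mathbf{\Sigma}+\gamma_{N_1,N_2}\mathbf{I})^{-1/2}-\mathbf{\Sigma}^{-1/2}$ bounded crudely entrywise, not as an $O(\sqrt{\gamma_{N_1,N_2}})$ bias (spectral calculus would in fact give $O(\gamma_{N_1,N_2})$ there). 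Your $\gamma^{-3/2}$-Lipschitz route would yield a bound with a different $\gamma$-dependence in the second exponential family, so while it is a legitimate concentration strategy — arguably cleaner, since it avoids the Jensen-gap constant and gives a vanishing bias — it proves a variant of the proposition, not the exact $L(\lambda_{t})$ stated.

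One concrete technical slip: you propose an ``entrywise Hoeffding bound'' on the entries of $\mathbf{\Sigma}_{N_1}$ and $\mathbf{\Sigma}_{N_2}$, but those entries are not averages of independent terms (each involves the centered products $(\mathbf{Z}^i_X-\overline{\mathbf{Z}}_X)(\mathbf{Z}^i_X-\overline{\mathbf{Z}}_X)^T$, coupling all samples through $\overline{\mathbf{Z}}_X$), so Hoeffding does not apply as stated; you need McDiarmid with the bounded differences $8/(\rho N_1)$ and $8/((1-\rho)N_2)$ — which is precisely what the denominator you quote presupposes. On the plus side, your differentiation argument for the monotonicity of $L(\lambda_{t})$ for large $t$ (using $\lambda_{t}\sim\sqrt{t}$ under $H_1$ to make the argument of the second exponential positive) is more explicit than the paper, which merely asserts this claim.
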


Proposition \ref{prop:optpower} suggests that it is sufficient to maximize $\lambda_t$ to maximize a lower bound on the L1-ME test power. The statistic $\lambda_{t}$ for this test depends on a set of test
locations $\mathcal{V}$ and a kernel parameter $\sigma$. We set
$\theta^{*}:=\{\mathcal{V},\sigma \}=\text{arg}\max \limits_{\theta}
\lambda_{t}=\text{arg}\max \limits_{\theta}
\Vert\sqrt{t}\,\mathbf{\Sigma}^{-\frac{1}{2}}\mathbf{S}\Vert_{1}$. As proposed in
\cite{choiceker}, we can maximize a proxy test power
to optimize $\theta$: it does not affect $H_0$ and $H_1$ as long as the data used for parameter tuning and for testing are disjoint. 
% In practice, as $\lambda_{t}$  is unknown, we use $\widehat{\lambda}_{t}^{tr}$ in place of
% $\lambda_{t}$, where $\widehat{\lambda}_{t}^{tr}$ is the test statistic computed on the training set, and  perform the actual two-sample test by 
% using the test statistic $\widehat{\lambda}_{t}^{te}(\theta)$
% computed on the test set.

\subsection{Using smooth characteristic functions (SCF)}
As the ME statistic, the SCF statistic estimates the $L^2$
distance between well chosen distribution representatives. Here, the representatives of the distributions are the convolution of their characteristic functions and the kernel $k$, assumed translation-invariant. \cite{ME} use
them to detect differences between distributions in the frequency domain. We show that the $L^1$ version (denoted $d_{L^1,\Phi}$)
is a metric on the space of Borel probability measures with integrable
characteristic functions such that if $\alpha_n$ converge weakly towards
$\alpha$, then $d_{L^1,\Phi}(\alpha_n,\alpha)\rightarrow 0$ (see
supp. mat. \ref{sec:weak_cvg_SCF}). Let us introduce the test statistics in the frequency domain respectively based on the $\ell_2$ norm and on the $\ell_1$ norm which lead to consistent tests:
\begin{align}
\widehat{d}^2_{\ell_2,\Phi,J}[X,Y]:= \Vert\sqrt{n} \mathbf{S}_n\Vert_2^2
\quad
\text{and} 
\quad
\widehat{d}_{\ell_1,\Phi,J}[X,Y]:= \Vert\sqrt{n} \mathbf{S}_n\Vert_1
\end{align}
where $\mathbf{S}_n:=\frac{1}{n}\sum_{i=1}^n \mathbf{z}_i$, 
$\mathbf{z}_i:= [f(x_i)\sin(x_i^{T} T_j) - f(y_i)\sin(y_i^{T} T_j), f(x_i)\cos(x_i^{T} T_j) - f(y_i)\cos(y_i^{T} T_j)]_{j=1}^J\in\mathbb{R}^{2J}$ and $f$ is the inverse Fourier transform of $k$. We show that, at the same level $\alpha$, using the $\ell_1$ norm in the frequency domain provides a better power with high probability (see supp. mat. \ref{sec:finalement-SCF}):
\begin{prop}
\label{prop:finalement-SCF}
Let $\alpha\in ]0,1[$, $\gamma>0$ and $J\geq 2$. Let $\{T_j\}_{j=1}^J$
sampled i.i.d. from the distribution $\Gamma$ and let $X:=\{x_i\}_{i=1}^{n}$ and 
$Y:=\{y_i\}_{i=1}^{n}$ i.i.d. samples from $P$ and $Q$ respectively. Let us denote $\delta$ the $(1-\alpha)$-quantile of the asymptotic null distribution of $\widehat{d}_{\ell_1,\Phi,J}[X,Y]$ and $\beta$ the $(1-\alpha)$-quantile of the asymptotic null distribution of $\widehat{d}_{\ell_2,\Phi,J}^2[X,Y]$. Under the alternative hypothesis, almost surely, there exists $N\geq 1$ such that for all $n\geq N$, with a probability of at least $1-\gamma$ we have:
\begin{align}
\widehat{d}^2_{\ell_2,\Phi,J}[X,Y] >\beta \Rightarrow \widehat{d}_{\ell_1,\Phi,J}[X,Y]>\delta 
\end{align}
\end{prop}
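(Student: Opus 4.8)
The plan is to exploit the structure shared with Proposition~\ref{prop:finalement}: both statistics are functions of the single $2J$-dimensional vector $\mathbf{v}_n:=\sqrt{n}\,\mathbf{S}_n$, namely $\widehat{d}^2_{\ell_2,\Phi,J}=\|\mathbf{v}_n\|_2^2$ and $\widehat{d}_{\ell_1,\Phi,J}=\|\mathbf{v}_n\|_1$. Rather than comparing the two thresholds directly (which is delicate: the pointwise bound $\|\mathbf{v}\|_1\ge\|\mathbf{v}\|_2$ only yields $\delta\ge\sqrt{\beta}$, the wrong direction for the implication), I would reduce the claim to showing that under $H_1$ the \emph{conclusion} $\{\widehat{d}_{\ell_1,\Phi,J}>\delta\}$ itself holds with probability at least $1-\gamma$ for all large $n$. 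Whenever the conclusion holds the implication $\widehat{d}^2_{\ell_2,\Phi,J}>\beta\Rightarrow\widehat{d}_{\ell_1,\Phi,J}>\delta$ is automatically true, so this suffices.

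First I would check that $\delta$ is a finite constant. Under $H_0$ the summands $\mathbf{z}_i$ are i.i.d., centered (since $\mathbb{E}_P[f(x)\sin(x^\top T_j)]=\mathbb{E}_Q[f(y)\sin(y^\top T_j)]$ when $P=Q$, and likewise for the cosine block), and bounded because $k$, hence $f$, is bounded. The multivariate CLT then gives $\mathbf{v}_n\xrightarrow{d}\mathbf{G}\sim\mathcal{N}(0,\Sigma)$, so $\widehat{d}_{\ell_1,\Phi,J}\xrightarrow{d}\|\mathbf{G}\|_1$, whose $(1-\alpha)$-quantile $\delta$ is finite (and similarly $\beta$ is the finite $(1-\alpha)$-quantile of $\|\mathbf{G}\|_2^2$).

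Next I would analyze $H_1$. Writing $\mathbf{S}^\star:=\mathbb{E}[\mathbf{z}_1]$, the strong law gives $\mathbf{S}_n\to\mathbf{S}^\star$ almost surely, so that $\widehat{d}_{\ell_1,\Phi,J}=\sqrt{n}\,\|\mathbf{S}_n\|_1\to\infty$ almost surely as soon as $\mathbf{S}^\star\neq 0$. Establishing $\mathbf{S}^\star\neq 0$ almost surely over the draw of the locations is the step genuinely specific to the SCF setting, and I expect it to be the main obstacle. The $2J$ coordinates of $\mathbf{S}^\star$ are the real and imaginary parts of $\Phi_P(T_j)-\Phi_Q(T_j)$, where $\Phi_P(T):=\mathbb{E}_P[f(x)e^{ix^\top T}]$ is the smooth characteristic function. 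The SCF metric property established earlier (supp. mat.~\ref{sec:weak_cvg_SCF}) guarantees that $P\neq Q$ forces $\Phi_P\not\equiv\Phi_Q$; since each feature $T\mapsto f(x)e^{ix^\top T}$ is entire and $f$ (the inverse Fourier transform of the bounded analytic kernel) supplies the integrability needed to differentiate under the integral, $\Phi_P-\Phi_Q$ is a nonzero analytic function on $\mathbb{R}^d$. Its zero set is therefore Lebesgue-null, hence $\Gamma$-null because $\Gamma$ is absolutely continuous, so each $T_j\sim\Gamma$ avoids it almost surely and $\mathbf{S}^\star\neq 0$. This mirrors exactly the mean-embedding/analyticity argument behind Proposition~\ref{prop:finalement}, the delicate point being to justify analyticity of $\Phi_P-\Phi_Q$ from the hypotheses on $k$ rather than the non-vanishing itself.

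Finally, conditioning on the almost-sure event that the locations yield $\mathbf{S}^\star\neq 0$, I would convert the almost-sure divergence $\widehat{d}_{\ell_1,\Phi,J}\to\infty$ into the stated uniform-in-$n$ probabilistic statement. The events $A_N:=\{\forall n\ge N:\ \widehat{d}_{\ell_1,\Phi,J}[X,Y]>\delta\}$ increase to the full-probability event $\{\widehat{d}_{\ell_1,\Phi,J}\to\infty\}$, so $\mathbb{P}(A_N)\uparrow 1$ and there exists $N$ with $\mathbb{P}(A_N)\ge 1-\gamma$. On $A_N$, for every $n\ge N$ the conclusion holds, hence so does the implication, which is the claim. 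Concretely $N$ can be made explicit through Hoeffding's inequality for the bounded $\mathbf{z}_i$: with probability at least $1-\gamma$ one has $\|\mathbf{S}_n\|_1\ge\tfrac12\|\mathbf{S}^\star\|_1$, so it suffices to take $n$ large enough that $\tfrac12\sqrt{n}\,\|\mathbf{S}^\star\|_1>\delta$.
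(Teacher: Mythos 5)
Your proposal is correct in substance, but it proves the proposition by a genuinely different route than the paper. The paper's proof is non-vacuous: its Lemma \ref{lem:better-power-scf} shows that when every one of the $2J$ coordinate-pairs of $\sqrt{n}\mathbf{S}_n$ is bounded below by $\epsilon=\sqrt{(\delta^2-\beta)/(J(J-1))}$, the cross-terms give $\|\mathbf{X}\|_1^2\geq\|\mathbf{X}\|_2^2+J(J-1)\epsilon^2$, so that $\widehat{d}^2_{\ell_2,\Phi,J}>\beta$ genuinely forces $\widehat{d}_{\ell_1,\Phi,J}>\delta$; Lemma \ref{lemma:quantile} ($\delta\geq\sqrt{\beta}$) is used only to make $\epsilon$ well defined, and the analyticity machinery (Lemmas \ref{lem:smooth-rkhs}, \ref{lemma:analytic}, \ref{lemma:1}) guarantees that \emph{all} coordinates of $\mathbf{S}$ are almost surely nonzero, so the coordinate event has probability at least $1-\gamma$ for $n\geq N$. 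You instead make the implication hold vacuously, by showing its conclusion $\{\widehat{d}_{\ell_1,\Phi,J}>\delta\}$ alone has probability at least $1-\gamma$ for large $n$; this needs only $\mathbf{S}\neq 0$ (one nonzero coordinate suffices) plus the divergence $\sqrt{n}\|\mathbf{S}_n\|_1\to\infty$, and your $A_N$-monotonicity argument (or the quantitative Hoeffding variant) is sound. The trade-off is worth stating: your argument, applied verbatim with the roles exchanged, proves the reverse implication $\widehat{d}_{\ell_1,\Phi,J}>\delta\Rightarrow\widehat{d}^2_{\ell_2,\Phi,J}>\beta$ just as well, since the $\ell_2$ statistic also diverges under $H_1$. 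So your route reveals that the proposition, read literally, already follows from consistency of the $\ell_1$ test and cannot by itself distinguish the two norms, whereas the paper's coordinate-wise argument is what actually encodes the ``dense differences'' mechanism and the $J(J-1)\epsilon^2$ gap that motivates the $\ell_1$ geometry.

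One step of yours is weaker than the paper's and should be repaired. You justify analyticity of $\Phi_P-\Phi_Q$ by differentiating $T\mapsto\mathbb{E}_P\left[f(x)e^{ix^{\top}T}\right]$ under the integral sign; boundedness of $f$ does not license this in general, since the derivatives bring down powers of $x$ and an entire (or even locally real-analytic) extension needs control of the type $\mathbb{E}_P\left[|f(x)|\,\|x\|^m\right]<\infty$ for all $m$, which the stated hypotheses on $k$ do not supply for heavy-tailed $P$ unless one first proves decay of $f$. The paper sidesteps this entirely: Lemma \ref{lem:smooth-rkhs} places $\Phi_P$ in the RKHS $H_k$, and Lemma \ref{lemma:analytic} states that every element of the RKHS of a bounded analytic kernel is analytic; the zero-set argument of Lemma \ref{lemma:1} then applies. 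Substituting that RKHS argument for your differentiation step closes the only real gap; the remaining ingredients (centered bounded summands and the CLT giving a finite $\delta$ under $H_0$, injectivity of $P\mapsto\Phi_P$ from \ref{sec:weak_cvg_SCF}, and your identification of the coordinates of $\mathbf{S}$ as the real and imaginary parts of $\Phi_P(T_j)-\Phi_Q(T_j)$) all match the paper.
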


% Let us denote $\Phi_p$ and $\Phi_q$ the smooth characteristic functions
% of $P$ and $Q$, defined as in \cite{ME}. As before,
% we define from those functions 
% a $\ell_1$-based metric on the space of probability measures with integrable
% characteristic function (see supp. mat. \ref{sec:metricscf}),
% and an associated random metric:
% \begin{equation*}
% d_{\Phi,J}=\biggl\{d_{\Phi,J}(p,q)=\frac{1}{J}\sum_{j=1}^{J}\bigl|\Phi_p(T_j)-\Phi_q(T_j)\bigr| :
% \quad
% p,q\in\mathcal{M}_{+}^{1}(\mathbb{R}^d),\Phi_p,\Phi_q\in
% L_1(\mathbb{R}^d)\biggr\}
% \end{equation*}
% Adapting theorem 1 of \cite{ME} to the $\ell_1$ case
% (see suppl. mat. \ref{sec:randscf}),
% $d_{\Phi,J}$ is a random metric.

We now adapt the construction of the $\text{L1-ME}$ test to the frequency domain to avoid computational issues of the quantiles of the asymptotic null distribution:
\begin{align}
\label{eq:SCF-test}
\text{L1-SCF}[X,Y]:=\Vert\sqrt{t}\,\mathbf{\Sigma}_{N_1,N_2}^{-\frac{1}{2}}\mathbf{S}_{N_1,N_2}\Vert_{1}
\end{align}
with $\mathbf{\Sigma}_{N_1,N_2}$, and $\mathbf{S}_{N_1,N_2}$ defined as
in the L1-ME statistic with new expression for
$\mathbf{Z}_{X}^{i}$ (and $\mathbf{Z}_{Y}^{j}$):
\begin{align*}
\mathbf{Z}_{X}^{i}=&\left(cos\left(T_{1}^Tx_i\right)f(x_i),..., sin\left(T_{J}^Tx_i\right)f(x_i)\right)\in \mathbb{R}^{2J}
\end{align*}
From this statistic,  we build a consistent test. Indeed, an analogue proof of the Proposition \ref{prop:asympme} gives that under $H_0$,  $\text{L1-SCF}[X,Y]$ is a.s. asymptotically distributed as Naka($\frac{1}{2},1,2J$), and under $H_1$, the test statistic can be arbitrarily large as t goes to infinity. Finally an analogue proof of Proposition \ref{prop:optpower} shows that we can optimize the test locations and the kernel parameter to improve the power as well.

\section{Experimental study}

\label{sec:experimental_study}

We now run empirical comparisons of our 
$\ell_1$-based tests to their
$\ell_2$ counterparts, state-of-the-art Kernel-based two-sample tests.
We study both toy and real problems.
We use the isotropic Gaussian kernel class $\mathcal{K}_g$.
We call \textbf{L1-opt-ME} and \textbf{L1-opt-SCF} the tests based
respectively on mean embeddings and smooth characteristic functions
proposed in this paper when optimizing
test locations and the Gaussian width $\sigma$ on
a separate training
set of the same size as the test set.  We denote
also \textbf{L1-grid-ME} and \textbf{L1-grid-SCF} where only the Gaussian
width is optimized by a grid search, and locations are randomly
drawn from a multivariate normal distribution. We write \textbf{ME-full}
and \textbf{SCF-full} for the tests of \cite{test}, also fully optimized according to their criteria. \textbf{MMD-quad}
(quadratic-time) and \textbf{MMD-lin} (linear-time) refer to the
MMD-based tests of \cite{MMD},
where, to ensure a fair comparison, the kernel width is also set
to maximize the test power following
\cite{choiceker}. For \textbf{MMD-quad}, as its null distribution is
an infinite sum of weighted chi-squared variables (no
closed-form quantiles), we approximate the null
distribution with 200 random permutations in each trial.

In all the following experiments, we repeat each problem 500
times. For synthetic problems, we generate new samples from the
specified $P$, $Q$ distributions in each trial. For the first real
problem (Higgs dataset), as the dataset is big enough we use new
samples  from the two distributions for each trial. For the second and
third real problem (Fast food and text datasets), samples
are split randomly into train and test sets in each trial.
In all the simulations we report an empirical estimate of the Type-I
error when $H_0$ hold and of the Type-II error when $H_1$ hold. We set
$\alpha = 0.01$. The code is available at \href{https://github.com/meyerscetbon/L1_test}{https://github.com/meyerscetbon/l1_two_sample_test}.

\textbf{How to realize $\ell_1$-based tests ?}
The asymptotic distributions of the statistics is a sum of i.i.d.
Nakagami distribution. \cite{naka} give a
closed form for the probability density function. As the formula is not
simple, we can also derive an estimate of the CDF (see supp. mat. \ref{sec: cdf}).

\textbf{Optimization} For a fair comparison between our tests and those 
of \cite{test}, we use the same initialization of
the test locations\footnote{\cite{test}:
\href{https://github.com/wittawatj/interpretable-test}{github.com/wittawatj/interpretable-test}}. For the ME-based tests, we initialize the test locations with
realizations from two multivariate normal distributions fitted to samples from $P$ and $Q$ and for the for initialization of the SCF-based tests, we use the standard normal distribution. The regularization parameter is set to $\gamma_{N_1,N_2}=10^{-5}$.
The computation costs for our proposed tests are the same as that of
\cite{test}: with $t$ samples,
optimization is
$\mathcal{O}(J^3 + dJt)$ per gradient ascent iteration
and testing $\mathcal{O}(J^3 + Jt + dJt)$ (see supp. mat. Table \ref{table-time}).

The experiments on synthetic problems mirror those of \cite{test} to make a fair comparison between the prior art and the proposed methods.

\begin{figure*}[!t]
\includegraphics[width=\textwidth]{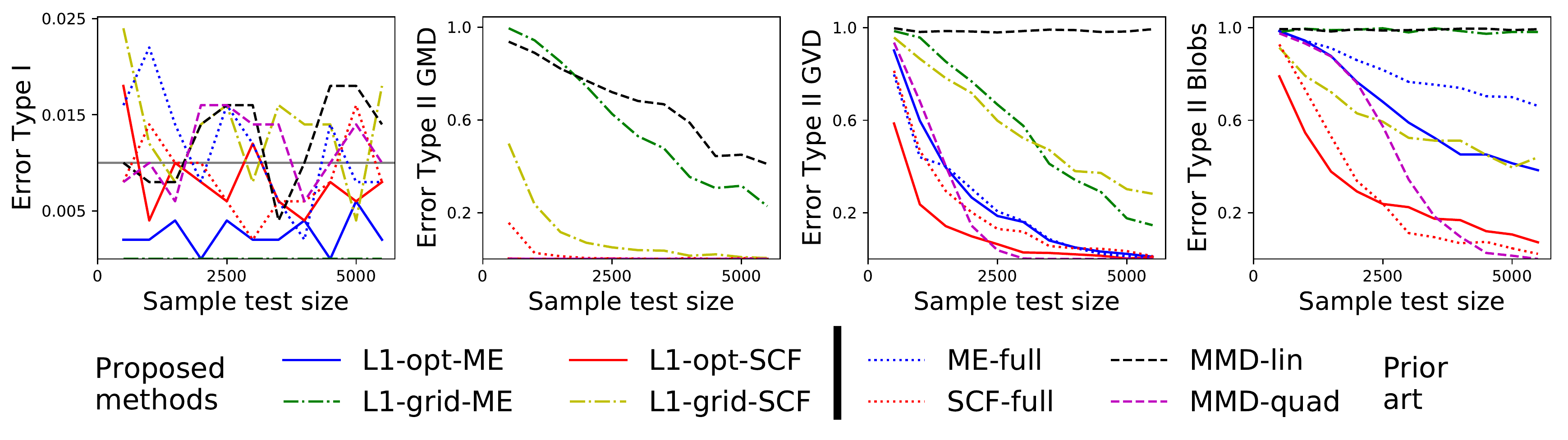}

\caption{Plots of type-I/type-II errors against the test sample
size $n^{te}$ in the four synthetic problems. \label{fig:err_vs_samples}}
\end{figure*}

\begin{wrapfigure}{r}{0.45\textwidth}\vspace*{-3.5ex}
\begin{tabular}{lcc}
Data & $P$ & $Q$ \\ 
\hline
SG & $\mathcal{N}(0,I_d)$ & $\mathcal{N}(0,I_d)$\\
GMD &$\mathcal{N}(0,I_d)$ &$\mathcal{N}((1,0,..,0)^{T},I_d)$ \\
GVD & $\mathcal{N}(0,I_d)$ & $\mathcal{N}(0,diag(2,1,..,1))$\\
Blobs & \multicolumn{2}{c}{Mixture of 16 Gaussians in $\mathbb{R}^2$ as \cite{test}}
\end{tabular}%
\medskip

\hfill%
\begin{minipage}{.7\linewidth}%
\captionof{table}{Synthetic problems. $H_0$ holds only in
SG.\label{tab:synthetic_pb}\vspace*{-.5ex}}%
\end{minipage}
\hfill\vbox{}%

\end{wrapfigure}

\textbf{Test power vs. sample size} 
We consider four synthetic problems: Same Gaussian (SG,
dim$=50$), Gaussian mean difference (GMD, dim$=100$), Gaussian variance
difference (GVD, dim$=30$), and Blobs. Table 
\ref{tab:synthetic_pb} summarizes 
the specifications of
$P$ and $Q$. In the Blobs
problem, $P$ and $Q$ are a mixture of Gaussian distributions
on a $4 \times 4$ grid in $\mathbb{R}^2$. This problem is
challenging as the difference of $P$ and $Q$ is encoded at a much smaller
length scale compared to the global structure as explained in \cite{choiceker}. We set $J = 5$ in this experiment.
\begin{figure*}[b!]
\vspace*{-3ex}\begin{minipage}{.25\linewidth}
\caption{Plots of type-I/type-II error against the dimension in
three synthetic problems: SG (Same Gaussian), GMD (Gaussian Mean
Difference), and GVD (Gaussian Variance Difference).\label{fig:err_vs_dim}}
\end{minipage}%
\hfill%
\begin{minipage}{.72\linewidth}
\includegraphics[width=\textwidth]{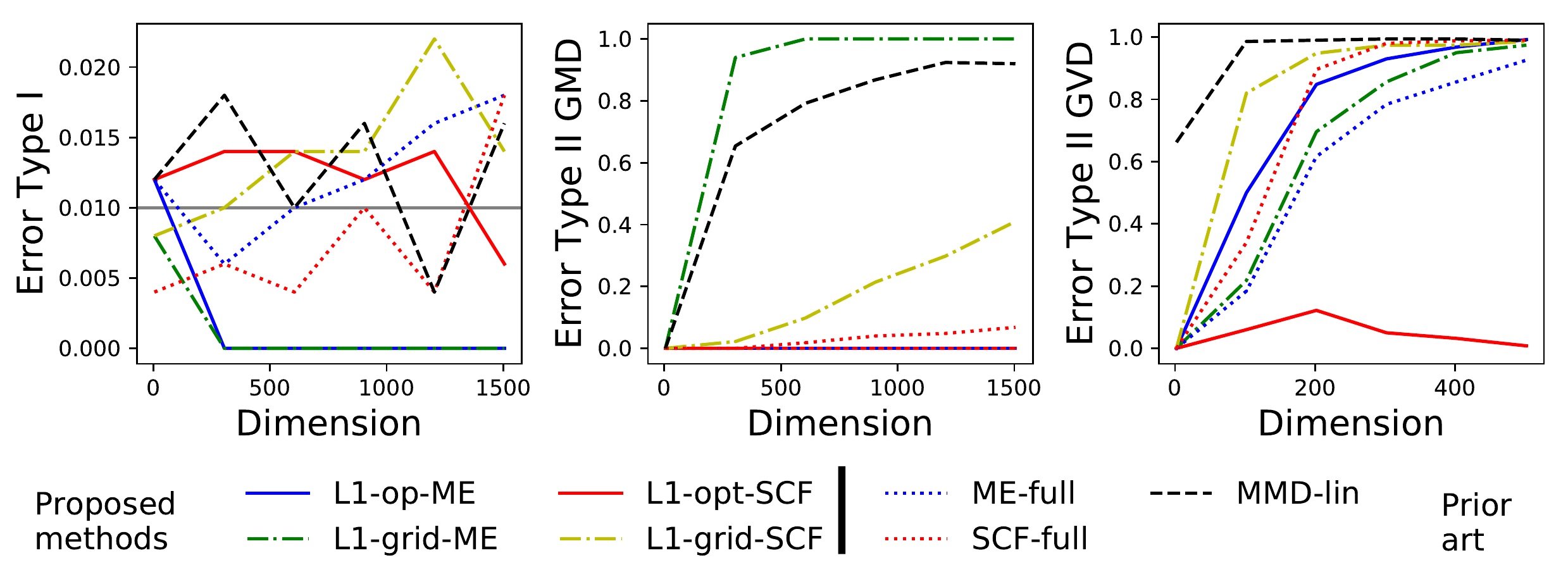}
\end{minipage}%
\end{figure*}

Figure \ref{fig:err_vs_samples} shows type-I error (for SG problem), and
test power (for GMD, GVD and Blobs problem) as a function of test sample size.  
In the SG problem, the type-I error roughly stays at the specified level
$\alpha$ for all tests except the L1-ME tests, which reject the null at a
rate below the specified level $\alpha$. Therefore, here these tests are more conservative.

GMD with 100 dimensions is an easy problem for
\textbf{L1-opt-ME}, \textbf{L1-opt-SCF}, \textbf{ME-full}
\textbf{MMD-quad}, while the \textbf{SCF-full} test requires many samples to achieve optimal test power.
In the GMD, GVD and Blobs cases, \textbf{L1-opt-ME} and
\textbf{L1-opt-SCF} achieve substantially higher test power than
\textbf{L1-grid-ME} and \textbf{L1-grid-SCF}, respectively:
optimizing the test locations brings a clear benefit.
Remarkably \textbf{L1-opt-SCF} consistently outperforms the
quadratic-time \textbf{MMD-quad} up to 2\,500 samples in the GVD case.
SCF variants perform significantly better than ME variants on the Blobs problem, as the difference in $P$ and $Q$ is localized in the frequency domain.
For the same reason, \textbf{L1-opt-SCF} does much better than the
quadratic-time MMD up to 3\,000 samples, as the latter
represents a weighted distance between characteristic functions
integrated across the frequency domain as explained in
\cite{freq}.

We also perform a more difficult GMD problem to distinguish the power of
the proposed tests with the \textbf{ME-full} as all reach maximal power.
\textbf{L1-opt-ME} then performs better than \textbf{ME-full}, its $\ell_2$
counterpart, as it needs less data to achieve good control (see mat. supp. \ref{sec:GMD}).

\textbf{Test power vs. dimension d} On fig \ref{fig:err_vs_dim}, we study how the
dimension of the problem affects type-I error and test power of our
tests. We consider the same synthetic problems: SG, GMD and GVD, we fix
the test sample size to 10000, set $J = 5$, and vary the dimension. Given
that these experiments explore large dimensions and a large number of
samples, computing the \textbf{MMD-quad} was too expensive.

In the SG problem, we observe the \textbf{L1-ME} tests are more conservative as
dimension increases, and the others tests can maintain type-I error at
roughly the specified significance level $\alpha = 0.01$. In the GMD
problem, we note that the tests proposed achieve the maximum test power
without making error of type-II  whatever the dimension is, while the
\textbf{SCF-full} loses power as dimension increases. However, this is
true only with optimization of the test locations as it is shown by the
test power of \textbf{L1-grid-ME} and \textbf{L1-grid-SCF} which drops as
dimension increases. Moreover the performance of \textbf{MMD-lin}
degrades quickly with increasing dimension, as expected from
\cite{ramdas}. Finally in the GVD problem, all tests failed to keep a
good test power as the dimension increases, except the
\textbf{L1-opt-SCF}, which has a very low type-II for all dimensions. These results echo those obtained by \cite{zhu2019interpoint}. Indeed \cite{zhu2019interpoint} study a class
of two sample test statistics based on inter-point distances and they
show benefits of using the $\ell_1$ norm over the Euclidean distance and
the Maximum Mean Discrepancy (MMD) when the dimensionality goes to
infinity.
For this class of test statistics, they characterize
asymptotic power loss w.r.t the dimension and show that the $\ell_1$ norm is
beneficial compared to the $\ell_2$ norm provided that the
summation of discrepancies between marginal univariate distributions is
large enough.

\begin{wrapfigure}{r}{0.45\textwidth}\vspace*{-3.5ex}
        \includegraphics[width=\linewidth]{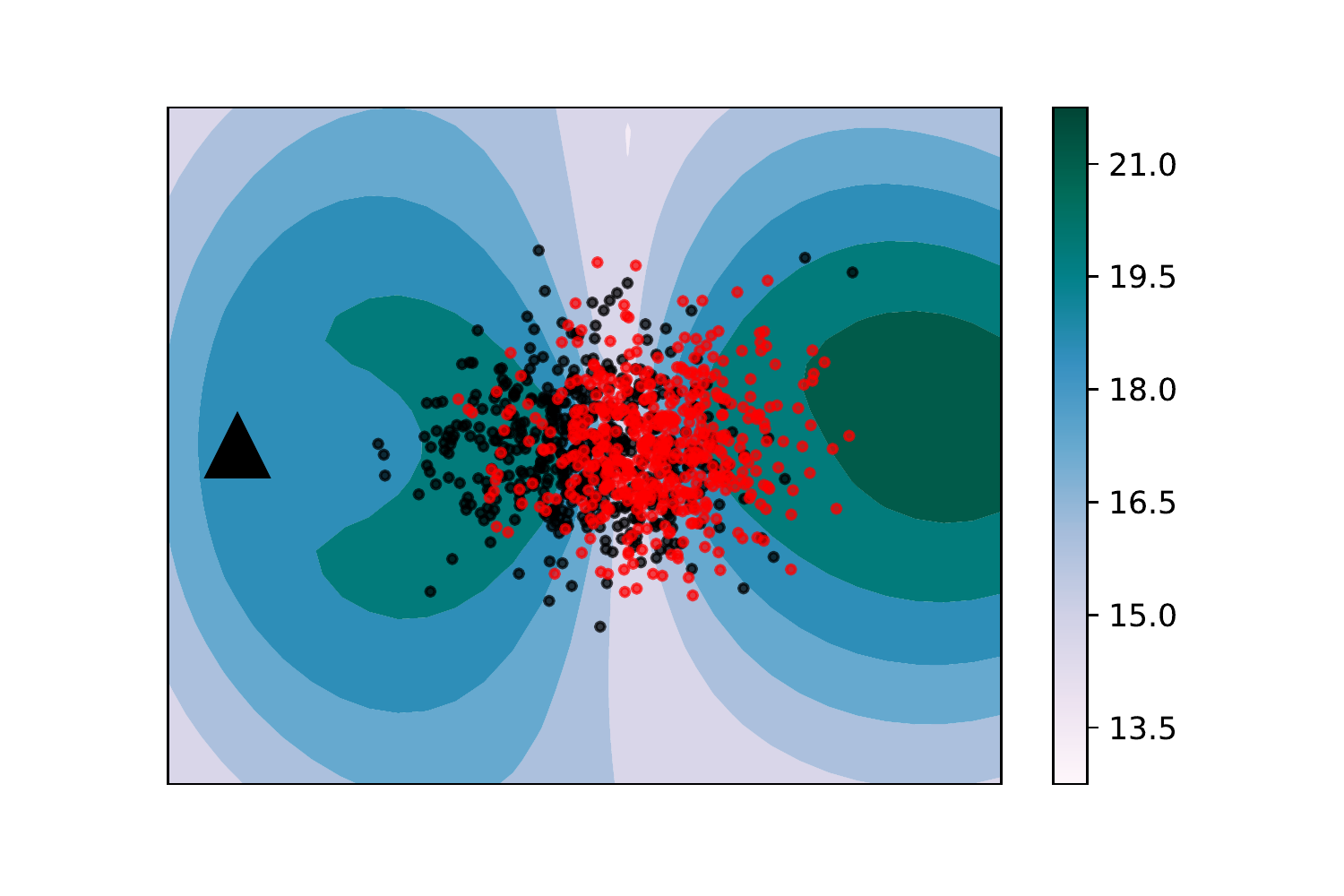}
    \caption{\textbf{Illustrating interpretable features}, replicating
    in the $\ell_1$ case the figure of \cite{test}. A contour plot of $ \widehat{\lambda}_{t}^{tr}\left(T_1,T_2\right)$ as a function of $T_2$,
    when $J=2$,
    and $T_1$ is fixed. The red and black dots represent the samples from
    the $P$ and $Q$ distributions, and the big black triangle the position of
    $T_1$ --complete figure in supp. mat. \ref{sec:info}.
    \label{info2}}
\end{wrapfigure}

\textbf{Informative features} 
\autoref{info2}
we replicate 
the experiment of \cite{test}, showing that the selected locations 
capture multiple modes in the $\ell_1$ case, as in the $\ell_2$ case.
(details in supp. mat. \ref{sec:info}). The figure shows that the objective
function $ \widehat{\lambda}_{t}^{tr}\left(T_1,T_2\right)$ used to
position the second test location $T_2$ has a maximum far from the chosen 
position for the first test location $T_1$.

\begin{figure}[b!]\vspace*{-2ex}
\begin{minipage}{.55\linewidth}
\caption{\textbf{Higgs dataset}: Plots of type-II errors against the test
sample size $n^{te}$.\label{fig:higgs}}
\includegraphics[trim={.29cm .35cm .25cm 7.35cm},clip,width=\linewidth]{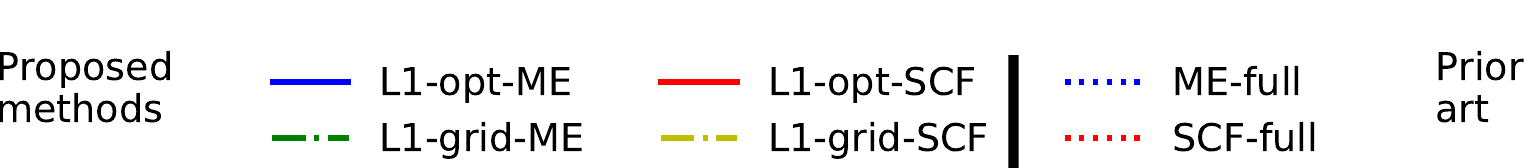}
\end{minipage}\hfill
\begin{minipage}{.4\linewidth}
\includegraphics[trim={.32cm 2.3cm 3.7cm .35cm},clip,width=\linewidth]{plot_higgs_f.pdf}
\end{minipage}
\end{figure}

\textbf{Real Data 1, Higgs}:
The first real problem is the Higgs dataset \cite{higgs_1}
described in \cite{higgs_2}: distinguishing signatures of
Higgs bosons from the background. We 
use a two-sample test on 4 derived
features as in \cite{ME}. %We denote $P$ the
%distribution of the background process (no Higgs bosons), and $Q$ the
%corresponding distribution for the Higgs bosons process.
We compare
for various sample sizes the performance of
the proposed tests with those of
\cite{test}. 
We do not study the $\textbf{MMD-quad}$ test as its computation is too
expensive with 10\,000 samples.
To make the problem harder, we only consider $J=3$
locations.
Fig.\,\ref{fig:higgs} shows a clear benefit of the optimized
$\ell_1$-based tests, in particular for SCF (\textbf{L1-opt-SCF}) compared
to its $\ell_2$ counterpart (\textbf{SCF-full}). Optimizing
the location is important, as \textbf{L1-opt-SCF} and
\textbf{L1-opt-ME} perform much better than their grid versions (which
are comparable to the tests of \cite{ME}).

\begin{table*}[b!]
\centering\small
\begin{tabular}{lccccccc}
Problem &  \hspace*{-3.5ex}\rotatebox{30}{L1-opt-ME}\hspace*{-1.7ex} &
\hspace*{-.5ex}\rotatebox{30}{L1-grid-ME}\hspace*{-1.7ex}  &
\hspace*{-.5ex}\rotatebox{30}{L1-opt-SCF}\hspace*{-1.7ex} &
\hspace*{-.5ex}\rotatebox{30}{L1-grid-SCF}\hspace*{-1.7ex} &
\hspace*{-.5ex}\rotatebox{30}{ME-full}\hspace*{-1.7ex} &
\hspace*{-.5ex}\rotatebox{30}{SCF-full}\hspace*{-1.7ex} &
\hspace*{-.5ex}\rotatebox{30}{MMD-quad}\hspace*{-1ex} \\
\hline
McDo vs Burger King (1141) &   0.112 &    0.426 &    0.428 & 0.960 &    0.170 &     0.094 &     0.184 \\
McDo vs Taco Bell (877)   &   0.554 &    0.624 &    0.710 &     0.834 &    0.684 &     0.638 &     0.666 \\
McDo vs Wendy's (733)   &   0.156 &    0.246 &    0.752 &     0.942 &    0.416 &     0.624 &     0.208 \\
McDo vs Arby's (517)  &   0.000 &    0.004 &    0.006 &     0.468 &    0.004 &     0.012 &     0.004 \\
McDo vs KFC   (429)    &   0.912 &    0.990 &    1.00 &     0.998 &    0.996 &     0.856 &     0.980 \\
\end{tabular}
\caption{\textbf{Fast food dataset:} Type-II errors for
distinguishing the distribution of fast food restaurants. $\alpha= 0.01$.
$J = 3$. The number in brackets denotes the sample size of the
distribution on the right. We consider MMD-quad as the gold standard.}
\label{Tab:exp3result}
\end{table*}
\textbf{Real Data 2, Fastfood:} We use a Kaggle dataset
listing locations of
over 10,000 fast food restaurants across America\footnote{\href{https://www.kaggle.com/datafiniti/fast-food-restaurants}{www.kaggle.com/datafiniti/fast-food-restaurants}}. We consider the 6 most
frequent brands in mainland USA: Mc Donald's, Burger King, Taco Bell,
Wendy's, Arby's and KFC. We benchmark the various two-sample tests to
test whether the spatial distribution
(in $\mathbb{R}^2$) of restaurants differs across brand. This is a non
trivial question, as it depends on marketing strategy of the
brand. We compare the distribution of Mc Donald's restaurants with
others. We also compare the distribution of Mc Donald's restaurants with itself to evaluate the level of the tests (see   supp. mat. Table \ref{Tab:exp3result-level}).
The number of samples differ across the
distributions; hence to perform the tests from \cite{test}, we
randomly subsample the largest distribution. We use 
$J=3$ as the number of locations. 

Table \ref{Tab:exp3result} summarizes type-II errors of the tests. Note
that it is not clear that distributions must differ, as
two brands sometimes compete directly, and target similar
locations. 
We consider the \textbf{MMD-quad} as the gold standard to decide whether distributions differ or not.
The three cases for which there seems to be a difference are
Mc Donald's vs Burger King, Mc Donald's vs Wendy's, and Mc Donalds vs
Arby's.
Overall, we find that the optimized \textbf{L1-opt-ME} agrees best with this gold
standard. 
The Mc Donald's vs Arby's problem seems to be an easy problem, as
all tests reach a maximal power, except for the
\textbf{L1-grid-SCF} test which shows the gain of power brought by the optimization.
In the Mc Donald's vs Wendy's problem the \textbf{L1-opt-ME} test outperforms the $\ell_2$ tests and even the quadratic-time MMD. Finally, all the tests fail to discriminate Mc Donald's vs KFC. The data
provide no evidence that these brands pursue different strategies to
chose locations. 

In the Mc Donald's vs Burger King and Mc Donald's
vs Wendy's problems, the optimized version of the test proposed based on
mean embedding outperform the grid version. This
success implies that the locations learned are each informative, and we
plot them (see supp. mat. Figure \ref{fig:usa}), to 
investigate the interpretability of the \textbf{L1-opt-ME} test.
The figure shows that the procedure narrows on specific regions
of the USA to find differences between distributions of restaurants.

\textbf{Real Data 3, text:} For a
high-dimension problem, we consider the problem of
distinguishing the newsgroups text dataset \cite{Lang95} (details in
supp. Mat. \ref{sec:20group}). Compared to their $\ell_2$ counterpart,
$\ell_1$-optimized tests bring clear benefits and separate all topics of
articles based on their word distribution.

\textbf{Discussion:}
Our theoretical results suggest it is always beneficial for statistical
power to build tests on $\ell_1$ norms rather than $\ell_2$ norm of
differences between kernel distribution representatives (Propositions
\ref{prop:finalement}, \ref{prop:finalement-SCF}). In practice, however,
optimizing test locations with $\ell_1$-norm tests leads to
non-smooth objective functions that are harder to optimize. Our
experiments confirm the theoretical benefit of the $\ell_1$-based
framework. The benefit is particularly pronounced for a large number $J$
of test locations --as the difference between $\ell_1$ and $\ell_2$ norms
increases with dimension (see in supp. mat. Lemmas \ref{lem:betterpow}, \ref{lem:better-power-scf})-- as well as for large dimension of the native
space (\autoref{fig:err_vs_dim}). The benefit of $\ell_1$ distances for
two-sample testing in high dimension has also been reported by
\cite{zhu2019interpoint}, though their framework does not
link to kernel embeddings or to the convergence of probability measures. Further work should consider extending these results to  goodness-of-fit testing, where the $L^1$ geometry was shown empirically to provide excellent performance \cite{huggins2018random}.

\section{Conclusion}
In this paper, we show that statistics derived from the $L^p$
distances between well-chosen distribution representatives are well suited for the two-sample problem as these distances metrize the weak convergence (Theorem \ref{thm:metricl1}). We then compare the power of tests introduced in \cite{ME} and their $\ell_1$ counterparts and we show that $\ell_1$-based statistics have better power with high probability (Propositions \ref{prop:finalement}, \ref{prop:finalement-SCF}). As with state-of-the-art
Euclidean approaches, the framework leads to
tractable computations and learns interpretable locations of where the
distributions differ.
Empirically, on all 4 synthetic and 3 real problems investigated,
the $\ell_1$ geometry gives clear benefits compared to the Euclidean 
geometry. The $L^1$ distance is known to be well suited for densities, to control differences or estimation \cite{devroye1985nonparametric}. It
is also beneficial for kernel embeddings of distributions.

\paragraph*{Acknowledgments}

This work was funded by the DirtyDATA ANR grant (ANR-17-CE23-0018).
We also would like to thank Zoltán Szabó from École Polytechnique for
crucial suggestions, and acknowledge hardware donations from NVIDIA
Corporation.

\bibliography{biblio}

\iftoggle{supplementary}{
\clearpage
\appendix

\onecolumn

\bigskip

% Make the "Part I" text invisible
\renewcommand \thepart{}
\renewcommand \partname{}

\part{Supplementary materials} % Start the appendix part
\parttoc % Insert the appendix TOC

\section{A family of metrics that metrize of the weak convergence}
\subsection{Distances between Mean Embeddings}
\label{sec:weak_cvg_ME}
\begin{theorem}
Given $p\geq 1$, $k$ a definite positive, characteristic, continuous, and bounded
kernel on $\mathbb{R}^d$,
$\mu_P$ and $\mu_Q$ the mean embeddings of the Borel probability measures $P$ and
$Q$ respectively, 
the function defined on $\mathcal{M}_{+}^{1}(\mathbb{R}^d)\times \mathcal{M}_{+}^{1}(\mathbb{R}^d)$:
\begin{equation}
d_{L^p,\mu}(P,Q):=\left(\int_{t\in\mathbb{R}^d }\biggl|\mu_P(t)-\mu_Q(t)\biggr|^p d\Gamma(t)\right)^{1/p}
\end{equation}
is a metric on the space of Borel probability measures, for $\Gamma$ a Borel probability measure absolutely continuous with
respect to Lebesgue measure. Moreover a sequence $(\alpha_n)_{n\geq 0}$ of Borel probability measures converges weakly towards $\alpha$ if and only if  $d_{L^p,\mu}(\alpha_n,\alpha)\rightarrow 0$.
\end{theorem}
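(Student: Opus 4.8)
The plan is to read $d_{L^p,\mu}(P,Q)$ as the $L^p(\Gamma)$ seminorm of the function $t\mapsto \mu_P(t)-\mu_Q(t)$ and to verify the metric axioms from this viewpoint. Finiteness is immediate since $k$ is bounded, so $|\mu_P-\mu_Q|\le 2\sup|k|$ while $\Gamma$ is a probability measure; symmetry is clear and the triangle inequality is exactly Minkowski's inequality in $L^p(\Gamma)$. The only delicate axiom is identity of indiscernibles. If $d_{L^p,\mu}(P,Q)=0$ then $\mu_P=\mu_Q$ holds $\Gamma$-almost everywhere. Because $k$ is continuous and bounded, dominated convergence makes $\mu_P-\mu_Q$ a continuous function, and a continuous function vanishing $\Gamma$-a.e. must vanish on the support of $\Gamma$, which is all of $\mathbb{R}^d$ since $\Gamma$ is absolutely continuous with full-support density. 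Hence $\mu_P\equiv\mu_Q$, and as $k$ is characteristic this forces $P=Q$. Note that this step is independent of $p$, since the vanishing set of $\mu_P-\mu_Q$ does not depend on $p$.

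Next I would record a reduction that makes $p$ irrelevant for the convergence statement. On the probability space $(\mathbb{R}^d,\Gamma)$ the integrands $|\mu_{\alpha_n}-\mu_\alpha|^p$ are uniformly bounded by $(2\sup|k|)^p$, so $\|\mu_{\alpha_n}-\mu_\alpha\|_{L^p(\Gamma)}\to 0$ if and only if $\|\mu_{\alpha_n}-\mu_\alpha\|_{L^1(\Gamma)}\to 0$, using $\|g\|_1\le\|g\|_p\le (2\sup|k|)^{1-1/p}\|g\|_1^{1/p}$. Thus the equivalence is the same for every $p\ge 1$ and it suffices to prove it for $p=2$, where one has the useful reformulation $d_{L^2,\mu}(P,Q)=\mathrm{MMD}_{\tilde k}(P,Q)$ with $\tilde k(x,y):=\int_{\mathbb{R}^d}k(x,t)k(y,t)\,d\Gamma(t)=\langle k(x,\cdot),k(y,\cdot)\rangle_{L^2(\Gamma)}$, a bounded continuous kernel that is characteristic by the argument above.

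The forward implication is then straightforward. If $\alpha_n\to\alpha$ weakly, then for each fixed $t$ the map $k(\cdot,t)$ is bounded and continuous, so $\mu_{\alpha_n}(t)\to\mu_\alpha(t)$ pointwise; dominated convergence with the bound $(2\sup|k|)^p$ yields $d_{L^p,\mu}(\alpha_n,\alpha)\to 0$.

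The reverse implication is the main obstacle, precisely because $\mathbb{R}^d$ is not compact and mass could a priori escape to infinity. My plan is the standard tightness route: show $(\alpha_n)$ is tight, invoke Prokhorov for relative compactness, and identify the limit. If $\beta$ is the weak limit of a subsequence $\alpha_{n_k}$, the forward implication gives $d_{L^p,\mu}(\alpha_{n_k},\beta)\to 0$, so the triangle inequality forces $d_{L^p,\mu}(\alpha,\beta)=0$, whence $\beta=\alpha$ by identity of indiscernibles; since every subsequence then has a further subsequence converging weakly to $\alpha$, the whole sequence does. The real difficulty is tightness, and here the $\tilde k$ reformulation helps: when $\tilde k$ vanishes at infinity (as for $C_0$ kernels such as the Gaussian, since $\tilde k(x,x)=\int k(x,t)^2\,d\Gamma(t)\to 0$ as $\|x\|\to\infty$), $\mathrm{MMD}_{\tilde k}(\alpha_n,\alpha)\to 0$ yields weak-$*$ convergence against $C_0(\mathbb{R}^d)$, and because the limit $\alpha$ is a probability measure of full mass no mass is lost, so weak-$*$ convergence coincides with weak convergence. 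In the general case I would instead argue tightness by contradiction: a subsequence along which a fixed fraction of mass escapes to infinity leaves a persistent, non-vanishing residual in $\|\mu_{\alpha_n}-\mu_\alpha\|_{L^2(\Gamma)}$, since $\Gamma$ charges every open set and $k$ is characteristic, contradicting $d_{L^2,\mu}(\alpha_n,\alpha)\to 0$. Bounding this residual uniformly is the technical heart of the proof.
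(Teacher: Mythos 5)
Your metric axioms, the reduction across exponents $p$ via $\|g\|_1\le\|g\|_p\le\|g\|_\infty^{1-1/p}\|g\|_1^{1/p}$, and the forward implication (pointwise convergence of $\mu_{\alpha_n}(t)\to\mu_\alpha(t)$ for each fixed $t$ plus dominated convergence) are all correct, and the forward argument is in fact cleaner and more airtight than the paper's, which asserts this direction from an IPM formulation. The identity $d_{L^2,\mu}=\mathrm{MMD}_{\tilde k}$ with $\tilde k(x,y)=\langle k(x,\cdot),k(y,\cdot)\rangle_{L^2(\Gamma)}$ is a nice observation not used in the paper. (One shared caveat: both you and the paper need $\Gamma$ to charge every open ball in the identity-of-indiscernibles step; absolute continuity with respect to Lebesgue measure alone does not give a full-support density, so your parenthetical ``full-support'' is an added, not stated, hypothesis --- the paper makes the same implicit assumption.)

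The genuine gap is in the reverse implication, and you name it yourself: tightness, ``the technical heart,'' is never proved. Your plan establishes it only when $\tilde k$ vanishes at infinity; for general bounded continuous characteristic kernels the contradiction sketch (``a fixed fraction of escaping mass leaves a persistent residual in $\|\mu_{\alpha_n}-\mu_\alpha\|_{L^2(\Gamma)}$'') does not go through as stated. Characteristicness constrains embeddings of probability measures, not limits of the feature vectors $k(x_m,\cdot)$ as $\|x_m\|\to\infty$; if those feature vectors converge in $L^2(\Gamma)$ to a function matching the embedding of the escaped deficit, the residual you invoke tends to zero, and ruling this out requires control of the kernel at infinity --- which is precisely why $c_0$-type conditions appear in the metrization literature. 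The paper takes a different route that bypasses tightness entirely: it proves an IPM representation $d_{L^1,\mu}(P,Q)=\sup_{f\in\mathcal{T}_k(B_{\infty}^{d\Gamma})}\bigl(\mathbb{E}_P[f]-\mathbb{E}_Q[f]\bigr)$, expands along the Mercer eigenbasis $(\lambda_m^{1/2}e_m)_{m\geq 0}$ of the integral operator $T_k$ on $L_2^{d\Gamma}(\mathbb{R}^d)$ to upgrade $d_{L^1,\mu}(\alpha_n,\alpha)\to 0$ into $\langle\mu_{\alpha_n}-\mu_\alpha,f\rangle_{H_k}\to 0$ for every $f\in H_k$, concludes $\mathrm{MMD}[\alpha_n,\alpha]\to 0$, and then delegates the compactness issue to the cited theorem of \cite{weak-cvg} that bounded continuous characteristic kernels on locally compact Hausdorff spaces metrize weak convergence. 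To close your proof you would have to either quantify the residual uniformly (which needs hypotheses beyond those in the statement) or, like the paper, reduce to $\mathrm{MMD}$ and invoke such a metrization theorem rather than proving tightness by hand.
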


\begin{prv}
First, let us prove that for any $p\geq 1$ $d_{L^p,\mu}$ is metric of on the space of Borel probability measures. Let $p\geq 1$, we have:
$$|\mu_P\left(t\right)-\mu_Q\left(t\right)|^p=|\langle \mu_P-\mu_Q,k_t\rangle|^p$$
Therefore:
$$|\mu_P\left(t\right)-\mu_Q\left(t\right)|^p\leq ||\mu_P-\mu_Q||_H^p(k\left(t,t\right))^{p/2}$$
But as $k$ is bounded, and $\Gamma$ is finite, $d_{L^p,\mu}$ is well defined on $\mathcal{M}_{+}^{1}\left(\mathcal{X}\right)^2$.
Let us prove now that if $P\neq Q$ then $d_{L^p,\mu}\left(P,Q\right)>0$.
\begin{definition}\cite{fukumizu2008kernel}
A kernel is characteristic if the mapping $P\in \mathcal{M}_{+}^{1}\left(\mathcal{X}\right)\rightarrow \mu_P\in H_k$ is injective, where $H_k$ is the RKHS associated with $k$.
\end{definition}
\begin{lemma}\cite{steinwart2001influence}
\label{lemma:continue-sten}
If k is a continuous kernel on a metric space then every feature maps associated with the kernel are continuous.
\end{lemma}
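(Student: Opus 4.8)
The plan is to prove continuity of the feature map directly from the defining identity $k(x,y)=\langle \Phi(x),\Phi(y)\rangle_{H}$, reducing everything to the joint continuity of $k$ on the product space $\mathcal{X}\times\mathcal{X}$. I would work with an arbitrary feature map $\Phi:\mathcal{X}\to H$ associated with $k$; the argument applies verbatim to the canonical feature map $x\mapsto k_x:=k(x,\cdot)$ into the RKHS $H_k$, for which $\langle k_x,k_y\rangle_{H_k}=k(x,y)$ by the reproducing property.

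First I would expand the squared distance in feature space. For any $x,x'\in\mathcal{X}$, bilinearity and symmetry of the inner product give
\[
\|\Phi(x)-\Phi(x')\|_H^2=\langle \Phi(x),\Phi(x)\rangle-2\langle \Phi(x),\Phi(x')\rangle+\langle \Phi(x'),\Phi(x')\rangle=k(x,x)-2k(x,x')+k(x',x').
\]
This is the only identity needed, and it holds for every feature map since it uses only the relation $k(\cdot,\cdot)=\langle \Phi(\cdot),\Phi(\cdot)\rangle$.

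Next, since $\mathcal{X}$ is a metric space I would verify continuity sequentially: fix $x$ and take any sequence $x_n\to x$. Then in the product space $\mathcal{X}\times\mathcal{X}$ both $(x_n,x_n)\to(x,x)$ and $(x_n,x)\to(x,x)$. Applying the continuity of $k$ to the three terms of the identity above, the right-hand side converges to $k(x,x)-2k(x,x)+k(x,x)=0$. Hence $\|\Phi(x_n)-\Phi(x)\|_H\to 0$, i.e.\ $\Phi(x_n)\to\Phi(x)$ in $H$, which establishes continuity of $\Phi$ at $x$; as $x$ was arbitrary, $\Phi$ is continuous on $\mathcal{X}$.

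The argument is essentially immediate once the distance identity is in place, so there is no serious obstacle. The only point deserving care is that the \emph{diagonal} term $k(x_n,x_n)$ must be controlled, which is exactly why the joint continuity of $k$ on $\mathcal{X}\times\mathcal{X}$ (rather than mere separate continuity in each argument) is the hypothesis used; I would make this reliance explicit in the write-up.
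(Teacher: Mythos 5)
Your proof is correct: the identity $\|\Phi(x)-\Phi(x')\|_H^2=k(x,x)-2k(x,x')+k(x',x')$ holds for \emph{any} feature map of $k$, and joint continuity of $k$ makes the right-hand side vanish along $x_n\to x$, with sequential continuity sufficing since $\mathcal{X}$ is metric. The paper itself gives no proof of this lemma --- it cites \cite{steinwart2001influence} --- and your argument is exactly the standard one from that reference, including the correct observation that joint (not merely separate) continuity is what controls the diagonal term $k(x_n,x_n)$.
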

Let $P$ and $Q$ two Borel distributions such that $P\neq Q$.
Since the mapping $p\rightarrow \mu_P$ is injective, there must exists at least one point $o$ where $\mu_P-\mu_Q$ is non-zero. By continuity of $\mu_P-\mu_Q$, there exists a ball around $o$ in which  $\mu_P-\mu_Q$ is non-zero. Then $d_{L^p,\mu}\left(P,Q\right)>0$. Finally all the other proprieties of a metric are clearly verified by this function.

Let us now show that $d_{L^1,\mu}$ metrize the weak convergence. For that purpose, we first show that this metric has an IPM formulation:
\begin{lemma}l
\label{lemma:ipm}
We denote by $\mathcal{T}_{k}$ the integral operator on $L_{2}^{d\Gamma}(\mathbb{R}^d)$ associated to the positive definite, characteristic, continuous, and bounded kernel $k$ defined as:
$$\begin{array}{ccccc}
\mathcal{T}_{k} & : &  L_{2}^{d\Gamma}(\mathbb{R}^d) & \to &  L_{2}^{d\Gamma}(\mathbb{R}^d)\\
 & & f &\to & \int_{\mathbb{R}^d} k(x,.)f(x)d\Gamma(x)
\end{array}$$
By denoting $B_{\infty}^{d\Gamma}$ the unit ball of $L_{\infty}^{d\Gamma}(\mathbb{R}^d)$, we have that:
\begin{align*}
d_{L^1,\mu}&(P,Q)=\displaystyle\sup_{f\in \mathcal{T}_k(B_{\infty}^{d\Gamma})}
\biggl(\mathbb{E}_{P}[f(X)]-\mathbb{E}_{Q}[f(Y)]\biggr)
\end{align*}
\end{lemma}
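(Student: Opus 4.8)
The plan is to derive this identity from the standard $L^1$--$L^\infty$ duality applied to the (continuous, bounded) function $g := \mu_P - \mu_Q$, and then to use Fubini's theorem to re-express the resulting pairing as a difference of expectations. The whole argument is essentially a change of variables between a test function $h$ living in the unit ball of $L_\infty^{d\Gamma}$ and its image $f = \mathcal{T}_k h$.

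First I would check that the objects are well defined. Since $k$ is bounded, both $\mu_P$ and $\mu_Q$ are bounded, so $g \in L_\infty^{d\Gamma}(\mathbb{R}^d) \subset L_1^{d\Gamma}(\mathbb{R}^d)$ because $\Gamma$ is a probability measure; hence $d_{L^1,\mu}(P,Q) = \int_{\mathbb{R}^d} |g(t)|\, d\Gamma(t)$ is finite. The duality between $L^1$ and $L^\infty$ then gives
\[
\int_{\mathbb{R}^d} |g(t)|\, d\Gamma(t) = \sup_{h \in B_\infty^{d\Gamma}} \int_{\mathbb{R}^d} g(t)\, h(t)\, d\Gamma(t),
\]
the supremum being attained at $h = \mathrm{sign}(g)$, which is measurable and bounded by $1$, hence lies in $B_\infty^{d\Gamma}$.

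Next I would unfold $g$ and interchange the order of integration. Writing $g(t) = \int k(x,t)\, dP(x) - \int k(y,t)\, dQ(y)$, the integrand $(x,t) \mapsto k(x,t)\, h(t)$ is dominated by $\|k\|_\infty$ on $\mathbb{R}^d \times \mathbb{R}^d$ since $\|h\|_\infty \le 1$, and $P$, $Q$, $\Gamma$ are all finite measures, so Fubini's theorem applies. Using the symmetry $k(x,t) = k(t,x)$ of the positive definite kernel, the inner integral $\int k(x,t)\, h(t)\, d\Gamma(t)$ is exactly $(\mathcal{T}_k h)(x)$, whence
\[
\int_{\mathbb{R}^d} g(t)\, h(t)\, d\Gamma(t) = \mathbb{E}_{P}[(\mathcal{T}_k h)(X)] - \mathbb{E}_{Q}[(\mathcal{T}_k h)(Y)].
\]
Taking the supremum over $h \in B_\infty^{d\Gamma}$ and reparametrizing by $f := \mathcal{T}_k h$ --- so that $f$ ranges over $\mathcal{T}_k(B_\infty^{d\Gamma})$ precisely as $h$ ranges over $B_\infty^{d\Gamma}$ --- yields the claimed IPM formula.

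I do not expect a genuine obstacle here: the integrability hypotheses for Fubini and the $L^1$ membership of $g$ follow immediately from the boundedness of $k$ and the finiteness of $\Gamma$, $P$, $Q$, and the duality step is classical. The one point deserving care is to keep the kernel symmetry explicit, so that the inner integral produced by Fubini matches the definition of $\mathcal{T}_k$ exactly; and to note that the two suprema coincide by the very definition of the image set $\mathcal{T}_k(B_\infty^{d\Gamma})$, so no closure or density argument is required.
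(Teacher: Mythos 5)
Your proof is correct, but it takes a leaner route than the paper's. The paper works entirely inside the RKHS: it decomposes the integral according to the sign of $\mu_P-\mu_Q$, pulls the $\Gamma$-integral inside the inner product to build the explicit witness $f=\int g_t\,d\Gamma(t)$ with $g_t=\pm k_t$ (an $H_k$-valued integral), identifies $\mathbb{E}_P[f(X)]-\mathbb{E}_Q[f(Y)]=\langle \mu_P-\mu_Q,f\rangle_{H_k}$, and then proves the reverse inequality separately for an arbitrary $f=\mathcal{T}_k(g)$, $g\in B_\infty^{d\Gamma}$, by interchanging the inner product and the integral and bounding $|g|\le 1$. You instead package both inequalities into the classical $L^1$--$L^\infty$ duality $\int|g|\,d\Gamma=\sup_{h\in B_\infty^{d\Gamma}}\int gh\,d\Gamma$ (your sign witness is the same function the paper constructs, just not pushed into $H_k$) and replace all inner-product manipulations with a single Fubini interchange, which is fully justified by the boundedness of $k$ and the finiteness of $P$, $Q$, $\Gamma$; your explicit appeal to the symmetry $k(x,t)=k(t,x)$ to match $\mathcal{T}_k$ is also a point the paper leaves silent. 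What your approach buys is rigor and economy for the lemma itself: the paper's interchange of the Bochner integral with the $H_k$ inner product is asserted without justification, whereas your Fubini step is airtight, and you never need the reproducing property. What the paper's framing buys is that it exhibits $\mathcal{T}_k(B_\infty^{d\Gamma})$ as a subset of $H_k$ consisting of continuous functions, a fact the surrounding proof of Theorem \ref{thm:metricl1} actually uses downstream (both to get weak convergence $\Rightarrow$ $d_{L^1,\mu}$-convergence and in the Mercer-basis argument pairing $\mu_{\alpha_n}-\mu_\alpha$ against $\mathcal{T}_k(V_m)$); if you took your route, that inclusion would have to be recorded separately. One minor point you handle implicitly but correctly: $\mathbb{E}_P[f(X)]$ requires a pointwise-defined representative of $f$, and your Fubini derivation produces exactly the pointwise integral $(\mathcal{T}_k h)(x)=\int k(t,x)h(t)\,d\Gamma(t)$, which is bounded and continuous, so the expectations are well defined.
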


\begin{prv}
We have:
\begin{align*}
d_{L^1,\mu}\left(P,Q\right)=&\int_{x\in \mathbb{R}^d}|\mu_P\left(x\right)-\mu_Q\left(x\right)|d\Gamma(x)\\
=&\int_{x\in \mathbb{R}^d}|\langle\mu_P-\mu_Q,k_x\rangle_H|d\Gamma(x)\\
=&\int_{x\in\{v:\mu_P\left(v\right)\geq\mu_Q\left(v\right)\}}\langle\mu_P-\mu_Q,k_x\rangle_H d\Gamma(x)
-\int_{x\in\{v:\mu_P\left(v\right)<\mu_Q\left(v\right)\}}\langle\mu_P-\mu_Q,k_x\rangle_H d\Gamma(x)\\
=&\langle\mu_P-\mu_Q,\int_{x\in\{v:\mu_P\left(v\right)\geq\mu_Q\left(v\right)\}}k_x d\Gamma(x)
-\int_{x\in\{v:\mu_P\left(v\right)<\mu_Q\left(v\right)\}}k_x d\Gamma(x)\rangle_H
\end{align*}
Then:
$$d_{L^1,\mu}\left(P,Q\right)=\langle\mu_P-\mu_Q,f\rangle_H$$
with
\begin{align}
f=\int_{t\in \mathbb{R}^d}g_t d\Gamma(t)\quad\text{where}\quad
g_t=\left\{
              \begin{array}{ll}
              k_t \text{ if  } t\in\{x: \mu_P\left(x\right)\geq\mu_Q\left(x\right)\}\\
                \ -k_t  \text{ otherwise.}
                \end{array}\right.
\end{align}
Therefore, $f \in T_k(B_{\infty}^{d\Gamma})\subset H_k$ and we have:
\begin{align*}
d_{L^1,\mu}\left(P,Q\right)=\mathbb{E}_{P}\left(f\left(X\right)\right)-\mathbb{E}_{Q}\left(f\left(Y\right)\right)\\
\end{align*}
\medbreak
Now, let f be an element of $\mathcal{T}_k(B_{\infty}^{d\Gamma})\subset H_k$.
Therefore there exists $g\in B_{\infty}^{d\Gamma}$ such that $f=\mathcal{T}_k(g)$ and we have then:
\begin{align*}
\mathbb{E}_{P}\left(f\left(X\right)\right)-\mathbb{E}_{Q}\left(f\left(Y\right)\right)=&\langle \mu_P-\mu_Q,f\rangle\\
=&\langle \mu_P-\mu_Q,\int_{t\in \mathbb{R}^d}g(t) k_t d\Gamma(t)\rangle\\
=&\int_{t\in \mathbb{R}^d}g(t) \langle\mu_P-\mu_Q,k_t\rangle d\Gamma(t)\\
=&\int_{t\in\mathbb{R}^d}g(t) \left(\mu_P\left(t\right)-\mu_Q\left(t\right)\right) d\Gamma(t)\\
\leq & \int_{t\in \mathbb{R}^d}|\mu_P\left(t\right)-\mu_Q\left(t\right)|d\Gamma(t)
\end{align*}
Therefore we have:
\begin{align*}
d_{L^1,\mu}&(P,Q)=\displaystyle\sup_{f\in \mathcal{T}_k(B_{\infty}^{d\Gamma})}
\biggl(\mathbb{E}_{P}[f(X)]-\mathbb{E}_{Q}[f(Y)]\biggr)
\end{align*}
\end{prv}
From this IPM formulation we now show that $d_{L^1,\mu}$ metrize the weak convergence.
First, as the kernel $k$ is assumed to be continuous, then $\mathcal{T}_k(B_{\infty}^{d\Gamma})\subset H_k\subset C^0(\mathbb{R}^d)$, the set of continuous functions. Therefore, thanks to the IPM formulation of the metric, the weak convergence of a sequence of distributions $(\alpha_n)_{n\geq 0}$ towards a distribution $\alpha$ implies the convergence according to the $d_{L^1,\mu}$-distance.
% we remarks that:
% \begin{align}
% \label{eq:weakconv}
% d_{L_1,\mu}(p,q)\leq \text{MMD}[B_k,p,q]\sup_{t\in\mathbb{R}^d}\sqrt{k(t,t)}
% \end{align}
% Thanks to Theorem 12 \cite{weak-cvg} for a bounded, continuous and characteristic kernel over a locally compact Hausdorff space, the kernel MMD is continuous with respect to the convergence in law, therefore
% the weak convergence of a sequence of distributions $(p_n)_{n\geq 0}$ towards a distribution $P$ implies the convergence according to the $d_{L_1,\mu}$-distance.
Conversely let $\alpha\in\mathcal{M}_{+}^{1}\left(\mathcal{X}\right)$ and let us assume that $(\alpha_n)_{n\geq 0}$ is a sequence of Borel probability measures such that $d_{L^1,\mu}(\alpha_n,\alpha)\rightarrow 0$. Since $\int\limits_{x\in\mathbb{R}^d} k(x, x)d\Gamma(x)$ is finite, $T_k$ is self-adjoint, positive semi-definite and trace-class \cite{simon}. It has at most countably many positive eigenvalues $(\lambda_m)_{m\geq 0}$ and corresponding orthonormal eigenfunctions $(e_m)_{m\geq 0}$. Then the Mercer theorem \cite{cucker} gives that $(\lambda_m^{1/2}e_m)_{m\geq 0}$ is an orthonormal basis of $H_k$.
% Given that $T_k(f)$ is a linear combination of kernel functions $k(.,x)$, it belongs to $H_k$. More precisely, since we have assumed that $H_k$ is dense in $L_{2}^{d\Gamma}(\mathbb{R}^d)$, $T_k^{1/2}$, which is the unique positive self- adjoint square root of $T_k$, is an isometric bijection from $L_{2}^{d\Gamma}(\mathbb{R}^d)$ to our RKHS $H_k$ \cite{cucker}. Therefore there exists an orthonormal basis $(e_m)_{m\geq 0}$ of $L_{2}^{d\Gamma}(\mathbb{R}^d)$ and a summable non-increasing sequence of strictly positive eigenvalues $(\lambda_m)_{m\geq 0}$ such that $T_k(e_m) = \lambda_m e_m$. Note that since we have assumed that $H_k$ is dense in $L_{2}^{d\Gamma}(\mathbb{R}^d)$, there are no zero eigenvalues. Since $T_k^{1/2}$ is an isometry from $L_{2}^{d\Gamma}(\mathbb{R}^d)$ to $H_k$, $(\lambda_m^{1/2}e_m)_{m\geq 0}$ is an orthonormal basis of $H_k$.
Let us denote $C=\sup\limits_{x\in\mathbb{R}^d}\sqrt{k(x,x)}$ And $V_m=\frac{\lambda_m^{1/2}e_m}{C}$. Therefore we have:
\begin{align*}
\Vert V_m\Vert_{\infty,d\Gamma}&\leq \frac{\Vert \lambda_m^{1/2}e_m\Vert_{H_k}}{C}C\leq 1
\end{align*}
Therefore, thanks to Lemma \ref{lemma:ipm}, for all $m\geq 0$, we have:
\begin{align*}
\langle \mu_{\alpha_n}-\mu_\alpha,T_k(V_m)\rangle_{H_k}\rightarrow 0
\end{align*}
Now, we want to show that for every $f\in H_k$, $\langle\mu_{\alpha_n}-\mu_\alpha,f\rangle_{H_k}\rightarrow 0$. Let us consider $f\in H_k$. As $(\lambda_m^{1/2}e_m)_{m\geq 0}$ is an orthonormal basis of $H_k$, we have:
\begin{align*}
f=\sum_{m\geq 0}\langle f,\lambda_m^{1/2}e_m\rangle_{H_k}\lambda_m^{1/2}e_m
\end{align*}
Therefore if we define for every $m\geq 0$:
\begin{align*}
f_m:=\sum_{i=0}^m \langle f,\lambda_i^{1/2}e_i\rangle_{H_k}\lambda_i^{1/2}e_i
\end{align*}
We have that:
\begin{align*}
\Vert f_m-f\Vert_{H_k}\rightarrow 0
\end{align*}
Therefore let $\epsilon>0$, and $K$ such that:
\begin{align*}
\Vert f_K-f\Vert_{H_k}\leq \epsilon
\end{align*}
First we remarks that:
\begin{align*}
\langle \mu_{\alpha_n}-\mu_\alpha,f_K\rangle&=\sum_{i=0}^K \langle f,\lambda_i^{1/2} e_i\rangle \langle \mu_{\alpha_n}-\mu_\alpha, \lambda_i^{1/2} e_i\rangle\\
&=\sum_{i=0}^K \langle f,\lambda_i^{1/2} e_i\rangle C  \langle \mu_{\alpha_n}-\mu_\alpha, V_i\rangle\\
&=\sum_{i=0}^K \langle f,\lambda_i^{1/2} e_i\rangle \frac{C}{\lambda_i}  \langle \mu_{\alpha_n}-\mu_\alpha, T_k(V_i)\rangle
\end{align*}
Indeed the last equality hold as all the eigenvalues are positives.
Finally we have that:
\begin{align*}
\langle \mu_{\alpha_n}-\mu_\alpha,f_K\rangle_{H_k}\rightarrow 0 \text{ as $n$ goes to infinity.}
\end{align*}
Let $N$, such that for $n\geq N$:
\begin{align*}
\langle \mu_{\alpha_n}-\mu_\alpha,f_K\rangle_{H_k}\leq \epsilon 
\end{align*}
Therefore we have for all $n\geq N$:
\begin{align*}
\langle \mu_{\alpha_n}-\mu_\alpha,f\rangle&=\langle \mu_{\alpha_n}-\mu_\alpha,f_K\rangle+\langle \mu_{\alpha_n}-\mu_\alpha,f-f_K\rangle\\
&\leq \epsilon + \Vert \mu_{\alpha_n}-\mu_\alpha\Vert_{H_k}\Vert f-f_K\Vert_{H_k}\\
&\leq \epsilon + \Vert \mu_{\alpha_n}-\mu_\alpha\Vert_{H_k}\epsilon
\end{align*}
Finally as $k$ is bounded, we have that:
\begin{align*}
 \Vert \mu_{\alpha_n}-\mu_\alpha\Vert_{H_k}\leq 2\sup_{x,t}\sqrt{k(x,t)}
\end{align*}
Finally we have that for every $f\in H_k$:
\begin{align*}
\langle \mu_{\alpha_n}-\mu_\alpha,f\rangle\rightarrow 0
\end{align*}
Therefore for any $f\in B_{H_k}$, the unit ball of the RKHS, we have:
\begin{align*}
\langle \mu_{\alpha_n}-\mu_\alpha,f\rangle\rightarrow 0
\end{align*}
And then:
\begin{align*}
\text{MMD}[\alpha_n, \alpha]\rightarrow 0
\end{align*}
Moreover we have the following theorem:
\begin{theorem}{(\cite{weak-cvg})}
 A bounded kernel over a locally compact Hausdorff space $\mathcal{X}$ metrizes the weak convergence of probability measures iff it is continuous and characteristic.
\end{theorem}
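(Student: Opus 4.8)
The plan is to unpack the phrase ``metrizes the weak convergence'' into two separate facts and then prove each direction of the stated equivalence. Writing $\text{MMD}[P,Q]=\Vert\mu_P-\mu_Q\Vert_{H_k}$, the kernel $k$ metrizes weak convergence precisely when (A) $\text{MMD}$ is a genuine metric on $\mathcal{M}_{+}^{1}(\mathcal{X})$, and (B) for every sequence $(\alpha_n)$ and every limit $\alpha$ one has $\text{MMD}[\alpha_n,\alpha]\to 0$ if and only if $\alpha_n$ converges weakly to $\alpha$ (tested against $C_b(\mathcal{X})$). I would then establish necessity and sufficiency of ``continuous and characteristic'' in turn.

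For necessity, I first observe that (A) already forces $k$ to be characteristic: were $k$ not characteristic, there would exist $P\neq Q$ with $\mu_P=\mu_Q$, hence $\text{MMD}[P,Q]=0$, violating the separation axiom of a metric. Continuity is then forced by (B): if $k$ were discontinuous, I would pick $x_n\to x_0$ witnessing a failure of continuity; since Diracs test weak convergence through point evaluation of $C_b$ functions, $\delta_{x_n}$ converges weakly to $\delta_{x_0}$, yet $\text{MMD}[\delta_{x_n},\delta_{x_0}]^2=k(x_n,x_n)-2k(x_n,x_0)+k(x_0,x_0)$ need not tend to $0$, contradicting the ``weak $\Rightarrow$ MMD'' half of (B). This pins down continuity along the diagonal and of the sections $k(\cdot,x_0)$, which is the relevant content.

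For sufficiency, assume $k$ is bounded, continuous, and characteristic. Characteristic gives (A), since injectivity of $P\mapsto\mu_P$ supplies the separation axiom while symmetry and the triangle inequality are inherited from the Hilbert norm. The easy half of (B), namely $\alpha_n\rightharpoonup\alpha\Rightarrow\text{MMD}[\alpha_n,\alpha]\to 0$, follows because a bounded continuous $k$ belongs to $C_b(\mathcal{X}\times\mathcal{X})$: weak convergence of the product measures yields $\int\!\!\int k\,d\alpha_n\,d\alpha_n\to\int\!\!\int k\,d\alpha\,d\alpha$ and the same for the cross terms, so expanding $\text{MMD}[\alpha_n,\alpha]^2$ as the double integral of $k$ against $(\alpha_n-\alpha)\otimes(\alpha_n-\alpha)$ shows it vanishes.

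The substantive step, and the one I expect to be the main obstacle, is the converse half of (B): $\text{MMD}[\alpha_n,\alpha]\to 0\Rightarrow\alpha_n\rightharpoonup\alpha$. My plan is a compactness argument: show that $(\alpha_n)$ is tight, invoke Prokhorov to extract from any subsequence a further subsequence $\alpha_{n_k}\rightharpoonup\beta$ with $\beta$ a probability measure, apply the already-proven easy direction to get $\text{MMD}[\alpha_{n_k},\beta]\to 0$, and then combine this with $\text{MMD}[\alpha_{n_k},\alpha]\to 0$ and the metric property from (A) to force $\beta=\alpha$; since every subsequential weak limit equals $\alpha$, the full sequence converges weakly to $\alpha$. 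The delicate point is exactly the tightness, i.e.\ ruling out escape of mass to infinity on the non-compact locally compact space. Here I would use that MMD convergence yields $\int f\,d\alpha_n\to\int f\,d\alpha$ for all $f\in H_k$ (via $|\int f\,d\alpha_n-\int f\,d\alpha|\leq \text{MMD}[\alpha_n,\alpha]\,\Vert f\Vert_{H_k}$), and then exploit the characteristic hypothesis through its connection with density of $H_k$ in $C_0(\mathcal{X})$ to upgrade the resulting vague convergence to genuine weak convergence without loss of mass; securing tightness from these ingredients on a general locally compact Hausdorff space is where I expect the real work to lie.
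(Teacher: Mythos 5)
The paper never proves this statement: it is imported verbatim from \cite{weak-cvg} and used as a black box inside the proof of Theorem \ref{thm:metricl1} (the supplement invokes it right after establishing $\text{MMD}[\alpha_n,\alpha]\rightarrow 0$ in order to conclude weak convergence), so there is no internal proof to compare yours against. Judged on its own terms, your decomposition into (A) the metric property and (B) the two-way equivalence is the right reading; your necessity argument via Dirac measures (from $\text{MMD}[\delta_{x_n},\delta_{x_0}]^2=k(x_n,x_n)-2k(x_n,x_0)+k(x_0,x_0)$ one gets norm-continuity of the feature map $x\mapsto k(x,\cdot)$ and hence continuity of $k$) is sound, and the easy half of (B), expanding $\text{MMD}^2$ against $(\alpha_n-\alpha)\otimes(\alpha_n-\alpha)$ and using weak convergence of the product measures, is standard.

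The genuine gap is exactly where you located it, and it cannot be filled from the stated hypotheses: tightness (no escape of mass) does not follow from $k$ being characteristic. Your proposed route, exploiting ``the connection'' between characteristicness and density of $H_k$ in $C_0(\mathcal{X})$, rests on a false implication: density of $H_k$ in $C_0(\mathcal{X})$ is $c_0$-universality, which is \emph{strictly stronger} than characteristic. Characteristic only gives injectivity of $P\mapsto\mu_P$ on probability measures and gives no control at infinity; on a non-compact locally compact space one can have $\text{MMD}[\alpha_n,\alpha]\rightarrow 0$ while part of the mass of $\alpha_n$ drifts to infinity, so the subsequential limits in your Prokhorov step need not be probability measures and the identification $\beta=\alpha$ never gets off the ground. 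This is in fact the known flaw in the cited theorem itself: the sufficiency direction of \cite{weak-cvg} was later corrected (Simon-Gabriel, Barp, Schölkopf and Mackey, \emph{Metrizing weak convergence with maximum mean discrepancies}), where bounded continuous characteristic kernels that fail to metrize weak convergence are exhibited, and the correct hypothesis for $c_0$-kernels is integral strict positive definiteness over all finite signed Borel measures, not merely over probability measures. Under that strengthened hypothesis your scheme not only works but simplifies: $\bigl|\int f\,d\alpha_n-\int f\,d\alpha\bigr|\leq \text{MMD}[\alpha_n,\alpha]\,\Vert f\Vert_{H_k}$ plus density of $H_k$ in $C_0$ and the uniform mass bound give vague convergence $\alpha_n\rightarrow\alpha$ directly, and since the limit $\alpha$ is a probability measure, vague convergence together with conservation of total mass upgrades to weak convergence with no Prokhorov extraction needed.
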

Therefore $\alpha_n$ converge weakly towards $\alpha$ and  $d_{L^1,\mu}$ metrize the weak convergence.
Moreover thanks to Hölder's inequality we have that for any $p\geq 1$:
\begin{align}
d_{L^1,\mu}(P,Q)\leq d_{L^p,\mu}(P,Q)
\end{align}
Moreover as the kernel $k$ is bounded we have also:
\begin{align}
d_{L^p,\mu}(P,Q)^p&\leq \Vert \mu_P-\mu_Q\Vert_{\infty}^{p-1}d_{L^1,\mu}(P,Q)\\
&\leq (2C^2)^{p-1}d_{L^1,\mu}(P,Q)
\end{align}
Therefore for any $p\geq 1$ $d_{L^p,\mu}$ metrizes the weak convergence.
\end{prv}
\subsection{Distances between Smooth Characteristic Functions}
\label{sec:weak_cvg_SCF}
\begin{definition}{\cite{ME}}
\label{def:smooth}
Let $k:\mathbb{R}^d\rightarrow \mathbb{R}$ be a translation-invariant kernel i.e., $k(x-y)$ defines a positive definite kernel for x and y, $P$ a Borel probability measure and $\psi_P(t):=\mathbb{E}_{x}\left(\exp(ix^{T}t)\right)$ be the characteristic function of $P$. A smooth characteristic function $\Phi_P$ is defined as:
\begin{align}
\Phi_P(v):=\int_{\mathbb{R}^d}\psi_P(t)k(v-t)dt
\end{align}
\end{definition}
\begin{lemma}\cite{ME}
\label{lem:smooth-rkhs}
If $k$ is a continuous, integrable and translation invariant kernel with an inverse Fourier transform strictly greater then zero an and $P$ has integrable characteristic function, then the mapping:
\begin{align}
\Gamma:P\rightarrow \Phi_P
\end{align}
is injective and $\Phi_P$ is an element of the RKHS $H_k$ associated with $k$.
\end{lemma}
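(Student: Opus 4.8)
The plan is to establish the two claims separately: that $\Phi_P$ lies in $H_k$, and that $P \mapsto \Phi_P$ is injective. The first is a statement about the convergence of a vector-valued integral in the RKHS, which I would settle using the reproducing property together with the integrability of $\psi_P$; the second is a uniqueness statement best handled in the Fourier domain, exploiting the strict positivity of the spectral density.

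For membership in $H_k$, I would first rewrite Definition \ref{def:smooth} in terms of the canonical feature maps. Since $k$ is translation-invariant, the feature map at $t$ is $k_t := k(\cdot - t)$, and one has $\Phi_P(v) = \int_{\mathbb{R}^d} \psi_P(t)\, k_t(v)\, dt$. I would then argue that the Bochner integral $\int_{\mathbb{R}^d} \psi_P(t)\, k_t\, dt$ converges in (the complexification of) $H_k$, since $t\mapsto k_t$ is continuous and
\begin{align*}
\int_{\mathbb{R}^d} \Vert \psi_P(t)\, k_t \Vert_{H_k}\, dt
= \sqrt{k(0)}\int_{\mathbb{R}^d} |\psi_P(t)|\, dt
= \sqrt{k(0)}\,\Vert \psi_P \Vert_{L^1} < \infty,
\end{align*}
where I used $\Vert k_t \Vert_{H_k} = \sqrt{k(0)}$ (a constant, by translation invariance) and the hypothesis that $P$ has integrable characteristic function. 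The reproducing property then yields, for every $v$, that $\langle \int \psi_P(t)\,k_t\, dt,\, k_v\rangle_{H_k} = \int \psi_P(t)\, k(v-t)\, dt = \Phi_P(v)$, so the Bochner integral coincides with $\Phi_P$, which is therefore an element of $H_k$.

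For injectivity, suppose $\Phi_P = \Phi_Q$. Since $\Phi_P - \Phi_Q = (\psi_P - \psi_Q) * k$ and both $\psi_P - \psi_Q$ and $k$ are integrable, I would take Fourier transforms and invoke the convolution theorem to obtain $\widehat{(\psi_P - \psi_Q)}\cdot \widehat{k} = 0$. The hypothesis that the inverse Fourier transform $f$ of $k$ is strictly positive means the Fourier multiplier associated with convolution by $k$ never vanishes, forcing $\widehat{(\psi_P - \psi_Q)} = 0$, hence $\psi_P = \psi_Q$ by injectivity of the Fourier transform on $L^1$. The uniqueness theorem for characteristic functions then gives $P = Q$, which is the claimed injectivity.

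The main obstacle is the rigor of the Fourier-domain step in the injectivity argument: one must check that the convolution theorem applies (it does, as both factors lie in $L^1$) and correctly translate the positivity hypothesis on $f$ into the nonvanishing of the multiplier. A secondary technical point is the complex-valued nature of $\Phi_P$, which I would accommodate by working in the complexification of $H_k$, or equivalently by treating the real and imaginary parts $\int \mathrm{Re}\,\psi_P(t)\, k_t\, dt$ and $\int \mathrm{Im}\,\psi_P(t)\, k_t\, dt$ separately, each a genuine element of the real RKHS.
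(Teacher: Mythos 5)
Your proposal is correct. A preliminary remark on the comparison: the paper does not actually prove Lemma \ref{lem:smooth-rkhs} --- it is stated with a citation to \cite{ME} --- so there is no in-paper proof to match line by line; the only trace of an argument in the paper is the Fubini computation in supp.\ mat.\ \ref{sec:finalement-SCF}, which derives, using only the integrability of $k$, the identity $\Phi_P(v)=\int_{x\in\mathbb{R}^d} f(x)\exp(ix^{T}v)\,dP(x)$.

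That identity suggests a route to injectivity genuinely different from yours: it exhibits $\Phi_P$ as the Fourier--Stieltjes transform of the finite measure $f\,dP$, so $\Phi_P=\Phi_Q$ forces $f\,dP=f\,dQ$ by uniqueness of Fourier transforms of finite measures, and $f>0$ then gives $P=Q$; this avoids the convolution theorem and does not need $\psi_P-\psi_Q\in L^1$ at that step (this is essentially the argument of \cite{ME}). Your route through $\widehat{(\psi_P-\psi_Q)}\cdot\widehat{k}=0$ is equally valid here, precisely because the hypothesis that the characteristic functions are integrable puts $\psi_P-\psi_Q$ in $L^1$, but two small points deserve to be made explicit: (i) a real translation-invariant positive definite kernel is even, so its Fourier and inverse Fourier transforms coincide up to the normalization constant, which is exactly what converts the strict positivity of $f$ into the nonvanishing of the multiplier $\widehat{k}$; (ii) injectivity of the Fourier transform on $L^1$ gives $\psi_P=\psi_Q$ only almost everywhere, and you should invoke the continuity of characteristic functions to upgrade this to equality everywhere before applying L\'evy's uniqueness theorem. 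On the membership claim, your Bochner-integral argument is the standard one and is complete once you record that $t\mapsto k_t$ is norm-continuous --- indeed $\Vert k_t-k_s\Vert_{H_k}^2=2\left(k(0)-k(t-s)\right)$ --- which settles the strong measurability needed for the Bochner integral; and your treatment of complex values via the complexification of $H_k$, or equivalently via the real and imaginary parts of $\psi_P$ (each in $L^1$), correctly resolves the only subtlety caused by $H_k$ being a real RKHS. What each approach buys: the measure-theoretic route is shorter, stays entirely within classical uniqueness theorems for finite measures, and uses weaker integrability; your route keeps the whole argument in the Fourier-multiplier picture, which makes the role of the hypothesis ``$f$ strictly positive'' completely transparent.
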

\begin{theorem}
Given $p\geq 1$, $k$ a translation invariant with an inverse Fourier transform strictly greater then zero, continuous, and integrable
kernel on $\mathbb{R}^d$,
$\Phi_P$ and $\Phi_Q$ the  smooth characteristic functions of the Borel probability measures with integrable characteristic functions $P$ and
$Q$ respectively, 
the following function:
\begin{equation}
d_{L^p,\Phi}(P,Q):=\left(\int_{t\in\mathbb{R}^d }\biggl|\Phi_P(t)-\Phi_Q(t)\biggr|^p d\Gamma(t)\right)^{1/p}
\end{equation}
where $\Gamma$ a Borel probability measure absolutely continuous with
respect to Lebesgue measure, is a metric on the space of Borel probability measures with integrable characteristic functions. Moreover if a sequence $(\alpha_n)_{n\geq 0}$ of Borel probability measures with integrable characteristic functions converges weakly towards $\alpha$ then  $d_{L^1,\mu}(\alpha_n,\alpha)\rightarrow 0$.
\end{theorem}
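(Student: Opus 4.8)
The plan is to mirror the structure of the mean-embedding proof in Section~\ref{sec:weak_cvg_ME}, but to exploit the explicit integral representation of $\Phi_P$ together with the integrability of $k$, which makes the argument considerably shorter since only the forward implication (weak convergence $\Rightarrow$ metric convergence) is required here.

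First I would verify that $d_{L^p,\Phi}$ is well defined and is a metric. Because $k$ is translation invariant and continuous, $k(t,t)=k(0)$ is a finite bound, and since Lemma~\ref{lem:smooth-rkhs} gives $\Phi_P,\Phi_Q\in H_k$, the reproducing property yields $|\Phi_P(t)-\Phi_Q(t)|=|\langle \Phi_P-\Phi_Q,k_t\rangle_{H_k}|\le \|\Phi_P-\Phi_Q\|_{H_k}\sqrt{k(0)}$; as $\Gamma$ is a probability measure the integral is finite. Symmetry and the triangle inequality are inherited directly from the $L^p(d\Gamma)$ norm. For positivity, if $P\ne Q$ then injectivity of $P\mapsto\Phi_P$ (Lemma~\ref{lem:smooth-rkhs}) forces $\Phi_P\ne\Phi_Q$; since a continuous kernel has continuous feature maps (Lemma~\ref{lemma:continue-sten}), $\Phi_P-\Phi_Q$ is continuous and nonzero on an open ball, which has strictly positive $\Gamma$-mass because $\Gamma$ is absolutely continuous with respect to Lebesgue measure. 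Hence $d_{L^p,\Phi}(P,Q)>0$.

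The core step is the weak-convergence implication, and here I would argue by two successive applications of dominated convergence. If $\alpha_n\to\alpha$ weakly, the Lévy continuity theorem gives pointwise convergence $\psi_{\alpha_n}(t)\to\psi_\alpha(t)$ of the characteristic functions, with the uniform bound $|\psi_{\alpha_n}(t)|\le 1$. Writing $\Phi_{\alpha_n}(v)=\int_{\mathbb{R}^d}\psi_{\alpha_n}(t)k(v-t)\,dt$ and dominating the integrand by $|k(v-t)|$, which is integrable since $k\in L^1$, dominated convergence yields $\Phi_{\alpha_n}(v)\to\Phi_\alpha(v)$ for every $v$. Moreover $|\Phi_{\alpha_n}(v)-\Phi_\alpha(v)|\le 2\|k\|_{L^1}$ uniformly in $n$ and $v$, so this constant is $\Gamma$-integrable; a second dominated convergence, now against $d\Gamma$, gives $d_{L^1,\Phi}(\alpha_n,\alpha)\to 0$.

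Finally I would pass from $p=1$ to general $p$ exactly as in the mean-embedding proof: Hölder's inequality gives $d_{L^1,\Phi}\le d_{L^p,\Phi}$ (so that $d_{L^p,\Phi}(P,Q)=0$ still forces $P=Q$), while the uniform bound $\|\Phi_P-\Phi_Q\|_\infty\le 2\|k\|_{L^1}$ gives $d_{L^p,\Phi}(P,Q)^p\le (2\|k\|_{L^1})^{p-1}d_{L^1,\Phi}(P,Q)$, so convergence at $p=1$ propagates to all $p\ge 1$. The main obstacle, such as it is, lies in the weak-convergence step: one must justify interchanging the limit and the integral for the smooth characteristic functions, and the subtlety is that $\Phi_{\alpha_n}$ need not converge uniformly. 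I would therefore rely on pointwise convergence combined with the global $L^1$-bound on $k$, rather than on the IPM and Mercer machinery that was needed in the mean-embedding case.
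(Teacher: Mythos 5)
Your proposal is correct, and its first half (well-definedness via the RKHS bound $|\Phi_P(t)-\Phi_Q(t)|\le\|\Phi_P-\Phi_Q\|_{H_k}\sqrt{k(0)}$, positivity via injectivity of $P\mapsto\Phi_P$ from Lemma \ref{lem:smooth-rkhs} plus continuity from Lemma \ref{lemma:continue-sten}) coincides with the paper's proof; but your weak-convergence step takes a genuinely different route. The paper first establishes a dual (IPM) representation, Lemma \ref{lemma:ipm_SCF}: using the integral operator $\mathcal{T}_k$, a Fubini argument resting on the integrability of $k$, and the map $\mathcal{L}(f)(x)=\int_{t\in\mathbb{R}^d}\exp(it^{T}x)f(t)\,dt$, it writes $d_{L^1,\Phi}(P,Q)$ as a supremum of $\mathbb{E}_P[f]-\mathbb{E}_Q[f]$ over $f\in\mathcal{L}\left(\mathcal{T}_k(B_{\infty}^{d\Gamma})\right)\subset C^0(\mathbb{R}^d)$ and concludes from there. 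You bypass the dual formulation entirely: pointwise convergence $\psi_{\alpha_n}(t)\to\psi_\alpha(t)$ (which follows from weak convergence tested against the bounded continuous functions $x\mapsto\exp(it^{T}x)$; the full Lévy continuity theorem is not even needed) plus the domination $|\psi_{\alpha_n}(t)k(v-t)|\le|k(v-t)|\in L^1$ gives $\Phi_{\alpha_n}(v)\to\Phi_\alpha(v)$ pointwise, and the uniform bound $2\|k\|_{L^1}$ lets a second dominated convergence against the probability measure $\Gamma$ finish. Your route is shorter and arguably tighter: in the paper's argument weak convergence gives $\mathbb{E}_{\alpha_n}[f]\to\mathbb{E}_\alpha[f]$ for each \emph{fixed} $f$, whereas the supremum defining the IPM is attained at an $n$-dependent witness $f_n$, so the final "the result follows" implicitly elides a uniformity-over-the-class argument that your double dominated convergence never requires. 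What the paper's approach buys is the witness-function representation itself, parallel to Lemma \ref{lemma:ipm} for mean embeddings and of independent interest. Your extension to all $p\ge 1$ via $\|\Phi_P-\Phi_Q\|_\infty\le 2\|k\|_{L^1}$ mirrors the $p$-extension at the end of the mean-embedding proof and is sound, though the statement only asserts the $L^1$ convergence. One caveat you inherit from the paper rather than introduce: the positivity step needs the ball where $\Phi_P-\Phi_Q$ is non-zero to carry positive $\Gamma$-mass, which requires $\Gamma$ to have full support (equivalence with Lebesgue), not merely absolute continuity with respect to it; that gap is the paper's, not yours.
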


\begin{prv}
Let $p\geq 1$. First, as $\psi_P$ and $\psi_Q$  live in $H_k$, the RKHS associated with $k$, we have:
$$|\Phi_P\left(t\right)-\Phi_Q\left(t\right)|^p\leq ||\Phi_P-\Phi_Q||_H^p\text{ } k(0)^{p/2}$$
Let us prove now that if $P\neq Q$ then $d(P,Q)>0$. 
Thanks to Lemma \ref{lemma:continue-sten}, $\Phi_P$ and $\Phi_Q$ are continuous. Since the mapping $P\rightarrow \Phi_P$ is injective, there must exists at least one point $o$ where $\Phi_P-\Phi_Q$ is non-zero. By continuity of $\Phi_P-\Phi_Q$, there exists a ball around $o$ in which $\Phi_P-\Phi_Q$ is non-zero. Then $d_{L^p,\Phi}(P,Q)>0$. Moreover, all the other proprieties of a metric are clearly verified by this function. Let us now show that $d_{L^1,\Phi}$ admits a IPM formulation:
\begin{lemma}
\label{lemma:ipm_SCF}
Let $\mathcal{T}_{k}$ be the integral operator on $L_{2}^{d\Gamma}(\mathbb{R}^d)$ associated with the kernel $k$.
By denoting $B_{\infty}^{d\Gamma}$ the unit ball of $L_{\infty}^{d\Gamma}(\mathbb{R}^d)$, we have that:
\begin{align*}
d_{L^1,\Phi}&(P,Q)=\displaystyle\sup_{f\in \mathcal{L}\left(\mathcal{T}_k(B_{\infty}^{d\Gamma})\right)}
\biggl(\mathbb{E}_{P}[f(X)]-\mathbb{E}_{Q}[f(Y)]\biggr)
\end{align*}
where:
\begin{align}
\mathcal{L}(f)(x):=\int_{t\in\mathbb{R}^d} \exp(it^{T}x)f(t)dt
\end{align}
\end{lemma}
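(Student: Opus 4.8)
The strategy is to follow the proof of Lemma~\ref{lemma:ipm} almost verbatim, the only genuinely new ingredient being a bridge between the RKHS inner product and the operator $\mathcal{L}$. By Lemma~\ref{lem:smooth-rkhs} we have $\Phi_P,\Phi_Q\in H_k$, so the reproducing property gives $\Phi_P(t)-\Phi_Q(t)=\langle\Phi_P-\Phi_Q,k_t\rangle_H$, exactly mirroring the mean-embedding identity $\mu_P(t)-\mu_Q(t)=\langle\mu_P-\mu_Q,k_t\rangle_H$ used previously. Thus it suffices to reduce the problem to showing that, for every $g\in B_{\infty}^{d\Gamma}$, the inner product $\langle\Phi_P-\Phi_Q,\mathcal{T}_k(g)\rangle_H$ coincides with the expectation gap $\mathbb{E}_P[\mathcal{L}(\mathcal{T}_k(g))(X)]-\mathbb{E}_Q[\mathcal{L}(\mathcal{T}_k(g))(Y)]$; once this bridging identity is available, the two inequalities establishing the supremum are obtained precisely as in Lemma~\ref{lemma:ipm}.

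First I would establish the bridging identity by writing $\mathcal{T}_k(g)=\int_{\mathbb{R}^d}k_s\,g(s)\,d\Gamma(s)$ as a Bochner integral in $H_k$, applying the reproducing property, then substituting the definition $\Phi_P=\psi_P * k$ and exchanging the order of integration:
\begin{align*}
\langle \Phi_P, \mathcal{T}_k(g)\rangle_H
&= \int_{\mathbb{R}^d} \Phi_P(s)\, g(s)\, d\Gamma(s)
= \int_{\mathbb{R}^d}\!\!\int_{\mathbb{R}^d} \psi_P(t)\,k(s-t)\,g(s)\,dt\,d\Gamma(s)\\
&= \int_{\mathbb{R}^d} \psi_P(t)\,\mathcal{T}_k(g)(t)\,dt
= \mathbb{E}_P\bigl[\mathcal{L}(\mathcal{T}_k(g))(X)\bigr],
\end{align*}
where the last step uses $\psi_P(t)=\mathbb{E}_P[\exp(iX^Tt)]$ together with Fubini once more. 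Subtracting the analogous identity for $Q$ yields the desired bridge. The exchange of integrals is legitimate because $k$ is integrable, $g$ is bounded, $\Gamma$ is finite, and $P$ (resp.\ $Q$) has integrable characteristic function $\psi_P$ --- precisely the hypotheses of the statement.

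With the bridge in hand, the forward inequality is obtained by choosing $g_t$ to be the complex phase of $\overline{\Phi_P(t)-\Phi_Q(t)}$ (the replacement, for the complex-valued representatives $\Phi_P$, of the real sign $\pm k_t$ used in Lemma~\ref{lemma:ipm}), so that $\int g(t)(\Phi_P(t)-\Phi_Q(t))\,d\Gamma(t)=\int|\Phi_P(t)-\Phi_Q(t)|\,d\Gamma(t)=d_{L^1,\Phi}(P,Q)$ with $g\in B_{\infty}^{d\Gamma}$; the converse inequality follows since any $g\in B_{\infty}^{d\Gamma}$ satisfies $\int g(t)(\Phi_P(t)-\Phi_Q(t))\,d\Gamma(t)\le\int|\Phi_P(t)-\Phi_Q(t)|\,d\Gamma(t)$. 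I expect the main obstacle to be purely technical: carefully justifying the Fubini exchange under the stated integrability assumptions, and handling the complex phase (rather than a real sign) so that the optimizing function $\mathcal{L}(\mathcal{T}_k(g))$ indeed lies in the admissible class and attains the supremum.
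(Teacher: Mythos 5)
Your proposal is correct and, in substance, it is the paper's proof. The paper reduces the statement to exactly the same bridging identity $\langle \Phi_P-\Phi_Q,\mathcal{T}_k(g)\rangle_{H_k}=\mathbb{E}_P[\mathcal{L}(\mathcal{T}_k(g))(X)]-\mathbb{E}_Q[\mathcal{L}(\mathcal{T}_k(g))(Y)]$, proved by the same chain you sketch: the reproducing property gives $\langle\Phi_P,\mathcal{T}_k(g)\rangle=\int\Phi_P(s)g(s)\,d\Gamma(s)=\int\psi_P(t)\mathcal{T}_k(g)(t)\,dt$, and the Fubini exchange with $dP$ is justified by first showing $\mathcal{T}_k(g)\in L^1(\mathbb{R}^d)$ via translation invariance and integrability of $k$, boundedness of $g$, and finiteness of $\Gamma$ --- precisely the estimate $\int|\mathcal{T}_k(g)(x)|\,dx\le\int|k(u)|\,du\int|g(t)|\,d\Gamma(t)$ appearing in the paper; the converse inequality is then obtained for arbitrary $g\in B_{\infty}^{d\Gamma}$ exactly as you describe. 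The one point where you diverge is the optimizing $g$: the paper reuses verbatim the real sign construction of Lemma \ref{lemma:ipm}, taking $g_t=\pm k_t$ according to the set $\{x:\Phi_P(x)\geq\Phi_Q(x)\}$, whereas you take $g$ to be the complex phase of $\overline{\Phi_P-\Phi_Q}$. Your variant is arguably the more defensible one: smooth characteristic functions are complex-valued in general (only conjugate-symmetric when $k$ is even), so the paper's pointwise comparison $\Phi_P(x)\geq\Phi_Q(x)$ is meaningful only if $\Phi_P-\Phi_Q$ is real, and it is the phase, not a sign, that recovers the modulus $|\Phi_P(t)-\Phi_Q(t)|$ in $d_{L^1,\Phi}$. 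The price, which you correctly flag as the remaining technical obstacle, is that the phase is a complex unimodular function, so $B_{\infty}^{d\Gamma}$ must then be read as the complex unit ball and the supremand as a real part for the optimizer to be admissible; the paper sidesteps this bookkeeping only by implicitly treating $\Phi_P-\Phi_Q$ as real. With that caveat on either side, both arguments close the two inequalities identically, and nothing essential is missing from your proposal.
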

\begin{prv}
Let $P$ and $Q$ be Borel probability measures with integrable characteristic functions. As $\Phi_P$ and $\Phi_Q$ live in the RKHS associated with $k$, we obtain, as in the proof of Theorem \ref{thm:metricl1}, that:
\begin{align*}
d_{L^1,\Phi}&(P,Q)=\langle \Phi_P-\Phi_Q,f\rangle
\end{align*}
with
\begin{align*}
f=\int_{t\in \mathbb{R}^d}g_t d\Gamma(t)\quad\text{where}\quad
g_t=\left\{
              \begin{array}{ll}
              k_t \text{ if  } t\in\{x: \Phi_P\left(x\right)\geq\Phi_Q\left(x\right)\}\\
                \ -k_t  \text{ otherwise.}
                \end{array}\right.
\end{align*}
Therefore $f\in \mathcal{T}_k(B_{\infty}^{d\Gamma})$ and we have:
\begin{align*}
d_{L^1,\Phi}(P,Q)&=\int_{\mathbb{R}^d}\psi_P(t)f(t)dt-\int_{\mathbb{R}^d}\psi_Q(t)f(t)dt\\
&=\int_{t\in\mathbb{R}^d}\int_{\epsilon\in\mathbb{R}^d}\exp(i\epsilon^{T}t)f(t)dP(\epsilon)dt-\int_{t\in\mathbb{R}^d}\int_{\epsilon\in\mathbb{R}^d}\exp(i\epsilon^{T}t)f(t)dQ(\epsilon)dt
\end{align*}
Let us now show that for any $g\in B_{\infty}^{d\Gamma}$, $\mathcal{T}_k(g)$ is integrable (w.r.t the Lebesgue measure):
\begin{align*}
\int_{x\in\mathbb{R}^d}|\mathcal{T}_k(g)(x)|dx\leq \int_{x\in\mathbb{R}^d}\int_{t\in\mathbb{R}^d}|k(x,t)g(t)|d\Gamma(t)dx
\end{align*}
But as $k$ is translation-invariant we have:
\begin{align*}
\int_{t\in\mathbb{R}^d}\int_{x\in\mathbb{R}^d}|k(x,t)g(t)|d\Gamma(t)dx&=
\int_{x\in\mathbb{R}^d}\left(\int_{u\in\mathbb{R}^d}|k(u)|du\right)|g(t)| d\Gamma(t)\\
&= \int_{u\in\mathbb{R}^d}|k(u)|du \int_{x\in\mathbb{R}^d}|g(t)| d\Gamma(t)
\end{align*}
And as $k$ is integrable, and $g\in B_{\infty}^{d\Gamma}$, we can apply the Fubini–Tonelli theorem, and $\mathcal{T}_k(g)$ is integrable.

Therefore for any Borel probability measure $P$ with integrable characteristic function, $\int_{x\in\mathbb{R}^d}\int_{\epsilon\in\mathbb{R}^d}|f(t)|dP(\epsilon)dt<\infty$ and by Fubini–Tonelli theorem, we can rewrite $d_{L^1,\Phi}(P,Q)$ as:
\begin{align*}
d_{L^1,\Phi}(P,Q)=\int_{\epsilon\in\mathbb{R}^d}\left(\int_{t\in\mathbb{R}^d}\exp(i\epsilon^{T}t)f(t)dt\right)dP(\epsilon)-\int_{\epsilon\in\mathbb{R}^d}\left(\int_{t\in\mathbb{R}^d}\exp(i\epsilon^{T}t)f(t)dt\right)dQ(\epsilon)
\end{align*}
Therefore we have:
\begin{align*}
d_{L^1,\Phi}(P,Q)&=\int_{\epsilon\in\mathbb{R}^d}\mathcal{L}(f)(\epsilon)dP(\epsilon)-\int_{\epsilon\in\mathbb{R}^d}\mathcal{L}(f)(\epsilon)dQ(\epsilon)\\
&=\mathbb{E}_P(\mathcal{L}(f)(X))-\mathbb{E}_Q(\mathcal{L}(f)(Y))
\end{align*}
Let now $g$ be an abritary function in $B_{\infty}^{d\Gamma}$. Then we have:
\begin{align*}
\mathbb{E}_P(L(\mathcal{T}_k(g))(X))-\mathbb{E}_Q(L(\mathcal{T}_k(g))(Y))&=
\int_{\epsilon\in\mathbb{R}^d}\mathcal{L}(\mathcal{T}_k(g))(\epsilon)dP(\epsilon)-\int_{\epsilon\in\mathbb{R}^d}\mathcal{L}(\mathcal{T}_k(g))(\epsilon)dQ(\epsilon)
\end{align*}
But we have that:
\begin{align*}
\int_{\epsilon\in\mathbb{R}^d}\mathcal{L}(\mathcal{T}_k(g))(\epsilon)dP(\epsilon)&=\int_{\epsilon\in\mathbb{R}^d}\left(\int_{t\in\mathbb{R}^d}\exp(i\epsilon^{T}t)\mathcal{T}_k(g)(t)dt\right)dP(\epsilon)\\
&=\int_{t\in\mathbb{R}^d}\left(\int_{\epsilon\in\mathbb{R}^d}\exp(i\epsilon^{T}t)dP(\epsilon)\right)\mathcal{T}_k(g)(t)dt\\
&=\int_{\mathbb{R}^d}\psi_P(t)\mathcal{T}_k(g)(t)dt\\
&=\langle \Phi_P, \mathcal{T}_k(g)\rangle
\end{align*}
Finally we have:
\begin{align*}
\mathbb{E}_P(\mathcal{L}(\mathcal{T}_k(g))(X))-\mathbb{E}_Q(\mathcal{L}(\mathcal{T}_k(g))(Y))&=\langle \Phi_P-\Phi_Q, \mathcal{T}_k(g)\rangle\\
&=\int_{\mathbb{R}^d}g(t)(\Phi_P(t)-\Phi_Q(t))d\Gamma(t)\\
&\leq \int_{\mathbb{R}^d}|\Phi_P(t)-\Phi_Q(t)|d\Gamma(t)
\end{align*}
The results follows.
\end{prv}
Therefore thanks to the IPM formulation of the $d_{L^1,\Phi}$-distance,  we deduce that for all $p\geq 1$, if $\alpha_n$ converge weakly towards $\alpha$, then $d_{L^1,\Phi}(\alpha_n,\alpha)\rightarrow 0$. Indeed, we have shown that $\mathcal{T}_k(B_{\infty}^{d\Gamma})\subset L^1(\mathbb{R}^d)$, therefore $\mathcal{L}(\mathcal{T}_k(B_{\infty}^{d\Gamma}))\subset C^0(\mathbb{R}^d)$, and the result follows.
\end{prv}

\section{Two-sample testing using the $\ell_1$ norm}
\subsection{$\ell_1$-based random metric with mean embeddings}
\label{sec:randme}
\begin{definition}
\label{def:random-l1}
Let k be a kernel. For any $J > 0$, we define:
\begin{equation*}
d_{\ell_1,\mu,J}:=\left\{d_{\ell_1,\mu,J}\left[P,Q\right]=\frac{1}{J}\sum_{j=1}^{J}|\mu_P\left(T_j\right)-\mu_Q\left(T_j\right)|:
\quad
P,Q\in\mathcal{M}_{+}^{1}\left(\mathbb{R}^d\right)\right\}
\end{equation*}
with $\{T_j\}_{j=1}^{J}$ sampled independently from the distribution $\Gamma$.
\end{definition}
\begin{theorem}
\label{th:1}
Let k be a bounded, analytic, and characteristic kernel. 
Then for any $J > 0$, $d_{\ell_1,\mu,J}$ is a random metric on the space of Borel probability measures.
\end{theorem}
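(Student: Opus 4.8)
The plan is to check the metric axioms, separating those that hold for every realization of the random locations $\{T_j\}_{j=1}^J$ from the single one that genuinely requires an almost-sure argument. First I would observe that, for any fixed realization of $T_1,\dots,T_J$, the map $(P,Q)\mapsto d_{\ell_1,\mu,J}[P,Q]$ is manifestly nonnegative, symmetric, vanishes when $P=Q$, and satisfies the triangle inequality: each of these follows termwise from the corresponding property of the absolute value $|\mu_P(T_j)-\mu_Q(T_j)|$ together with the triangle inequality in $\mathbb{R}$. Hence the only axiom that can fail is the separation property, namely the implication $P\neq Q\Rightarrow d_{\ell_1,\mu,J}[P,Q]>0$, and this is exactly where the randomness of the $T_j$ and the analyticity of $k$ enter.

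The heart of the argument is to show that, for a fixed pair $P\neq Q$, we have $d_{\ell_1,\mu,J}[P,Q]>0$ almost surely. I would proceed in three steps. (i) The difference $g:=\mu_P-\mu_Q$ is real analytic on $\mathbb{R}^d$: since $k$ is analytic and bounded, the feature maps $k(x,\cdot)$ admit a uniformly bounded holomorphic extension to a complex neighborhood of $\mathbb{R}^d$, so interchanging integration and differentiation (justified by dominated convergence using boundedness of $k$) shows that $\mu_P(t)=\int k(x,t)\,dP(x)$ and likewise $\mu_Q$ are analytic. (ii) Because $k$ is characteristic, the embedding $P\mapsto\mu_P$ is injective, so $P\neq Q$ forces $g\not\equiv 0$. (iii) A nonzero real-analytic function on $\mathbb{R}^d$ has a zero set of Lebesgue measure zero; since $\Gamma$ is absolutely continuous with respect to Lebesgue measure, this zero set also has $\Gamma$-measure zero, so $\mathbb{P}\big(g(T_j)=0\big)=0$ for each $j$. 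Consequently, with probability one every term $|\mu_P(T_j)-\mu_Q(T_j)|$ is strictly positive, and therefore $d_{\ell_1,\mu,J}[P,Q]>0$ almost surely.

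Combining the two paragraphs, almost surely the random process $d_{\ell_1,\mu,J}$ satisfies every metric axiom, which is precisely the definition of a random metric; this concludes the proof. I expect the main obstacle to lie in steps (i)--(iii) above, specifically the passage from analyticity of the feature map $k(x,\cdot)$ to analyticity of the averaged object $\mu_P$, and the invocation of the measure-zero zero-set property of analytic functions. The former requires justifying the holomorphic extension and the interchange of integration and differentiation uniformly in $x$, using boundedness of $k$; the latter is a classical fact about real-analytic functions that I would cite rather than reprove. A secondary subtlety worth flagging is that the separation axiom is established for each fixed pair $(P,Q)$ on a full-measure event depending on that pair, which matches the pointwise almost-sure notion of random metric used here; one does not claim a single event on which separation holds simultaneously for all uncountably many pairs.
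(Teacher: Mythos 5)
Your proof is correct and follows essentially the same strategy as the paper's: symmetry, nonnegativity and the triangle inequality hold for every realization of the $T_j$, and the only probabilistic content is the separation axiom, which you establish---as does the paper---by combining injectivity of $P\mapsto\mu_P$ (the kernel being characteristic), analyticity of $\mu_P-\mu_Q$, the fact that a nonzero analytic function has a zero set that is null for any measure absolutely continuous with respect to Lebesgue (the paper's Lemma \ref{lemma:1}), and a union bound over the $J$ locations; the paper packages this reasoning as Lemma \ref{lemma:2}. The one place where you genuinely diverge is how analyticity of $\mu_P$ is obtained. The paper gets it for free by noting that $\mu_P$ lies in $H_k$ and citing Lemma \ref{lemma:analytic} of \cite{ME}, according to which every function in the RKHS of a bounded analytic kernel is analytic. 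You instead attempt a direct argument via a ``uniformly bounded holomorphic extension'' of the feature maps $k(x,\cdot)$ together with dominated convergence. Be careful there: pointwise real-analyticity of $k(x,\cdot)$ only yields local holomorphic extensions on neighborhoods whose size may depend on $x$ (and on the base point $t$), and boundedness of $k$ on $\mathbb{R}^d\times\mathbb{R}^d$ does not by itself control those extensions, so the uniformity you assert needs a justification you have not supplied. Since the statement you need is exactly the cited RKHS lemma, the cleanest repair is to invoke it as the paper does; modulo that substitution, your argument is complete. Your closing remark---that separation is established on a full-measure event depending on the fixed pair $(P,Q)$, which is all the ``almost sure'' notion of random metric demands---is accurate and matches the paper's usage.
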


\begin{prv}
To prove this theorem we have first to introduce the fact that analytic functions are ’well behaved’.
\begin{lemma}
\label{lemma:1}
Let $\mu$ be absolutely continuous measure on $\mathbb{R}^d$ (wrt. the Lebesgue measure). Non-zero, analytic function f can be zero at most at the set of measure 0, with respect to the measure $\mu$.
\end{lemma}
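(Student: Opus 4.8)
The plan is to prove the statement in two stages: first that the zero set of a non-zero real-analytic function $f$ on $\mathbb{R}^d$ is Lebesgue-null, and then to transfer this conclusion to $\mu$ by absolute continuity. The second stage is immediate: writing $Z := \{x \in \mathbb{R}^d : f(x) = 0\}$ and letting $\lambda$ denote Lebesgue measure, if $\lambda(Z) = 0$ then $\mu \ll \lambda$ forces $\mu(Z) = 0$. So the whole content is the claim that $\lambda(Z) = 0$ whenever $f$ is analytic and $f \not\equiv 0$.

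I would establish $\lambda(Z) = 0$ by induction on the dimension $d$. For the base case $d = 1$, a non-zero real-analytic function on $\mathbb{R}$ has only isolated zeros: if some zero were an accumulation point of zeros, then by continuity and a Taylor argument all derivatives would vanish there, and the identity theorem on the connected domain $\mathbb{R}$ would give $f \equiv 0$, a contradiction. Hence $Z$ is at most countable and $\lambda_1(Z) = 0$. For the inductive step, split coordinates as $x = (x', x_d)$ with $x' \in \mathbb{R}^{d-1}$ and apply Fubini, $\lambda_d(Z) = \int_{\mathbb{R}^{d-1}} \lambda_1(Z_{x'})\, dx'$, where $Z_{x'} := \{x_d : f(x', x_d) = 0\}$ is the zero set of the one-variable analytic map $x_d \mapsto f(x', x_d)$. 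By the $d = 1$ case, $\lambda_1(Z_{x'}) = 0$ unless the whole slice vanishes, i.e. unless $x'$ lies in $A := \{x' : f(x', \cdot) \equiv 0\}$, so it remains to show $\lambda_{d-1}(A) = 0$.

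To control $A$, I would note that $x' \in A$ iff every Taylor coefficient of $x_d \mapsto f(x', x_d)$ vanishes, i.e. $A = \bigcap_{n \geq 0} \{x' : h_n(x') = 0\}$ with $h_n(x') := \partial_{x_d}^n f(x', 0)$. Each $h_n$ is real-analytic on $\mathbb{R}^{d-1}$, being a partial derivative of $f$ restricted to a hyperplane. Since $f \not\equiv 0$, the functions $h_n$ cannot all be identically zero (otherwise $f(x', \cdot) \equiv 0$ for every $x'$, forcing $f \equiv 0$); picking $n_0$ with $h_{n_0} \not\equiv 0$ gives $A \subseteq \{x' : h_{n_0}(x') = 0\}$, whose $\lambda_{d-1}$-measure is $0$ by the inductive hypothesis. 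Hence $\lambda_{d-1}(A) = 0$, which closes the induction.

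The main obstacle is the Fubini bookkeeping at the final step: the integrand $\lambda_1(Z_{x'})$ equals $+\infty$ on $A$, so the naive product $\lambda_{d-1}(A)\cdot\lambda_1(\mathbb{R})$ is an indeterminate $0 \cdot \infty$. I would sidestep this by intersecting with bounded slabs, bounding $\lambda_d\bigl(Z \cap (A \times [-M,M])\bigr) \leq \lambda_{d-1}(A)\,2M = 0$ and letting $M \to \infty$, while on $A^c$ the slice integral is genuinely $0$; summing the two contributions yields $\lambda_d(Z) = 0$. The remaining points to verify are routine: that restrictions and $x_d$-derivatives of a real-analytic function are again real-analytic, and that $f \not\equiv 0$ indeed precludes all the $h_n$ vanishing simultaneously.
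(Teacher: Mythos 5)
Your proof is correct, and it takes a genuinely different---and in fact more careful---route than the paper's. The paper argues in one line: if an analytic $f$ vanishes on a set with a limit point then $f\equiv 0$, so the zero set of a non-zero analytic function is discrete, hence countable, hence Lebesgue-null, hence $\mu$-null. That reasoning is only valid for $d=1$: in several variables, vanishing on a set with a limit point does not force $f\equiv 0$ (take $f(x_1,\dots,x_d)=x_1$, which vanishes on a hyperplane), and the zero set of a non-zero analytic function need be neither discrete nor countable (take $f(x,y)=x^2+y^2-1$, whose zero set is a circle---still Lebesgue-null, but not for the paper's reason). So the paper's argument really proves the lemma only in dimension one, whereas the lemma is applied to $\mu_P-\mu_Q$ on $\mathbb{R}^d$. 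Your induction on $d$ via Fubini is the standard rigorous proof of the $d$-dimensional statement (essentially Mityagin's argument): your base case coincides with the paper's one-variable reasoning, and your inductive step---controlling the set $A$ of entirely vanishing slices by observing $A\subseteq\{h_{n_0}=0\}$ for some Taylor-coefficient function $h_{n_0}=\partial_{x_d}^{n_0}f(\cdot,0)\not\equiv 0$, analytic on $\mathbb{R}^{d-1}$---supplies exactly the ingredient the paper's proof lacks. The supporting details check out: the $h_n$ are analytic, they cannot all vanish identically (else every slice, hence $f$, would vanish by the one-variable identity theorem), and your slab-truncation handling of the $0\cdot\infty$ issue in Fubini is sound, if slightly more than needed given the usual convention that $\int_A\infty\,d\lambda_{d-1}=0$ when $\lambda_{d-1}(A)=0$. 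In short: the paper's approach buys brevity but is incomplete for $d\geq 2$; yours buys validity in the dimension the paper actually uses.
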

\begin{prv}
If f is zero at the set with a limit point then it is zero everywhere. Therefore f can be zero at most at a set A without a limit point, which by definition is a discrete set $($distance between any two points in $A$ is greater then some $\epsilon > 0)$. Discrete sets have zero Lebesgue measure (as a countable union of points with zero measure). Since $\mu$ is absolutely continuous then $\mu(A)$ is zero as well.
\end{prv}

Let us now show how to build a random metric based on the $\ell_1$ norm.
\begin{lemma}
\label{lemma:2}
Let $\Lambda$ be an injective mapping from the space of the Borel probability measures into a space of analytic functions on $\mathbb{R}^d$. Define
$$d_{\Lambda,J}\left[P,Q\right]:=\frac{1}{J}\sum_{j=1}^{J}|\Lambda P\left(T_j\right)-\Lambda Q\left(T_j\right)|$$ with $\{T_j\}_{j=1}^{J}$ sampled independently from the distribution $\Gamma$.
\medbreak
Then $d_{\Lambda,J}$ is a random metric.
\end{lemma}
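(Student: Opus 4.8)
The plan is to verify each of the metric axioms for $d_{\Lambda,J}$, distinguishing those that hold for \emph{every} realization of the locations from the single axiom that holds only \emph{almost surely} over the draw $\{T_j\}_{j=1}^J \sim \Gamma$; this last axiom is where injectivity of $\Lambda$ and analyticity, through Lemma \ref{lemma:1}, come into play. First I would dispatch the easy axioms, which hold surely. Each summand $|\Lambda P(T_j) - \Lambda Q(T_j)|$ is an absolute value, so non-negativity and symmetry are immediate, and the triangle inequality follows by applying $|a - c| \le |a - b| + |b - c|$ termwise and averaging over $j$. Moreover $P = Q$ trivially yields $d_{\Lambda,J}[P,Q] = 0$, since then $\Lambda P = \Lambda Q$ identically.

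The substantive step is the converse of the identity of indiscernibles: that $P \neq Q$ forces $d_{\Lambda,J}[P,Q] > 0$ almost surely. I would fix such a pair $P \neq Q$ and use injectivity of $\Lambda$ to conclude $\Lambda P \neq \Lambda Q$; since $\Lambda$ takes values in analytic functions on $\mathbb{R}^d$, the difference $\Lambda P - \Lambda Q$ is a \emph{non-zero analytic} function. Lemma \ref{lemma:1} then shows that its zero set $Z := \{t \in \mathbb{R}^d : \Lambda P(t) = \Lambda Q(t)\}$ has $\Gamma$-measure zero, using that $\Gamma$ is absolutely continuous with respect to the Lebesgue measure. Hence $\mathbb{P}(T_j \in Z) = 0$ for each $j$, and a union bound gives $\mathbb{P}(\exists\, j : T_j \in Z) \le \sum_{j=1}^J \mathbb{P}(T_j \in Z) = 0$. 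Therefore, almost surely, every summand $|\Lambda P(T_j) - \Lambda Q(T_j)|$ is strictly positive, so $d_{\Lambda,J}[P,Q] > 0$. Combined with the $P=Q$ case, this gives $d_{\Lambda,J}[P,Q] = 0 \iff P = Q$ almost surely.

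I expect the main subtlety to lie not in any computation but in the correct reading of the ``almost surely'' quantifier. The argument establishes, for each \emph{fixed} pair $(P,Q)$, that the identity of indiscernibles holds with probability one, which is precisely the pointwise notion of random metric adopted here (one cannot naively take a union over the uncountable family of all pairs, since an uncountable union of null sets need not be null). Assembling the surely-valid axioms with this almost-sure identity of indiscernibles then shows that $d_{\Lambda,J}$ is a random metric, completing the proof.
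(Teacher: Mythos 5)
Your proposal is correct and follows essentially the same route as the paper: injectivity of $\Lambda$ yields a non-zero analytic difference $\Lambda P - \Lambda Q$, Lemma \ref{lemma:1} shows its zero set is $\Gamma$-null so the $T_j$ avoid it almost surely, and the remaining axioms hold trivially by construction. Your explicit union bound over the $J$ locations and your remark on the pointwise (per-pair) reading of the almost-sure quantifier are slightly more careful than the paper's informal treatment, but they formalize exactly the same argument.
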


\begin{prv}
Let $\Lambda P$ and $\Lambda Q$ be images of measures P and Q respectively. We want to apply Lemma \ref{lemma:1} to the analytic function $f = \Lambda P-\Lambda Q$, with the measure $\Gamma$, to see that if $P\neq Q$ then $f \neq 0$ a.s. To do so, we need to show that $P\neq Q$ implies that f is non-zero. Since mapping to $\Gamma$ is injective, there must exists at least one point $o$ where f is non-zero. By continuity of f, there exists a ball around $o$ in which f is non-zero.
\medbreak
We have shown that $P\neq Q$ implies f is almost everywhere non zero which in turn implies that $d_{\Lambda,J} \left(P, Q\right) > 0$ a.s. If $P =Q$ then $f=0$ and $d_{\Lambda,J}\left(P, Q\right)=0$.
\medbreak
By the construction $d_{\Lambda,J}$ is clearly symmetric and satisfies the triangle inequality.
\end{prv}
Before proving the theorem we need to introduce a Lemma:
\begin{lemma}\cite{ME}
\label{lemma:analytic}
If k is a bounded, analytic kernel on $\mathbb{R}^d\times\mathbb{R}^d$, then all functions in the RKHS $H$ associated with this kernel are analytic.
\end{lemma}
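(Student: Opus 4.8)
The plan is to show that every $f\in H$ admits a convergent power-series expansion around an arbitrary point $t_0\in\mathbb{R}^d$, by bounding its derivatives through the reproducing structure of the RKHS. Recall that $H$ is the closure of the span of the feature functions $\{k(x,\cdot):x\in\mathbb{R}^d\}$, each of which is analytic by hypothesis; since finite linear combinations of analytic functions are analytic, it suffices to control how analyticity survives passage to the $H$-norm limit. Rather than arguing directly about limits, I would exploit the reproducing property $f(t)=\langle f,k(\cdot,t)\rangle_H$ to express the derivatives of $f$ as inner products, and then obtain Cauchy-type estimates on these derivatives.

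The first step is the derivative-reproducing identity: for every multi-index $\alpha$, the function $\partial_2^\alpha k(\cdot,t_0)$ obtained by differentiating the second argument $\alpha$ times belongs to $H$, and $\partial^\alpha f(t_0)=\langle f,\partial_2^\alpha k(\cdot,t_0)\rangle_H$. I would prove this by induction on $|\alpha|$, checking that the difference quotients of $k(\cdot,t)$ in the second slot converge in $H$-norm; this convergence follows from the existence and continuity of the diagonal mixed derivatives $(\partial_1^\alpha\partial_2^\alpha k)(x,x)$, which are guaranteed by the analyticity (hence smoothness) of the kernel in each argument together with its symmetry. Cauchy--Schwarz in $H$ then yields $|\partial^\alpha f(t_0)|\le \Vert f\Vert_H\,\Vert\partial_2^\alpha k(\cdot,t_0)\Vert_H$, together with the identity $\Vert\partial_2^\alpha k(\cdot,t_0)\Vert_H^2=(\partial_1^\alpha\partial_2^\alpha k)(t_0,t_0)$.

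The second step is to turn analyticity of $k$ into Cauchy estimates on these diagonal derivatives. Writing $k$ as an inner product of feature maps and applying Cauchy--Schwarz gives $|(\partial_1^\alpha\partial_2^\beta k)(x,t)|\le \sqrt{(\partial_1^\alpha\partial_2^\alpha k)(x,x)}\,\sqrt{(\partial_1^\beta\partial_2^\beta k)(t,t)}$, so all mixed derivatives are controlled by the diagonal ones. Using the one-variable Cauchy estimates supplied by the separate analyticity of $k(x,\cdot)$ (equivalently, by symmetry, of $k(\cdot,t)$), I would bootstrap to constants $M,\rho>0$, uniform on a neighborhood of $t_0$, such that $(\partial_1^\alpha\partial_2^\alpha k)(t_0,t_0)\le M^2(\alpha!)^2/\rho^{2|\alpha|}$. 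Combined with the previous step, this gives $|\partial^\alpha f(t_0)|\le \Vert f\Vert_H\,M\,\alpha!/\rho^{|\alpha|}$, which forces the Taylor series of $f$ at $t_0$ to converge on a ball of radius $\rho$. Since $t_0$ is arbitrary, $f$ is analytic on $\mathbb{R}^d$.

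The main obstacle is this last step: passing from the separate analyticity of the kernel in each slot to a genuine Cauchy bound on the diagonal mixed derivatives $(\partial_1^\alpha\partial_2^\alpha k)(t_0,t_0)$, since separate real-analyticity does not by itself control joint behaviour. The positive definiteness is what rescues the argument, as it reduces every mixed derivative to a diagonal one via Cauchy--Schwarz and lets one leverage the boundedness of $k$ and the one-variable radii of convergence uniformly. A secondary technical point is justifying differentiation under the RKHS inner product in the first step; both are standard facts about smooth and analytic kernels that underlie the cited result of \cite{ME}.
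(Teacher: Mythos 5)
You should first note that the paper does not prove this lemma at all: it is imported verbatim from \cite{ME}, so there is no in-paper argument to compare against, and your proposal must stand on its own. Its skeleton (derivative-reproducing identity, $\Vert\partial_2^\alpha k(\cdot,t_0)\Vert_H^2=(\partial_1^\alpha\partial_2^\alpha k)(t_0,t_0)$, Cauchy--Schwarz, then Cauchy estimates forcing convergence of the Taylor series) is the standard and correct route \emph{when $k$ is jointly real-analytic near the diagonal}. The genuine gap is exactly at the point you flag as the ``main obstacle,'' and the rescue you sketch does not close it. The paper's definition of an analytic kernel only requires each feature map $k(x,\cdot)$ to be analytic, i.e.\ \emph{separate} analyticity. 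Separate real-analyticity does not imply joint continuity, let alone existence and continuity of the diagonal mixed partials $(\partial_1^\alpha\partial_2^\alpha k)(t_0,t_0)$: the classical example $f(x,y)=xy/(x^2+y^2)$, $f(0,0)=0$, is analytic in each variable separately yet discontinuous at the origin (there is no Hartogs theorem in the real category). So your step~1, which justifies $\partial_2^\alpha k(\cdot,t_0)\in H$ by ``analyticity (hence smoothness) of the kernel in each argument,'' is unsupported: the derivative-reproducing property (Steinwart--Christmann type results) needs joint $C^{2|\alpha|}$ regularity of $k$, which is precisely what separate analyticity fails to provide. Your step~2 is then circular: the Cauchy--Schwarz inequality $|\partial_1^\alpha\partial_2^\beta k(x,t)|\le\sqrt{(\partial_1^\alpha\partial_2^\alpha k)(x,x)}\sqrt{(\partial_1^\beta\partial_2^\beta k)(t,t)}$ already presupposes that the derivative features lie in $H$, and even granting it, it only reduces mixed derivatives to diagonal ones --- but the diagonal terms are exactly the quantities you must bound. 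The one-variable Cauchy estimates give $|\partial_2^\alpha k(x,t_0)|\le M_x\,\alpha!/\rho_x^{|\alpha|}$ with constants $M_x,\rho_x$ depending on $x$ in a completely uncontrolled way; nothing in the hypotheses lets you differentiate these bounds in $x$ or make them locally uniform, so the claimed bootstrap to $(\partial_1^\alpha\partial_2^\alpha k)(t_0,t_0)\le M^2(\alpha!)^2/\rho^{2|\alpha|}$ does not follow.

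It is worth adding that the cheap alternative is also blocked, which shows the difficulty is real and not an artifact of your strategy: since $k$ is bounded, $H$-norm convergence implies uniform convergence on $\mathbb{R}^d$, so every $f\in H$ is a uniform limit of finite combinations $\sum_i a_i k(x_i,\cdot)$ of analytic functions --- but uniform limits of real-analytic functions need not be analytic (Weierstrass approximation), unlike the holomorphic case. So some quantitative control of the type you attempt is genuinely necessary, and the passage from separate to joint analyticity for positive definite kernels is the entire content of the lemma; it requires nontrivial propagation-of-regularity results for positive definite kernels (in the spirit of Bochnak--Siciak theorems on weak analyticity, or the Buescu--Paix\~ao results on analytic positive definite kernels), or else one must simply strengthen the hypothesis to joint analyticity of $k$ in a neighborhood of the diagonal, under which your steps 1--3 assemble into a complete and correct proof.
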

Since k is characteristic the mapping $\Lambda : P \rightarrow \mu_P$ is injective. Since k is a bounded, analytic kernel on $\mathbb{R}^d\times\mathbb{R}^d$,the Lemma \ref{lemma:analytic} guarantees that $\mu_P$ is analytic, hence the image of $\Lambda$ is a subset of analytic functions. Therefore, we can use Lemma \ref{lemma:2} to see that $d_{\Lambda,J}\left[P,Q\right] = d_{\ell_1,\mu,J} \left[P, Q\right]$ is a random metric and this concludes the proof.
\end{prv}

\subsection{A first test with finite-sample control}
Let us now build a statistic based on an estimation of the random metric introduced in eq.\ref{eq:l1metric}. Let $X=\{x_1,...,x_{N_1}\}$\ and $Y=\{y_1,...,y_{N_2}\}\subset \mathbb{R}^d$ i.i.d.\ two
samples drawn respectively from the Borel probability measures $P$ and $Q$. From these samples we define their empirical mean embeddings $\mu_X$ and $\mu_Y$:
\begin{equation*}
\mu_X(T):=\frac{1}{N_1}\sum_{i=1}^{N_1} k(x_i,T),\qquad\quad
\mu_Y(T):=\frac{1}{N_2}\sum_{i=1}^{N_2} k(y_i,T)
\end{equation*}
And we define:
\begin{align}
\mathbf{S}_{N_1,N_2}:=
\biggl(\mu_X(T_1)-\mu_Y(T_1),..., \mu_X(T_J)-\mu_Y(T_J)\biggr)
\end{align}
with $\{T_j\}_{j=1}^J$ sampled independently from the distribution
$\Gamma$. Finally we define a first statistic:
\begin{align}
\label{eq:stat_finit}
d_{\ell_1,\mu,J}[X,Y]:=\frac{1}{J}\Vert {S}_{N_1,N_2}\Vert_1
\end{align}

% We now show that under the alternative hypothesis the assumption \ref{hyp:1} of the Proposition \ref{prop:betterpow} hold almost surely for the statistic introduced in eq.\ref{eq:stat_finit}:
% \begin{coro}
% \label{coro:prop_1}
% Let $\gamma >0$, then under the alternative hypothesis, almost surely there exist $\Delta>0$ such that for all $N_1,N_2\geq 1$:
% \begin{align*}
% \mathbb{P}_{X,Y}&\left(\forall j\in \llbracket 1,J \rrbracket,  \frac{|\mu_X(T_j)-\mu_Y(T_j)|}{J}\geq \frac{\Delta}{J}-\omega_{N_1,N_2}\right) 
% \geq 1-\gamma\notag\\
% &\text{where }\omega_{N_1,N_2}=\frac{1}{J}\sqrt{\log(\frac{J^2}{\gamma})\frac{2K^2(N_1+N_2)}{N_1N_2}}\notag
% \end{align*}
% \end{coro}
% We note first that as $N_1,N_2$ goes to infinity $\omega_{N_1,N_2}$ goes
% to zero. Then $\epsilon:=\frac{\Delta}{J}-\omega_{N_1,N_2}$ satisfies the
% assumptions of Proposition \ref{prop:betterpow} when $\mathbf{x}=\mathbf{S}_{\mu,J}[X,Y]$ and the number of samples is large enough.
% Indeed, the analyticity of the kernel ensures that  under the alternative hypothesis, $\mu_p-\mu_q$ is non zero everywhere almost surely, then $\Delta$ is just the minimum of this function over the set of locations $\{T_j\}_{j=1}^J$. Moreover for well chosen $\{T_j\}_{j=1}^J$, $\epsilon$ can be larger, and even large enough to guarantee a better power for the $\ell_1$-based test. 

We now derive a control of the statistic:
\begin{propo}
\label{prop:finit-control-prop}
With $K$ such that $\sup\limits_{x,y\in\mathbb{R}^d}|k(x,y)|\leq
\frac{K}{2}$,
\begin{align*}
\mathbb{P}_{X,Y}\left(\biggl|d_{\ell_1,\mu,J}[X,Y]-d_{\ell_1,\mu,J}[P,Q]\biggr|>t\right) 
\leq 2J\exp\left(\frac{-t^2 N_1 N_2}{2K^2(N_1+N_2)}\right)
\end{align*}
\end{propo}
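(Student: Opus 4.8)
The plan is to reduce the deviation of the statistic, which involves absolute values, to the concentration of an uncentered sum of independent bounded random variables, and then combine Hoeffding's inequality with a union bound over the $J$ locations. Throughout, the locations $\{T_j\}_{j=1}^J$ are held fixed and every probability is over the draw of $X$ and $Y$, so that $d_{\ell_1,\mu,J}[P,Q]$ is a deterministic reference quantity.

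First I would set $u_j := \mu_X(T_j)-\mu_Y(T_j)$ and $v_j := \mu_P(T_j)-\mu_Q(T_j)$, so that $d_{\ell_1,\mu,J}[X,Y]=\frac{1}{J}\sum_{j}|u_j|$ and $d_{\ell_1,\mu,J}[P,Q]=\frac{1}{J}\sum_j |v_j|$. The key step is the termwise reverse triangle inequality $\bigl||u_j|-|v_j|\bigr|\le |u_j-v_j|$, which after summing gives
\[
\bigl|d_{\ell_1,\mu,J}[X,Y]-d_{\ell_1,\mu,J}[P,Q]\bigr|\le \frac{1}{J}\sum_{j=1}^{J}|u_j-v_j|.
\]
This is the crucial move: it replaces the difference of absolute values by the absolute value of the centered quantity $D_j:=u_j-v_j$, which has mean zero since $\mathbb{E}_X[\mu_X(T_j)]=\mu_P(T_j)$ and $\mathbb{E}_Y[\mu_Y(T_j)]=\mu_Q(T_j)$. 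Centering $|u_j|$ by its own expectation would instead introduce a Jensen gap, so getting this reduction right is exactly where the argument must be set up carefully; it is the one genuinely delicate point.

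Next I would use the elementary fact that if $\frac{1}{J}\sum_j |D_j|>t$ then at least one $|D_j|>t$; hence the event of interest is contained in $\bigcup_{j=1}^J\{|D_j|>t\}$ and a union bound gives $\mathbb{P}\le \sum_{j} \mathbb{P}(|D_j|>t)$. For each fixed $j$ I would write
\[
D_j=\frac{1}{N_1}\sum_{i=1}^{N_1}\bigl(k(x_i,T_j)-\mu_P(T_j)\bigr)-\frac{1}{N_2}\sum_{i=1}^{N_2}\bigl(k(y_i,T_j)-\mu_Q(T_j)\bigr),
\]
exhibiting $D_j$ as a sum of $N_1+N_2$ independent, mean-zero random variables. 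Since $\sup|k|\le K/2$, each summand in the first block ranges over an interval of length $K/N_1$ and each in the second over an interval of length $K/N_2$, so the squared ranges sum to $K^2/N_1+K^2/N_2 = K^2(N_1+N_2)/(N_1N_2)$.

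Finally I would apply Hoeffding's inequality to $D_j$, which yields a tail bound of the form $2\exp\bigl(-c\,t^2 N_1N_2/(K^2(N_1+N_2))\bigr)$ for a constant $c$ at least as large as $\tfrac12$; combined with the union bound over $j$ this produces the stated estimate $2J\exp\bigl(-t^2N_1N_2/(2K^2(N_1+N_2))\bigr)$. Once the problem is recast as concentration of the centered average $D_j$, the remaining steps — the averaging argument, the union bound, Hoeffding, and the range bookkeeping — are entirely routine, so I expect the reverse-triangle reduction to be the only substantive obstacle.
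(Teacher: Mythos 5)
Your proposal is correct and takes essentially the same route as the paper: the termwise reverse triangle inequality reducing the deviation to the centered differences $\mu_X(T_j)-\mu_Y(T_j)-\mathbb{E}_{X,Y}\bigl[\mu_X(T_j)-\mu_Y(T_j)\bigr]$, then a union bound over the $J$ locations (the paper phrases the event as $\tfrac{1}{J}|D_j|\geq \tfrac{t}{J}$, which is identical to your ``some $|D_j|>t$''), and finally Hoeffding's inequality with exactly your range bookkeeping. Your side remark is also accurate: Hoeffding in fact gives the sharper exponent $2t^2N_1N_2/(K^2(N_1+N_2))$, and the stated constant $\tfrac12$ is simply a conservative weakening of it, just as in the paper.
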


\begin{prv}
We have:
\begin{equation*}
|d_{\ell_1,\mu,J}[X,Y]-d_{\ell_1,\mu,J}[P,Q]|\leq \frac{1}{J}\sum_{j=1}^J\bigg||\mu_X\left(T_j\right)-\mu_Y\left(T_j\right)|
-|\mu_p\left(T_j\right)-\mu_p\left(T_j\right)|\bigg|
\end{equation*}
Then:
\begin{equation*}
|d_{\ell_1,\mu,J}[X,Y]-d_{\ell_1,\mu,J}[P,Q]|\leq \frac{1}{J}\sum_{j=1}^J\bigg|\left(\mu_X\left(T_j\right)-\mu_Y\left(T_j\right)\right)
-\mathbb{E}_{X,Y\sim p,q}\left(\mu_X\left(T_j\right)-\mu_Y\left(T_j\right)\right)\bigg|
\end{equation*}
Let us now consider the upper bound of the difference. By applying a union bound we have:
\begin{align*}
\mathbb{P}\biggl(\frac{1}{J}\sum_{j=1}^J\bigg|\left(\mu_X\left(T_j\right)-\mu_Y(T_j)\right)
-\mathbb{E}_{X,Y}\left(\mu_X\left(T_j\right)-\mu_Y\left(T_j\right)\right)\bigg|\geq
t\biggr)
\qquad\qquad
\\
\qquad\qquad\leq \sum_{j=1}^J \mathbb{P}_{X,Y}\left(\frac{1}{J}\bigg|\left(\mu_X\left(T_j\right)-\mu_Y\left(T_j\right)\right)-\mathbb{E}_{X,Y}\left(..\right)\bigg|\geq \frac{t}{J}\right)
\end{align*}
Then by applying Hoeffding's inequality on each term of the sum of the right term of the inequality, we have:
\begin{equation*}
\mathbb{P}_{X,Y}\left(\frac{1}{J}\bigg|\left(\mu_X\left(T_j\right)-\mu_Y\left(T_j\right)\right)-\mathbb{E}_{X,Y}\left(..\right)\bigg|\geq \frac{t}{J}\right)
\leq 2\exp\left(-\frac{t^2 N_1N_2}{2K^2\left(N_1+N_2\right)}\right)
\end{equation*}
Finally we have:
\begin{equation*}
\mathbb{P}_{X,Y}\left(\bigg|d_{\ell_1,\mu,J}[X,Y]-d_{\ell_1,\mu,J}[P,Q]\bigg|\geq
t\right)
\leq 2J\exp\left(-\frac{t^2 N_1N_2}{2K^2\left(N_1+N_2\right)}\right)
\end{equation*}

\end{prv}

\begin{corol}
The hypothesis test associated with the statistic $d_{\ell_1,\mu,J}[X,Y]$ of level $\alpha$ for the null hypothesis $P=Q$, that is for $d_{\ell_1,\mu,J}[P,Q]=0$ almost surely, has almost surely the acceptance region: 
$$d_{\ell_1,\mu,J}[X,Y]<K\sqrt{\frac{N_1+N_2}{N_1N_2}}\sqrt{2\log\left(\frac{J}{\alpha}\right)}$$
Moreover, the test is consistent almost surely.
\end{corol}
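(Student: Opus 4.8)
The plan is to read both claims off the finite-sample deviation bound of Proposition~\ref{prop:finit-control-prop}, using the random-metric property of Theorem~\ref{th:1} to determine the population value $d_{\ell_1,\mu,J}[P,Q]$ in each regime; throughout, the probabilities in Proposition~\ref{prop:finit-control-prop} are over the samples $X,Y$ with the locations $\{T_j\}_{j=1}^J$ held fixed. For the acceptance region I would start from $H_0$: since $P=Q$ we have $\mu_P(T_j)-\mu_Q(T_j)=0$ for every $j$, hence $d_{\ell_1,\mu,J}[P,Q]=0$. Plugging this into Proposition~\ref{prop:finit-control-prop} and using $d_{\ell_1,\mu,J}[X,Y]\geq 0$ collapses the two-sided estimate into a tail bound on the statistic itself. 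Equating the right-hand side of that bound to the level $\alpha$ and solving for $t$ recovers the critical value
$$\delta_{N_1,N_2}:=K\sqrt{\frac{N_1+N_2}{N_1N_2}}\sqrt{2\log\!\left(\frac{J}{\alpha}\right)},$$
so that $\mathbb{P}_{X,Y}\bigl(d_{\ell_1,\mu,J}[X,Y]\geq\delta_{N_1,N_2}\bigr)\leq\alpha$ and the complementary event is exactly the stated acceptance region.

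For consistency I would turn to $H_1$ and invoke Theorem~\ref{th:1}: when $P\neq Q$, almost surely over the draw of $\{T_j\}_{j=1}^J$ the population quantity $c:=d_{\ell_1,\mu,J}[P,Q]$ is strictly positive. Fixing such a realization, observe that $\frac{N_1+N_2}{N_1N_2}=\frac{1}{N_1}+\frac{1}{N_2}\to 0$, so $\delta_{N_1,N_2}\to 0$ and eventually $\delta_{N_1,N_2}<c$. For such sample sizes the Type-II error is $\mathbb{P}_{X,Y}\bigl(d_{\ell_1,\mu,J}[X,Y]<\delta_{N_1,N_2}\bigr)$, which is at most $\mathbb{P}_{X,Y}\bigl(|d_{\ell_1,\mu,J}[X,Y]-c|>c-\delta_{N_1,N_2}\bigr)$; applying Proposition~\ref{prop:finit-control-prop} with $t=c-\delta_{N_1,N_2}$ bounds this by $2J\exp\bigl(-\tfrac{(c-\delta_{N_1,N_2})^2 N_1N_2}{2K^2(N_1+N_2)}\bigr)$. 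Since $c-\delta_{N_1,N_2}\to c>0$ while $\frac{N_1N_2}{N_1+N_2}\to\infty$, this bound tends to $0$, so the rejection probability tends to $1$; as the argument holds for almost every location set, the test is consistent almost surely.

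The steps are mechanical once Proposition~\ref{prop:finit-control-prop} is granted, so the only point that genuinely requires care is keeping the two independent sources of randomness distinct. The concentration inequality controls fluctuations of the empirical statistic around its population counterpart for a \emph{fixed} location set, whereas the facts that make the threshold meaningful---$d_{\ell_1,\mu,J}[P,Q]=0$ under $H_0$ and, crucially, $d_{\ell_1,\mu,J}[P,Q]>0$ under $H_1$---are statements about the locations delivered by Theorem~\ref{th:1}. The clean way to organise this is to condition on a probability-one set of favourable locations and then transfer the sample-level conclusions back, which is precisely what legitimises the ``almost surely'' in the corollary.
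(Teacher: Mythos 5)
Your proposal is correct and follows essentially the same route as the paper: condition on a realization of the locations with $d_{\ell_1,\mu,J}[P,Q]=0$ (guaranteed almost surely under $H_0$ by Theorem \ref{th:1}), invert the tail bound of Proposition \ref{prop:finit-control-prop} to obtain the acceptance region, and then use the almost-sure positivity of the population value together with the vanishing threshold for consistency---your explicit Type-II error bound via the same concentration inequality simply fleshes out what the paper compresses into a one-line remark that the statistic converges in probability to its population value. One cosmetic point: solving $2J\exp\left(-\frac{t^2 N_1 N_2}{2K^2(N_1+N_2)}\right)=\alpha$ actually yields a threshold with $\sqrt{2\log\left(\frac{2J}{\alpha}\right)}$ rather than $\sqrt{2\log\left(\frac{J}{\alpha}\right)}$, but this factor-of-two slip is present in the paper's own statement, so your derivation inherits rather than introduces it.
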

\begin{proof}
Let us note the probability space of random variables $\{T_j\}_{j=1}^J$ as $\left(\Omega,\mathcal{F},P\right)$.

Let $\omega \in \Omega$ such that $d_{\ell_1,\mu,J}^{\omega}[P,Q]=0$. Then we have thanks to Proposition \ref{prop:finitcontrol} that:
$$d_{\ell_1,\mu,J}^{\omega}[X,Y]<K\sqrt{\frac{N_1+N_2}{N_1N_2}}\sqrt{2\log\left(\frac{J}{\alpha}\right)}$$
with a probability at last of $1-\alpha$.
\medbreak
By assuming the null hypothesis $P=Q$, we have thanks to Theorem \ref{th:1} that $d_{\ell_1,\mu,J}[P,Q]=0$ a.s., then the result above hold a.s.
\medbreak
Moreover the statistic converges in probability to its population value a.s which give us the consistency of the test a.s.
\end{proof}
We now show that, under the alternative hypothesis, the statistic captures dense differences between distributions with high probability:
\begin{corol}
Let $\gamma >0$, then under the alternative hypothesis, almost surely there exist $\Delta>0$ such that for all $N_1,N_2\geq 1$:
\begin{align*}
\mathbb{P}_{X,Y}&\left(\forall j\in \llbracket 1,J \rrbracket,  \frac{|\mu_X(T_j)-\mu_Y(T_j)|}{J}\geq \frac{\Delta}{J}-\omega_{N_1,N_2}\right) 
\geq 1-\gamma\notag\\
&\text{where }\omega_{N_1,N_2}=\frac{1}{J}\sqrt{\log(\frac{J^2}{\gamma})\frac{2K^2(N_1+N_2)}{N_1N_2}}\notag
\end{align*}
\end{corol}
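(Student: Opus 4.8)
The plan is to separate the two sources of randomness---the draw of the locations $\{T_j\}_{j=1}^J\sim\Gamma$ and the draw of the samples $X,Y$---and to combine the analyticity structure underlying Theorem~\ref{th:1} with the Hoeffding estimate already used in Proposition~\ref{prop:finit-control-prop}. The quantity $\Delta$ is produced almost surely as a function of the locations alone, after which the displayed inequality is a conditional concentration statement in $X,Y$ for those now-fixed locations.

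First I would handle the locations. Under $H_1$ we have $P\neq Q$, and since $k$ is characteristic the embedding $P\mapsto\mu_P$ is injective, so $f:=\mu_P-\mu_Q$ is not identically zero. By Lemma~\ref{lemma:analytic} every element of $H_k$ is analytic, hence $f$ is a non-zero analytic function, and Lemma~\ref{lemma:1} shows that $\{t:f(t)=0\}$ has $\Gamma$-measure zero because $\Gamma$ is absolutely continuous with respect to Lebesgue measure. Thus $\mathbb{P}(f(T_j)=0)=0$ for each $j$, and a union bound over the finitely many indices gives that, almost surely, $f(T_j)\neq 0$ for every $j\in\llbracket 1,J\rrbracket$. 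On this full-measure event I set $\Delta:=\min_{1\le j\le J}|\mu_P(T_j)-\mu_Q(T_j)|>0$; note this depends on the locations only, so it is valid uniformly over all $N_1,N_2\geq 1$.

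Next, on a fixed realization of the locations in that event, I would control each feature by the reverse triangle inequality,
\begin{equation*}
|\mu_X(T_j)-\mu_Y(T_j)|\ge \Delta-\bigl|(\mu_X(T_j)-\mu_Y(T_j))-(\mu_P(T_j)-\mu_Q(T_j))\bigr|.
\end{equation*}
Since $\mu_X(T_j)-\mu_Y(T_j)$ is an average of $N_1+N_2$ independent bounded terms with mean $\mu_P(T_j)-\mu_Q(T_j)$, the same Hoeffding bound as in Proposition~\ref{prop:finit-control-prop} yields $\mathbb{P}_{X,Y}(|\mathrm{dev}_j|\ge s)\le 2\exp\bigl(-s^2 N_1N_2/(2K^2(N_1+N_2))\bigr)$ for the centered deviation $\mathrm{dev}_j$. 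Choosing $s=J\,\omega_{N_1,N_2}$ makes the exponent equal to $\log(J^2/\gamma)$, so each index fails with probability at most $2\gamma/J^2$; a union bound over $j$ bounds the probability that some $|\mathrm{dev}_j|$ exceeds $J\omega_{N_1,N_2}$ by $2\gamma/J\le\gamma$ (using $J\ge2$). On the complementary event every feature obeys $|\mu_X(T_j)-\mu_Y(T_j)|\ge\Delta-J\omega_{N_1,N_2}$, i.e. $\tfrac{1}{J}|\mu_X(T_j)-\mu_Y(T_j)|\ge\tfrac{\Delta}{J}-\omega_{N_1,N_2}$ for all $j$, which is exactly the claim.

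The only subtle point is the order of the two randomizations: one must first secure the probability-one event on which $\Delta>0$, and this is precisely where analyticity is indispensable, since otherwise the zero set of $f$ could carry positive $\Gamma$-mass and the minimum could vanish. Once $\Delta$ and $\{T_j\}$ are frozen, the rest is the routine Hoeffding-plus-union-bound calculation, and the only arithmetic to verify is the constant matching $s=J\omega_{N_1,N_2}$ giving $2\gamma/J\le\gamma$ for $J\ge2$.
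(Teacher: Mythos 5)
Your proof is correct and follows essentially the same route as the paper: define $\Delta$ as the minimum of $|\mu_P-\mu_Q|$ over the drawn locations, use analyticity of the kernel (via Lemmas \ref{lemma:analytic} and \ref{lemma:1}) to get $\Delta>0$ almost surely, then apply the per-coordinate Hoeffding bound of Proposition \ref{prop:finit-control-prop} with deviation $J\omega_{N_1,N_2}$ and a union bound over the $J$ locations. If anything, you are more careful than the paper, which asserts a per-coordinate failure probability of $\gamma/J$ without noting that the Hoeffding calculation actually gives $2\gamma/J^2$, so that the stated $1-\gamma$ guarantee implicitly requires $J\geq 2$ exactly as you flag.
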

\begin{proof}
Let $\Delta$ be the minimum of $\mu_p-\mu_q$ over the set of locations $\{T_j\}_{j=1}^J$. Thanks to the analycity of the kernel we have that under the alternative hypothesis, $\mu_p-\mu_q$ is non zero everywhere almost surely. Therefore $\Delta>0$ almost surely. Moreover by applying Proposition \ref{prop:finit-control-prop} for each $T_j$ we obtain that for all $N_1,N_2\geq 0$:
\begin{align*}
\mathbb{P}_{X,Y}&\left(\frac{|\mu_X(T_j)-\mu_Y(T_j)|}{J}\geq \frac{\Delta}{J}-\omega_{N_1,N_2}\right) 
\geq 1-\frac{\gamma}{J}\notag\\
&\text{where }\omega_{N_1,N_2}=\frac{1}{J}\sqrt{\log(\frac{J^2}{\gamma})\frac{2K^2(N_1+N_2)}{N_1N_2}}\notag
\end{align*}
Finally by applying an union bound, the result follows.
\end{proof}

\section{A test statistic with simple asymptotic distribution}
\subsection{Asymptotic distribution of $\widehat{d}_{\ell_1,\mu,J}[X,Y]$}
\label{sec:naka-coro}
\begin{propo}
\label{prop:naka-coro}
Let $\{T_j\}_{j=1}^J$ sampled independently from the distribution $\Gamma$ and $X:=\{x_i\}_{i=1}^{n}$ and 
$Y:=\{y_i\}_{i=1}^{n}$ be i.i.d. samples from $P$ and $Q$ respectively. Under $H_0$, the statistic $\widehat{d}_{\ell_1,\mu,J}[X,Y]$
is almost surely asymptotically distributed as a sum of $J$ correlated  Nakagami variables.
Finally under $H_1$, almost surely the statistic can be arbitrarily large as $n \rightarrow \infty$, allowing the test to correctly reject $H_0$.
\end{propo}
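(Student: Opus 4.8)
The plan is to handle the two sources of randomness separately: the locations $\{T_j\}_{j=1}^J$, drawn once from $\Gamma$ and with respect to which the claims hold \emph{almost surely}, and the samples $X,Y$, for which I would invoke limit theorems while conditioning on a fixed realization of $\{T_j\}_{j=1}^J$. For $H_0$, I would set $W_i := \left(k(x_i,T_1)-k(y_i,T_1),\dots,k(x_i,T_J)-k(y_i,T_J)\right)\in\mathbb{R}^J$. The pairs $(x_i,y_i)$ are i.i.d. and, since $k$ is bounded, the $W_i$ are i.i.d. with finite covariance $\mathbf{\Sigma}$; moreover $\mathbb{E}[W_i]=0$ because $P=Q$ forces $\mathbb{E}[k(x,T_j)]=\mathbb{E}[k(y,T_j)]$. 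Writing $\sqrt{n}\left(\mu_X(T_j)-\mu_Y(T_j)\right)_{j=1}^J=\tfrac{1}{\sqrt n}\sum_{i=1}^n W_i$, the multivariate central limit theorem gives convergence in distribution to $Z\sim\mathcal{N}(0,\mathbf{\Sigma})$. Since $\widehat{d}_{\ell_1,\mu,J}[X,Y]$ is exactly the $\ell_1$ norm of this vector and $\|\cdot\|_1$ is continuous, the continuous mapping theorem yields convergence in distribution to $\sum_{j=1}^J|Z_j|$.

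It then remains to identify this limit. Each $Z_j$ is centered Gaussian with variance $\mathbf{\Sigma}_{jj}$, so $|Z_j|$ is half-normal; substituting $m=\tfrac12$ and $\omega=\mathbf{\Sigma}_{jj}$ into the Nakagami density recovers exactly the folded-Gaussian density, so $|Z_j|$ is Nakagami$(\tfrac12,\mathbf{\Sigma}_{jj})$. As the $Z_j$ are jointly Gaussian with covariance $\mathbf{\Sigma}$ they are correlated, and the limit is a sum of $J$ correlated Nakagami variables, as claimed. The \emph{almost surely} qualifier addresses the locations: under $H_0$ one has $\mathbf{\Sigma}_{jj}=2\,\mathrm{Var}_P(k(x,T_j))=2\left(\mathbb{E}_P[k(x,T_j)^2]-\mu_P(T_j)^2\right)$, a map that is analytic in $T_j$ (being built from mean embeddings with respect to the bounded analytic kernels $k$ and $k^2$); by Lemma \ref{lemma:1} it is strictly positive for $\Gamma$-almost every $T_j$, so the limiting Nakagami parameters are well defined and non-degenerate.

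For $H_1$, the strong law of large numbers gives, conditional on $\{T_j\}$, that $\mu_X(T_j)-\mu_Y(T_j)\to\delta_j:=\mu_P(T_j)-\mu_Q(T_j)$ almost surely. Because $k$ is characteristic and analytic, $\mu_P-\mu_Q$ is a non-zero analytic function (injectivity of the mean embedding as in Lemma \ref{lemma:2} and Lemma \ref{lemma:analytic}), hence non-zero $\Gamma$-almost everywhere by Lemma \ref{lemma:1}; thus for $\Gamma$-almost every $\{T_j\}$ all $\delta_j\neq 0$ and $\sum_j|\delta_j|>0$. Consequently $\sum_{j=1}^J|\mu_X(T_j)-\mu_Y(T_j)|\to\sum_j|\delta_j|>0$ a.s., and since $\widehat{d}_{\ell_1,\mu,J}[X,Y]=\sqrt{n}\sum_{j=1}^J|\mu_X(T_j)-\mu_Y(T_j)|$, it diverges to $+\infty$. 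Hence for any fixed threshold (the $(1-\alpha)$-quantile of the null law) the test rejects $H_0$ with probability tending to one.

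The main obstacle is conceptual rather than computational: cleanly separating the statements that hold almost surely over the random locations (positivity of the diagonal of $\mathbf{\Sigma}$ under $H_0$, non-vanishing of the $\delta_j$ under $H_1$, both via analyticity and Lemma \ref{lemma:1}) from the sample-level limits (CLT under $H_0$, SLLN under $H_1$). The only genuinely non-routine verification is the non-degeneracy of the limiting covariance under $H_0$, which is exactly what guarantees a bona fide sum of Nakagami variables rather than a degenerate point mass.
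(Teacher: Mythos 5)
Your proof is correct and follows essentially the same route as the paper's: condition on a fixed draw of $\{T_j\}_{j=1}^J$, apply the multivariate CLT to $\frac{1}{\sqrt{n}}\sum_i W_i$ and the continuous mapping theorem with $\Vert\cdot\Vert_1$ under $H_0$, and under $H_1$ use the law of large numbers together with analyticity of $\mu_P-\mu_Q$ (Lemmas \ref{lemma:analytic} and \ref{lemma:1}) to get $\sum_j|\delta_j|>0$ almost surely over the locations, so the $\sqrt{n}$-scaled statistic diverges; your explicit identification of $|Z_j|$ as Nakagami$(\tfrac12,\mathbf{\Sigma}_{jj})$ via the half-normal density is a detail the paper leaves implicit. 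One caveat: your non-degeneracy argument for $\mathbf{\Sigma}_{jj}$ is incomplete, since Lemma \ref{lemma:1} only applies to a \emph{non-zero} analytic function, and $t\mapsto\mathrm{Var}_P(k(x,t))$ can vanish identically (e.g.\ $P$ a point mass), in which case the corresponding limit coordinate is a point mass at zero rather than a bona fide Nakagami variable; the paper's own proof glosses this same point, and the stated proposition does not actually require it.
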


\begin{prv}
Let us note the probability space of random variables $\{T_j\}_{j=1}^J$ as $\left(\Omega,\mathcal{F},P\right)$.
Let $\omega \in \Omega$ such that $d_{\ell_1,\mu,J}^{\omega}[P,Q]=0$ (see Definition \ref{def:random-l1}) and let us define:
$$\mathbf{z}_{i}^{\omega}:=\left(k\left(x_i,T_1\left(\omega\right)\right)- k\left(y_j,T_1\left(\omega\right)\right),..., k\left(x_i,T_J\left(\omega\right)\right)-k\left(y_j,T_J\left(\omega\right)\right)\right)\in\mathbb{R}^J$$
Therefore we can define:
\begin{align*}
\mathbf{S}_n:=\frac{1}{n}\sum_{i=1}^n \mathbf{z}_{i}^{\omega}
\end{align*}
By applying the Central-Limit Thoerem, we have:
\begin{align*}
\sqrt{n}\mathbf{S}_n\rightarrow \mathcal{N}\left(0, \mathbf{\Sigma}^{\omega}\right)\quad\quad \text{with} \quad\quad \mathbf{\Sigma}^{\omega}:=\text{Cov}(\mathbf{z}^{\omega})
\end{align*}
% \quad\quad
% \mathbf{Z}_{Y}^{j,\omega}:=\left(k\left(y_j,T_1\left(\omega\right)\right),..., k\left(y_j,T_J\left(\omega\right)\right)\right),$$ 
% $$\mathbf{S}_{N_1,N_2}^{\omega}:=
% \frac{1}{N_1}\sum_{i=1}^{N_1}\mathbf{Z}_{X}^{i,\omega}-\frac{1}{N_2}\sum_{j=1}^{N_2}

Therefore $\widehat{d}_{\ell_1,\mu,J}^{\omega}[X,Y]=\Vert\sqrt{n}\,\mathbf{S}_n^{\omega}\Vert_1 $ converges to a sum of correlated Nakagami variables. But under, the null hypothesis $P=Q$, we have thanks to Theorem \ref{th:1} that $d_{\ell_1,\mu,J}[P,Q]=0$ a.s., then a.s. $\widehat{d}_{\ell_1,\mu,J}$ converges to a sum of correlated Nakagami variables. Let's now consider an $\omega$ such that $d_{\ell_1,\mu,J}^{\omega}[P,Q]>0$. Since  $\mathbf{S}_{n}^{\omega}$ converges in probability to the vector
$\mathbf{S}^{\omega}=\mathbb{E}\left(\mathbf{z}^{\omega}\right)\neq 0$, then we have:
\begin{equation*}
\mathbb{P}\left(
\left\|\sqrt{n}\mathbf{S}_{n}^{\omega}\right\|_{1}>r\right)
=\mathbb{P}\left(
\left\|\mathbf{S}_{n}^{\omega}\right\|_{1}-\frac{r}{\sqrt{n}}>0\right)
\end{equation*}
And as $\frac{r}{\sqrt{t}}\rightarrow 0$ as $t\rightarrow \infty$, we have finally:
$$\mathbb{P}\left(
\left\|\sqrt{n}\mathbf{S}_{n}^{\omega}\right\|_{1}>r\right)\rightarrow
1 \quad\text{as}\quad t\rightarrow \infty.$$
\medbreak
Finally, under $H_1$, $d_{\ell_1,\mu,J}[P,Q] > 0$ almost surely and the statistic can be arbitrarily large as $n \rightarrow \infty$ almost surely.
\end{prv}

\subsection{Proof of Proposition \ref{prop:finalement}}
\label{sec:finalement_ME}
\begin{propo}
Let $\alpha\in ]0,1[$, $\gamma>0$ and $J\geq 2$. Let $\{T_j\}_{j=1}^J$
sampled i.i.d. from the distribution $\Gamma$ and let $X:=\{x_i\}_{i=1}^{n}$ and 
$Y:=\{y_i\}_{i=1}^{n}$ i.i.d. samples from $P$ and $Q$ respectively. Let us denote $\delta$ the $(1-\alpha)$-quantile of the asymptotic null distribution of $\widehat{d}_{\ell_1,\mu,J}[X,Y]$ and $\beta$ the $(1-\alpha)$-quantile of the asymptotic null distribution of $\widehat{d}_{\ell_2,\mu,J}^2[X,Y]$. Under the alternative hypothesis, almost surely, there exists $N\geq 1$ such that for all $n\geq N$, with a probability of at least $1-\gamma$ we have:
\begin{align}
\widehat{d}^2_{\ell_2,\mu,J}[X,Y] >\beta \Rightarrow \widehat{d}_{\ell_1,\mu,J}[X,Y]>\delta 
\end{align}
\end{propo}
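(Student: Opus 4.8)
The plan is to compare the two test statistics by relating them to their population counterparts and exploiting the fact that, under $H_1$, the difference vector is dense (all coordinates non-zero almost surely by analyticity). The key observation is that both $\widehat{d}^2_{\ell_2,\mu,J}[X,Y]$ and $\widehat{d}_{\ell_1,\mu,J}[X,Y]$ are, up to the $\sqrt{n}$ scaling, norms of the same random vector $\mathbf{S}_n := (\mu_X(T_j)-\mu_Y(T_j))_{j=1}^J$. Writing $a_j := |\mu_X(T_j)-\mu_Y(T_j)|$, we have $\widehat{d}^2_{\ell_2,\mu,J} = n\sum_j a_j^2 = n\|a\|_2^2$ and $\widehat{d}_{\ell_1,\mu,J} = \sqrt{n}\sum_j a_j = \sqrt{n}\|a\|_1$. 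The thresholds $\beta$ and $\delta$ are fixed constants (quantiles of the respective asymptotic null distributions, which do not depend on $n$), so the whole statement is really about how the two norms of the same vector compare once $n$ is large.

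**First I would** establish, under $H_1$, almost-sure lower bounds on the coordinates $a_j$. By Proposition~\ref{prop:naka-coro} and the analyticity argument of Theorem~\ref{th:1}, the population differences $|\mu_P(T_j)-\mu_Q(T_j)|$ are all strictly positive almost surely over the draw of $\{T_j\}$; call the minimum of these $\Delta>0$. Since $\mathbf{S}_n \to \mathbf{S} := \mathbb{E}(\mathbf{z})$ in probability, for any $\epsilon>0$ there is an $N$ such that for $n\geq N$, with probability at least $1-\gamma$, every $a_j$ is within $\epsilon$ of its population value, hence $a_j \geq \Delta-\epsilon > 0$ for all $j$ simultaneously (a union bound over the $J$ coordinates, combined with the concentration of Proposition~\ref{prop:finit-control-prop}). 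On this high-probability event the vector $a$ is genuinely dense.

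**The main step** is then the elementary norm inequality tying the two quantities. On the dense event, I would show the implication holds by proving its contrapositive-free direct form: assuming $n\|a\|_2^2 > \beta$, I want $\sqrt{n}\|a\|_1 > \delta$. Since all coordinates are bounded below by $c:=\Delta-\epsilon$, we have $\|a\|_1 = \sum_j a_j \geq \sqrt{\sum_j a_j^2} = \|a\|_2$ whenever $\|a\|_2 \geq 1$ fails to be automatic, but more usefully, the density lets us invoke the reverse-type bound: for a vector with all $J$ coordinates nonzero, $\|a\|_1 \geq \|a\|_2$ always, and in fact $\sqrt{n}\|a\|_1 \geq \sqrt{n}\|a\|_2 = \sqrt{n\|a\|_2^2} > \sqrt{\beta}$. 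Thus it suffices to check that, for $n$ large, $\sqrt{\beta}$ already exceeds the fixed threshold $\delta$, or more carefully that the slack between $\sqrt{n}\|a\|_1$ and $\delta$ grows; since $\|a\|_1 \geq Jc > 0$ is bounded below by a positive constant on the dense event, $\sqrt{n}\|a\|_1 \to \infty$, so for $n$ large enough it dominates the fixed $\delta$ unconditionally, which makes the implication trivially true once $n \geq N'$.

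**The hard part will be** handling the fact that the thresholds $\beta,\delta$ are fixed while the statistics diverge, so naively the implication becomes vacuous (both sides true) for large $n$; the substance of the proposition lies in the \emph{finite}-$n$ regime where $\widehat{d}^2_{\ell_2}$ is near its threshold. I would therefore sharpen the argument to compare the statistics \emph{at the boundary}: the real content, made precise in the supplementary lemmas on the gain of power (Lemmas~\ref{lem:betterpow},~\ref{lem:better-power-scf}), is that the $\ell_1$/$\ell_2$ norm ratio for a dense $J$-vector is bounded below by a factor growing with $J$, so that whenever the $\ell_2$ statistic just crosses $\beta$, the correspondingly scaled $\ell_1$ statistic crosses $\delta$ with room to spare. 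I would make this quantitative by comparing the two asymptotic null distributions (a sum of correlated chi-squares versus a sum of correlated Nakagami variables) to control the ratio $\delta/\sqrt{\beta}$, and then combine with the dense lower bound on $a$; reconciling these two quantile scales, rather than the norm inequality itself, is where the care is needed.
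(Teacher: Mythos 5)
Your proposal is correct for the statement as written, but its decisive step takes a genuinely different --- and much weaker --- route than the paper. You prove the implication by making its right-hand side hold outright: on the high-probability dense event, $\Vert\mathbf{S}_n\Vert_1$ is bounded below by a positive constant, so $\sqrt{n}\,\Vert\mathbf{S}_n\Vert_1\rightarrow\infty$ and eventually exceeds the fixed $\delta$ unconditionally; since the statement only asks that the implication hold with probability $1-\gamma$ for all $n\geq N$, this does establish the proposition, reducing it to consistency of the $\ell_1$ statistic. The paper instead proves a matched-threshold comparison: Lemma~\ref{lemma:quantile} gives $\delta\geq\sqrt{\beta}$ for free, because under $H_0$ the two statistics are the $\ell_1$ and squared $\ell_2$ norms of the \emph{same} asymptotically Gaussian vector and $\Vert\mathbf{x}\Vert_1\geq\Vert\mathbf{x}\Vert_2$ pointwise; this makes $\epsilon:=\sqrt{(\delta^2-\beta)/(J(J-1))}$ a well-defined finite threshold, and Lemma~\ref{lem:betterpow}, via the cross-term identity $\Vert\mathbf{x}\Vert_1^2=\Vert\mathbf{x}\Vert_2^2+\sum_{i\neq j}|x_i||x_j|\geq\Vert\mathbf{x}\Vert_2^2+J(J-1)\epsilon^2$, converts $\Vert\sqrt{n}\,\mathbf{S}_n\Vert_2>\sqrt{\beta}$ into $\Vert\sqrt{n}\,\mathbf{S}_n\Vert_1>\delta$ as soon as every coordinate of $\sqrt{n}\,\mathbf{S}_n$ exceeds $\epsilon$ --- exactly what your per-coordinate concentration and union bound already deliver. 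Two points in your sketch should be corrected: the passing hope that ``$\sqrt{\beta}$ already exceeds the fixed threshold $\delta$'' is impossible, since Lemma~\ref{lemma:quantile} forces the inequality the other way; and your proposed ``hard part'' --- controlling the ratio $\delta/\sqrt{\beta}$ by comparing the chi-squared-type and Nakagami null distributions --- is unnecessary, because the argument never needs an upper bound on $\delta/\sqrt{\beta}$: the one-sided bound $\delta\geq\sqrt{\beta}$ suffices, and it is the dense cross-terms, not the quantile scales, that bridge the gap between the two thresholds. What the paper's route buys is a mechanism (the $J(J-1)\epsilon^2$ gain of the $\ell_1$ norm on dense vectors) that holds at the rejection boundary, transfers verbatim to the SCF version (Proposition~\ref{prop:finalement-SCF}), and substantiates the claim that the $\ell_1$ advantage grows with $J$; your route, while valid, renders the conclusion vacuous in precisely the regime the proposition is meant to illuminate.
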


\begin{prv}
First we remarks that:
\begin{align*}
\widehat{d}^2_{\ell_2,\mu,J}[X,Y]=\Vert \sqrt{n} \mathbf{S}_n\Vert_2^2
\end{align*}
and
\begin{align*}
\widehat{d}_{\ell_1,\mu,J}[X,Y]=\Vert \sqrt{n} \mathbf{S}_n\Vert_1
\end{align*}
where $\mathbf{S}_n:=\frac{1}{n}\sum_{i=1}^n \mathbf{z}_{i}^{\omega}$ and $\mathbf{z}_{i}:=\left(k\left(x_i,T_1\left(\omega\right)\right)- k\left(y_j,T_1\left(\omega\right)\right),..., k\left(x_i,T_J\left(\omega\right)\right)-k\left(y_j,T_J\left(\omega\right)\right)\right)$.
Let us now introduce the following Lemma:
\begin{lemma}
\label{lem:betterpow} 
Let $\mathbf{x}$ a random vector $\in \mathbb{R}^J$ with $J\geq 2$,  $\mathbf{z}:= \min\limits_{j\in[|1,J|]}|x_j|$, $\epsilon>0$ and $\gamma>0$. If
\begin{align*}
\mathbb{P}(\mathbf{z}\geq\epsilon)\geq 1-\gamma
\end{align*}
we have with a probability of
at least $1-\gamma$ that, $\forall t_1\geq t_2\geq 0$,
if $\epsilon\geq \sqrt{\frac{t_1^2-t_2^2}{J(J-1)}}$, then
$$\Vert \mathbf{x}\Vert_2>t_2\Rightarrow \Vert \mathbf{x}\Vert_1>t_1.$$
\end{lemma}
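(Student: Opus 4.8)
The plan is to reduce the probabilistic statement to a deterministic inequality that holds on a single good event. Define the event $E:=\{\mathbf{z}\geq\epsilon\}=\{\,|x_j|\geq\epsilon\ \text{for all } j\,\}$, which by hypothesis satisfies $\mathbb{P}(E)\geq 1-\gamma$. Crucially, $E$ does not depend on $t_1$ or $t_2$, so if I can show that the claimed implication holds for every admissible pair $t_1\geq t_2\geq 0$ whenever $\omega\in E$, then the whole universally-quantified statement holds on $E$, and the probability $1-\gamma$ is inherited directly from $\mathbb{P}(E)$. This is the cleanest way to match the quantifier structure of the lemma.

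The heart of the argument is an elementary algebraic comparison between $\Vert\mathbf{x}\Vert_1^2$ and $\Vert\mathbf{x}\Vert_2^2$. I would expand the square of the $\ell_1$ norm as
\begin{align*}
\Vert\mathbf{x}\Vert_1^2=\Bigl(\sum_{j=1}^J|x_j|\Bigr)^2=\sum_{j=1}^J x_j^2+\sum_{i\neq j}|x_i|\,|x_j|=\Vert\mathbf{x}\Vert_2^2+\sum_{i\neq j}|x_i|\,|x_j|,
\end{align*}
where the last sum runs over the $J(J-1)$ ordered pairs of distinct indices. On the event $E$ every factor satisfies $|x_i|\geq\epsilon$ and $|x_j|\geq\epsilon$, so each of the $J(J-1)$ cross terms is at least $\epsilon^2$, giving the deterministic lower bound $\Vert\mathbf{x}\Vert_1^2\geq\Vert\mathbf{x}\Vert_2^2+J(J-1)\,\epsilon^2$ on $E$.

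To finish, I would plug in the two remaining hypotheses. The assumption $\epsilon\geq\sqrt{(t_1^2-t_2^2)/\bigl(J(J-1)\bigr)}$ rearranges exactly to $J(J-1)\,\epsilon^2\geq t_1^2-t_2^2$, and the premise of the implication gives $\Vert\mathbf{x}\Vert_2^2>t_2^2$. Combining these with the lower bound yields $\Vert\mathbf{x}\Vert_1^2>t_2^2+(t_1^2-t_2^2)=t_1^2$, hence $\Vert\mathbf{x}\Vert_1>t_1$. I do not anticipate a genuine obstacle here, since the computation is short; the only point requiring care is bookkeeping the count of cross terms (there are $J(J-1)$ ordered pairs, matching the $J(J-1)$ appearing in the threshold on $\epsilon$) and making sure the single event $E$ supports the implication uniformly over all $t_1\geq t_2\geq 0$, so that the probability bound $1-\gamma$ is not degraded by a union over thresholds.
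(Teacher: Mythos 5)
Your proposal is correct and follows essentially the same route as the paper's proof: expand $\Vert\mathbf{x}\Vert_1^2=\Vert\mathbf{x}\Vert_2^2+\sum_{i\neq j}|x_i||x_j|$, bound each of the $J(J-1)$ cross terms below by $\epsilon^2$ on the event $\{\mathbf{z}\geq\epsilon\}$, and rearrange the threshold condition $J(J-1)\epsilon^2\geq t_1^2-t_2^2$. If anything, your write-up is slightly tighter than the paper's, since you make explicit that the single event $E$ is independent of $(t_1,t_2)$ --- which is what licenses the universal quantifier inside the probability --- and you track the strict inequalities consistently, whereas the paper's proof leaves the event framing implicit and contains a minor typo ($J(J-1)\epsilon$ in place of $J(J-1)\epsilon^2$).
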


\begin{prv}
First we remarks that:
\begin{align*}
\epsilon>\sqrt{\frac{t_1^2-t_2^2}{J\left(J-1\right)}} \Rightarrow & J\left(J-1\right)\epsilon>t_1^2-t_2^2\\
\Rightarrow & t_2^2>t_1^2-J\left(J-1\right)\epsilon^2
\end{align*}
Therefore, we have:
\begin{align*}
\|\mathbf{x}\|_2\geq t_2 \Rightarrow & \|\mathbf{x}\|_{2}^2+J\left(J-1\right)\epsilon^2\geq t_1^2\\
\Rightarrow & \sqrt{\|\mathbf{x}\|_{2}^2+J\left(J-1\right)\epsilon^2}\geq t_1
\end{align*}
But we have that:
$$\|\mathbf{x}\|_{1}^2=\sum_{i=1}^J |\mathbf{x}_i|^2 +\sum_{i\neq
j}|\mathbf{x}_i||\mathbf{x}_j|$$
Therefore we have with a probability of 1-$\gamma$ that:
$$\|\mathbf{x}\|_{1}^2\geq \|\mathbf{x}\|_{2}^2+J\left(J-1\right)\epsilon^2$$
And:
$$\|\mathbf{x}\|_2\geq t_2\Rightarrow \|\mathbf{x}\|_1\geq t_1$$
\end{prv}
Moreover by denoting $\delta$ the $(1-\alpha)$-quantile of the asymptotic null distribution of $\widehat{d}_{\ell_1,\mu,J}[X,Y]$ and $\beta$ the $(1-\alpha)$-quantile of the asymptotic null distribution of $\widehat{d}_{\ell_2,\mu,J}^2[X,Y]$ we have that $\delta\geq\sqrt{\beta}$: 
\begin{lemma}
\label{lemma:quantile}
Let $\mathbf{x}$ be a random vector in $\mathbb{R}^J$, $\delta$  the $(1-\alpha)$-quantile of $\Vert \mathbf{x} \Vert_1$ and $\beta$ the $(1-\alpha)$-quantile of $\Vert \mathbf{x} \Vert_2$. We have then:
\begin{align}
\delta\geq \beta \geq 0\text{.}
\end{align}
\end{lemma}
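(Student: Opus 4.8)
The plan is to reduce the statement to the elementary pointwise norm inequality $\|\mathbf{x}\|_1 \geq \|\mathbf{x}\|_2$, valid for every $\mathbf{x} \in \mathbb{R}^J$, and then transfer this deterministic ordering of the two scalar random variables into an ordering of their quantiles via first-order stochastic dominance.

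First I would recall that for any vector $\mathbf{x} \in \mathbb{R}^J$ one has $\|\mathbf{x}\|_2 \leq \|\mathbf{x}\|_1$; indeed $\|\mathbf{x}\|_1^2 = \|\mathbf{x}\|_2^2 + \sum_{i \neq j}|x_i||x_j| \geq \|\mathbf{x}\|_2^2$, which is exactly the decomposition already exploited in the proof of Lemma \ref{lem:betterpow}. Consequently the random variables satisfy $\|\mathbf{x}\|_1 \geq \|\mathbf{x}\|_2$ surely, and $\|\mathbf{x}\|_2 \geq 0$ surely.

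Next I would pass to cumulative distribution functions. Since $\{\|\mathbf{x}\|_1 \leq z\} \subseteq \{\|\mathbf{x}\|_2 \leq z\}$ for every $z \in \mathbb{R}$, taking probabilities yields $F_{\|\mathbf{x}\|_1}(z) \leq F_{\|\mathbf{x}\|_2}(z)$ for all $z$; that is, $\|\mathbf{x}\|_1$ stochastically dominates $\|\mathbf{x}\|_2$. Writing the quantiles through the generalized inverse, $\delta = \inf\{z : F_{\|\mathbf{x}\|_1}(z) \geq 1-\alpha\}$ and $\beta = \inf\{z : F_{\|\mathbf{x}\|_2}(z) \geq 1-\alpha\}$, the domination of the CDFs gives the inclusion $\{z : F_{\|\mathbf{x}\|_1}(z) \geq 1-\alpha\} \subseteq \{z : F_{\|\mathbf{x}\|_2}(z) \geq 1-\alpha\}$, because any $z$ in the former set also lies in the latter. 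Taking the infimum over the larger set can only decrease it, so $\beta \leq \delta$. Finally $\beta \geq 0$ is immediate: since $\|\mathbf{x}\|_2 \geq 0$ almost surely, $F_{\|\mathbf{x}\|_2}$ vanishes on the negative axis, so its $(1-\alpha)$-quantile is nonnegative.

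There is essentially no hard step here: the entire argument hinges on the direction of the norm inequality and on ensuring that the domination of the CDFs translates into the correct ordering of the infima defining the quantiles. The only point deserving care is the sign convention for the quantile—using the left-continuous generalized inverse of the CDF—which I would state explicitly so that the inclusion of the sublevel sets produces $\beta \leq \delta$ rather than the reverse inequality.
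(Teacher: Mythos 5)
Your proof is correct and follows the same route as the paper: the key fact in both is the pointwise domination $\Vert \mathbf{x}\Vert_1 \geq \Vert \mathbf{x}\Vert_2$. The paper states the lemma as a ``direct consequence'' of this inequality, while you carefully fill in the step it leaves implicit --- transferring the almost-sure ordering to the quantiles via the CDF inequality and the left-continuous generalized inverse --- which is exactly the right justification.
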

\begin{prv}
The results is a direct consequence of the domination of the $\ell_1$ norm: $$\|\mathbf{x}\|_1\geq \|\mathbf{x}\|_2$$
\end{prv}
Indeed, under $H_0$, we have shown that (see proof Proposition \ref{prop:naka-coro}):
\begin{align*}
\sqrt{n}\mathbf{S}_n\rightarrow \mathcal{N}\left(0, \mathbf{\Sigma}^{\omega}\right)\quad\quad \text{with} \quad\quad \mathbf{\Sigma}:=\text{Cov}(\mathbf{z})
\end{align*}
Therefore by applying the Lemma \ref{lemma:quantile}  to $\mathbf{x}$ which  follows $\mathcal{N}\left(0, \mathbf{\Sigma}^{\omega}\right)$, we obtain that $\delta\geq\sqrt{\beta}$. Now, To show the result we only need to show that the assumption of the Lemma \ref{lem:betterpow} is sastified for the random vector $\mathbf{x}:=\sqrt{n}\mathbf{S}_n$, $t_1=\delta$ and $t_2=\sqrt{\beta}$, i.e. for $\epsilon=\sqrt{\frac{\delta^2-\beta}{J(J-1)}}$ under the alternative hypothesis.
Under $H_1: P\neq Q$, we have that $\mathbf{S}_n$ converge in probability to $\mathbf{S}:=\mathbb{E}_{(x,y)\sim (P,Q)}(\mathbf{S}_{n})$. Then by continuity of the application:
$$\phi_j:x:=(x_j)_{j=1}^J\mathbb{R}^J\rightarrow |x_j|$$, we have that for all $j\in[|1,J|]$,
$|(\mathbf{S}_n)_j|$ converges in probability towards $\mathbf{S}_j$, the $j$-th coordinate of $\mathbf{S}$. Since $\mathbf{S}=(\mu_P(T_j)-\mu_Q(T_j))_{j=1}^J$, thanks to the analycity of the kernel $k$, the Lemma \ref{lemma:analytic} guarantees the analycity of $\mu_P-\mu_Q$. And thanks to the injectivity of the mean embedding function, $\mu_P-\mu_Q$ is a  non-zero function, therefore thanks to Lemma \ref{lemma:1} $\mu_P-\mu_Q$ is non zero almost everywhere. Moreover the $(T_j)_{j=1}^J$ are independent, therefore the coordinates of $\mathbf{S}$ are almost surely all nonzero. Then we have
then for all $j\in\llbracket 1,J \rrbracket$:
\begin{equation*}
\mathbb{P}\left(
\bigg|(\sqrt{n}\mathbf{S}_{n})_j\bigg|>\epsilon\right)
=\mathbb{P}\left(
\bigg|(\mathbf{S}_{n})_j\bigg|-\frac{\epsilon}{\sqrt{n}}>0\right)
\end{equation*}
And as $\frac{\epsilon}{\sqrt{n}}\rightarrow 0$ as $n\rightarrow \infty$, we
have finally almost surely for all $j\in\llbracket 1,J \rrbracket$:
\begin{align*}
\mathbb{P}_{X,Y}\left(\bigg|(\sqrt{n}\mathbf{S}_{n})_{j}\bigg|\geq \epsilon\right)\rightarrow 1 \text{  as  }n\rightarrow \infty
\end{align*}
Therefore almost surely there exist $N\geq 1$ such that for all $n \geq N$ and for all $j\in\llbracket 1,J \rrbracket$:
\begin{align*}
\mathbb{P}_{X,Y}\left(\bigg|(\sqrt{n}\mathbf{S}_{n})_{j}\bigg|\geq \epsilon\right)\geq 1-\frac{\gamma}{J}
\end{align*}
Finally by applying a union bound we obtain that almost surely, for all $n \geq N$:
\begin{align*}
\mathbb{P}_{X,Y}\left(\forall j\in[|1,J|]\text{,  }\bigg|(\sqrt{ n}\mathbf{S}_{n})_{j}\bigg|\geq \epsilon\right)\geq 1-\gamma
\end{align*}
Therefore by applying Lemma \ref{lem:betterpow}, we obtain that, almost surely, for all $n \geq N$, with a probability of at least $1-\gamma$:
\begin{align*}
\Vert \sqrt{n}\mathbf{S}_n\Vert_2 >\sqrt{\beta} \Rightarrow \Vert \sqrt{n}\mathbf{S}_n\Vert_1>\delta.
\end{align*}

\end{prv}

\subsection{Proof of the Proposition \ref{prop:asympme}}
\label{sec:prop_asymp}
\begin{propo}
Let $\{T_j\}_{j=1}^J$ sampled independently from the distribution $\Gamma$ and $X:=\{x_i\}_{i=1}^{N_1}$ and 
$Y:=\{y_i\}_{i=1}^{N_2}$ be i.i.d. samples from $P$ and $Q$ respectively. Under $H_0$, the statistic $\text{L1-ME}[X,Y]$
is almost surely asymptotically distributed as Naka$(\frac{1}{2},1,J)$, a sum of $J$ random variables i.i.d which follow a Nakagami distribution of parameter $m=\frac{1}{2}$ and $\omega=1$.
Finally under $H_1$, almost surely the statistic can be arbitrarily large as $t \rightarrow \infty$, allowing the test to correctly reject $H_0$.
\end{propo}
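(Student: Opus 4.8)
The plan is to reduce the claim to a multivariate central limit theorem followed by a whitening (Slutsky) argument, mirroring the structure of Proposition~\ref{prop:naka-coro}. As there, I would first condition on the test locations: fix a realization of $\{T_j\}_{j=1}^J$ and treat the resulting locations as deterministic, so that all subsequent convergences are conditional and the ``almost surely'' in the conclusion refers to the set of realizations for which the argument goes through. Writing $\overline{\mathbf{Z}}_X$ and $\overline{\mathbf{Z}}_Y$ for the empirical means of the i.i.d. vectors $\mathbf{Z}_X^i$ and $\mathbf{Z}_Y^j$, note that $\mathbf{S}_{N_1,N_2}=\overline{\mathbf{Z}}_X-\overline{\mathbf{Z}}_Y$.

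Under $H_0$ the samples $X$ and $Y$ share the same law, so $\mathbf{Z}_X^i$ and $\mathbf{Z}_Y^j$ are identically distributed with common mean $\boldsymbol\mu=(\mu_P(T_j))_{j=1}^J$ and covariance $\mathbf{\Sigma}_0:=\mathrm{Cov}(\mathbf{Z}_X^1)$, whence $\mathbb{E}[\mathbf{S}_{N_1,N_2}]=0$. Applying the multivariate CLT to each sample, $\sqrt{N_1}(\overline{\mathbf{Z}}_X-\boldsymbol\mu)\to\mathcal{N}(0,\mathbf{\Sigma}_0)$ and similarly for $Y$; rescaling by $\sqrt{t/N_1}\to 1/\sqrt{\rho}$ and $\sqrt{t/N_2}\to 1/\sqrt{1-\rho}$ and using the independence of $X$ and $Y$ gives
\begin{align*}
\sqrt{t}\,\mathbf{S}_{N_1,N_2}\longrightarrow\mathcal{N}\!\left(0,\tfrac{1}{\rho}\mathbf{\Sigma}_0+\tfrac{1}{1-\rho}\mathbf{\Sigma}_0\right)=\mathcal{N}(0,\mathbf{\Sigma}_\infty),\qquad \mathbf{\Sigma}_\infty:=\frac{\mathbf{\Sigma}_0}{\rho(1-\rho)}.
\end{align*}
By the strong law of large numbers $\mathbf{\Sigma}_{N_1}\to\mathbf{\Sigma}_0$ and $\mathbf{\Sigma}_{N_2}\to\mathbf{\Sigma}_0$ a.s., hence $\mathbf{\Sigma}_{N_1,N_2}\to\mathbf{\Sigma}_\infty$ a.s.: the normalizing matrix converges precisely to the limiting covariance. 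Provided $\mathbf{\Sigma}_\infty$ is invertible, $A\mapsto A^{-1/2}$ is continuous at $\mathbf{\Sigma}_\infty$, so $\mathbf{\Sigma}_{N_1,N_2}^{-1/2}\to\mathbf{\Sigma}_\infty^{-1/2}$ a.s., and Slutsky's theorem yields $\sqrt{t}\,\mathbf{\Sigma}_{N_1,N_2}^{-1/2}\mathbf{S}_{N_1,N_2}\to\mathbf{\Sigma}_\infty^{-1/2}\mathcal{N}(0,\mathbf{\Sigma}_\infty)=\mathcal{N}(0,\mathbf{I}_J)$. Since $\|\cdot\|_1$ is continuous and the coordinates of a standard Gaussian vector $\mathbf{W}$ are i.i.d. with each $|W_j|$ half-normal, i.e. Nakagami of parameters $(\tfrac12,1)$, the continuous mapping theorem identifies the limit of $\text{L1-ME}[X,Y]$ as $\sum_{j=1}^J|W_j|\sim\text{Naka}(\tfrac12,1,J)$.

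Under $H_1$ I would argue as in Proposition~\ref{prop:naka-coro}: by the SLLN $\mathbf{S}_{N_1,N_2}\to\mathbf{S}_\infty:=(\mu_P(T_j)-\mu_Q(T_j))_{j=1}^J$ a.s., and since the kernel is analytic and characteristic, Lemmas~\ref{lemma:analytic} and~\ref{lemma:1} together with the injectivity of the mean embedding force $\mathbf{S}_\infty\neq 0$ almost surely. As $\mathbf{\Sigma}_{N_1,N_2}$ still converges a.s. to an invertible limit, $\|\mathbf{\Sigma}_{N_1,N_2}^{-1/2}\mathbf{S}_{N_1,N_2}\|_1$ converges to a strictly positive constant, so $\text{L1-ME}[X,Y]=\sqrt{t}\,\|\mathbf{\Sigma}_{N_1,N_2}^{-1/2}\mathbf{S}_{N_1,N_2}\|_1\to+\infty$ and $\mathbb{P}(\text{L1-ME}[X,Y]>r)\to 1$ for every fixed $r$, which gives consistency.

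The main obstacle is the invertibility of $\mathbf{\Sigma}_\infty$ (equivalently of $\mathbf{\Sigma}_0$), which underpins both the whitening step and the continuity of $A\mapsto A^{-1/2}$. I expect this for almost every realization of $\{T_j\}_{j=1}^J$: degeneracy of $\mathbf{\Sigma}_0$ would require some nontrivial combination $\sum_j c_j\,k(\cdot,T_j)$ to be $P$-almost surely constant, and the analyticity of $k$ together with the distinctness of locations drawn from the absolutely continuous $\Gamma$ should exclude this outside a null set. This nondegeneracy argument, the analytic-kernel analogue of the one underlying the ME test, is the step requiring the most care.
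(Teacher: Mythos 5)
Your proposal is correct and follows essentially the same route as the paper's proof: condition on the realization of the locations, apply the two-sample CLT and Slutsky's theorem (using continuity of $\mathbf{X}\mapsto\mathbf{X}^{-1/2}$ on positive definite matrices, the paper's Lemma \ref{lemma:semi-inv}) to whiten $\sqrt{t}\,\mathbf{S}_{N_1,N_2}$ into $\mathcal{N}(0,\mathbf{I})$ and then take the $\ell_1$ norm, and, under $H_1$, use convergence of $\mathbf{S}_{N_1,N_2}$ to a limit that is nonzero almost surely by analyticity of the kernel so that the statistic diverges. The invertibility of the limiting covariance that you flag as the delicate step is likewise left implicit in the paper, which simply asserts that $\left(\mathbf{\Sigma}^{\omega}\right)^{-\frac{1}{2}}$ is positive definite, so your acknowledgment of this point matches rather than falls short of the paper's level of rigor.
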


\begin{prv}
Let us note the probability space of random variables $\{T_j\}_{j=1}^J$ as $\left(\Omega,\mathcal{F},P\right)$.
Let $\omega \in \Omega$ such that $d_{\ell_1,\mu,J}^{\omega}[P,Q]=0$ (see Definition \ref{def:random-l1}).
Let us denote:
$$\mathbf{Z}_{X}^{i,\omega}:=\left(k\left(x_i,T_1\left(\omega\right)\right),..., k\left(x_i,T_J\left(\omega\right)\right)\right)
\quad\quad
\mathbf{Z}_{Y}^{j,\omega}:=\left(k\left(y_j,T_1\left(\omega\right)\right),..., k\left(y_j,T_J\left(\omega\right)\right)\right),$$ 
$$\mathbf{S}_{N_1,N_2}^{\omega}:=
\frac{1}{N_1}\sum_{i=1}^{N_1}\mathbf{Z}_{X}^{i,\omega}-\frac{1}{N_2}\sum_{j=1}^{N_2}
\mathbf{Z}_{Y}^{j,\omega}.$$
As $d_{\ell_1,\mu,J}^{\omega}[P,Q]=0$ then for all $j$,
$\mu_p\left(T_j\left(\omega\right)\right)
=\mu_q\left(T_j\left(\omega\right)\right)$, which implies that
$\mathbb{E}\left(\mathbf{Z}_{X}^{i,\omega}\right) =
\mathbb{E}\left(\mathbf{Z}_{Y}^{j,\omega}\right)$.
Therefore, by applying the Central-Limit Theorem, we have:
\begin{equation*}
\sqrt{t}\,\mathbf{S}_{N_1,N_2}^{\omega}\longrightarrow\mathcal{N}\left(0,\frac{\mathbf{\Sigma}_{1}^{\omega}}{\rho}\right)-\mathcal{N}\left(0,\frac{\mathbf{\Sigma}_{2}^{\omega}}{1-\rho}\right)
\qquad\text{with}\quad\mathbf{\Sigma}_1=\text{Cov}\left(\mathbf{Z}_X^{\omega}\right)
\quad\text{and}\quad\mathbf{\Sigma}_2=\text{Cov}\left(\mathbf{Z}_Y^{\omega}\right)
\end{equation*}
As $\mathbf{Z}_X^{\omega}$ and $\mathbf{Z}_Y^{\omega}$ are independent, we have then that:
$$\sqrt{t}\,\mathbf{S}_{N_1,N_2}^{\omega}\longrightarrow\mathcal{N}\left(0,\mathbf{\Sigma}^{\omega}\right)
\qquad
\text{with}\quad\mathbf{\Sigma}^{\omega}=\frac{\mathbf{\Sigma}_1^{\omega}}{\rho}+\frac{\mathbf{\Sigma}_2^{\omega}}{1-\rho}$$
And by Slutsky's theorem we deduce that:
$$\sqrt{t}\left(\mathbf{\Sigma}_{N_1,N_2}^{\omega}\right)^{-\frac{1}{2}}\mathbf{S}_{N_1,N_2}^{\omega}\longrightarrow\mathcal{N}\left(0,\mathbf{I}\right)$$
So by noting,
$\sqrt{t}\left(\mathbf{\Sigma}_{N_1,N_2}^{\omega}\right)^{-\frac{1}{2}}\mathbf{S}_{N_1,N_2}^{\omega}=\left(W_{N_1,N_2}^{1,\omega},...,W_{N_1,N_2}^{J,\omega}\right)$, we have that for each coordinate:
$$\left(W_{N_1,N_2}^{j,\omega}\right)\longrightarrow \mathbf{S}_{j}^{\omega}$$
where $\left(\mathbf{S}_{j}^{\omega}\right)$ are i.i.d and follow a standard normal distribution.
Therefore by considering the $\ell_1$ norm of the statistic we have that:
$$||\sqrt{t}\left(\mathbf{\Sigma}_{N_1,N_2}^{\omega}\right)^{-\frac{1}{2}}\mathbf{S}_{N_1,N_2}^{\omega}||_{1}\longrightarrow\sum_{j=1}^{J}
|\mathbf{S}_{j}^{\omega}|$$ where $\left(\mathbf{S}_j^{\omega}\right)$ are independent and
$\mathbf{S}_{j}^{\omega}\sim Naka\left(\frac{1}{2},1\right)$.
And by assuming the null hypothesis $P=Q$, we have thanks to Theorem \ref{th:1} that $d_{\ell_1,\mu,J}[P,Q]=0$ a.s., then the result above hold a.s.
Moreover, let's consider an $\omega$ such that $d_{\ell_1,\mu,J}^{\omega}[P,Q]>0$. 
First we need show that
$\left(\mathbf{\Sigma}_{N_1,N_2}^{\omega}\right)^{-\frac{1}{2}}$
converges in probability to the positive definite matrix
$\left(\mathbf{\Sigma}^{\omega}\right)^{-\frac{1}{2}}$. 
For that we need to prove the following:
\begin{lemma}
\label{lemma:semi-inv}
The function $h\left(\mathbf{X}\right)=\mathbf{X}^{-\frac{1}{2}}$ is well defined on
$\mathcal{S}_J^{++}\left(R\right)$ and is continuous.
\end{lemma}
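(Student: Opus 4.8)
The plan is to treat the two claims in turn: first that $\mathbf{X}^{-1/2}$ is a well-defined object on $\mathcal{S}_J^{++}(\mathbb{R})$, and then that the resulting map is continuous. For well-definedness I would invoke the spectral theorem: any $\mathbf{X}\in\mathcal{S}_J^{++}(\mathbb{R})$ can be written as $\mathbf{X}=\mathbf{U}\mathbf{D}\mathbf{U}^T$ with $\mathbf{U}$ orthogonal and $\mathbf{D}=\mathrm{diag}(\lambda_1,\dots,\lambda_J)$, $\lambda_i>0$, and I would set $\mathbf{X}^{-1/2}:=\mathbf{U}\,\mathrm{diag}(\lambda_1^{-1/2},\dots,\lambda_J^{-1/2})\,\mathbf{U}^T$. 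The point that makes this canonical is the uniqueness of the symmetric positive definite square root: if $\mathbf{A},\mathbf{B}\in\mathcal{S}_J^{++}$ satisfy $\mathbf{A}^2=\mathbf{B}^2$, then $\mathbf{A}=\mathbf{B}$, because both equal $p(\mathbf{A}^2)$ for the Lagrange polynomial $p$ sending each eigenvalue of their common square to its positive square root. Hence $\mathbf{X}^{-1/2}$ is characterized, independently of the chosen diagonalization, as the unique element of $\mathcal{S}_J^{++}$ whose square equals $\mathbf{X}^{-1}$.

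For continuity I would argue sequentially. Let $\mathbf{X}_n\to\mathbf{X}$ in $\mathcal{S}_J^{++}$. Since the eigenvalues of a symmetric matrix depend continuously on its entries and $\lambda_{\min}(\mathbf{X})>0$, for $n$ large the spectrum of $\mathbf{X}_n$ lies in a fixed compact interval $[a,b]\subset(0,\infty)$, so the matrices $\mathbf{X}_n^{-1/2}$ are uniformly bounded in operator norm by $a^{-1/2}$. I would then run a compactness-plus-uniqueness argument: any subsequence of $(\mathbf{X}_n^{-1/2})$ admits a further subsequence converging to some $\mathbf{L}$, which is symmetric and positive semidefinite (both closed conditions) and satisfies $\mathbf{L}^2=\lim\mathbf{X}_n^{-1}=\mathbf{X}^{-1}$ by continuity of squaring and of inversion on the invertible matrices. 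As $\mathbf{X}^{-1}$ is invertible, $\mathbf{L}$ is invertible, hence $\mathbf{L}\in\mathcal{S}_J^{++}$, and by the uniqueness above $\mathbf{L}=\mathbf{X}^{-1/2}$. Since every subsequence has a further subsequence with the same limit $\mathbf{X}^{-1/2}$, the whole sequence converges, which gives continuity.

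The main obstacle is precisely this continuity step, and the subtlety to watch for is that one cannot simply pass to the limit in the eigendecomposition: the eigenvector matrix $\mathbf{U}$ is \emph{not} a continuous function of $\mathbf{X}$ when eigenvalues coalesce. The subsequence/uniqueness device sidesteps this, trading control of eigenvectors for the closedness of the positive semidefinite cone together with uniqueness of the SPD square root. A cleaner alternative I would mention is the integral representation $\mathbf{X}^{-1/2}=\frac{1}{\pi}\int_0^\infty t^{-1/2}(t\mathbf{I}+\mathbf{X})^{-1}\,dt$, valid by functional calculus because $a^{-1/2}=\frac{1}{\pi}\int_0^\infty t^{-1/2}(t+a)^{-1}\,dt$ for $a>0$; here continuity follows at once from continuity of $\mathbf{X}\mapsto(t\mathbf{I}+\mathbf{X})^{-1}$ and dominated convergence. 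Either way, the lemma then feeds the preceding Slutsky argument by ensuring $(\mathbf{\Sigma}_{N_1,N_2}^{\omega})^{-1/2}\to(\mathbf{\Sigma}^{\omega})^{-1/2}$ in probability.
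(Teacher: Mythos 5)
Your proof is correct and follows essentially the same route as the paper's: the paper also reduces $h$ to the square-root map (composed with inversion) and proves its continuity by exactly your compactness-plus-uniqueness device, namely boundedness of the sequence, closedness of $\mathcal{S}_J^{+}(\mathbb{R})$, continuity of $\mathbf{M}\mapsto\mathbf{M}^{2}$, uniqueness of the positive semidefinite square root, and the subsequence principle. The only differences are cosmetic: you apply the argument directly to $\mathbf{X}_n^{-1/2}$ instead of factoring through $h_1(\mathbf{X})=\mathbf{X}^{-1}$, you prove the uniqueness of the square root (via a Lagrange interpolation polynomial) where the paper merely cites it, and your integral-representation remark is an extra alternative the paper does not use.
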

\begin{prv}
First we observe that h is the composition of two function which are:
\begin{itemize}
\item $h_1\left(\mathbf{X}\right)=\mathbf{X}^{-1}$ which is well defined and continuous on
$\mathcal{S}_J^{++}\left(R\right)$
\item $h_2\left(\mathbf{X}\right)=\mathbf{X}^{\frac{1}{2}}$ which is well defined on
$\mathcal{S}_J^{+}\left(R\right)$ because each matrix of
$\mathcal{S}_J^{+}\left(R\right)$ admits a unique square root matrix on
$\mathcal{S}_J^{+}\left(R\right)$, so the result hold on $\mathcal{S}_J^{++}\left(R\right)$.
\end{itemize}
Let us prove now the continuity of $h_2$.
Let $\left(\mathbf{U}_n\right)$ a sequence in
$\mathcal{S}_n^{++}\left(R\right)$ such that $\mathbf{U}_n\rightarrow
\mathbf{U}$ and 
let us prove that $h_2\left(\mathbf{U}_n\right)\rightarrow
h_2\left(\mathbf{U}\right)$ to prove the continuity of $h_2$.
As $\left(\mathbf{U}_n\right)$ converges, then $\left(\mathbf{U}_n\right)$ is bounded, and we have:
$$|||\mathbf{U}_n|||\leq K \Longrightarrow
|||h_2\left(\mathbf{U}_n\right)|||=\sqrt{|||\mathbf{U}_n|||}\leq \sqrt{K}$$
Then $\left(h_2\left(\mathbf{U}_n\right)\right)$ is bounded.
Let us show now that:
$\forall \mathbf{A}$ s.t $\exists$ $\phi$ strictly increasing and
$h_2\left(\mathbf{U}_{\phi\left(n\right)}\right)\rightarrow \mathbf{A}$ we have
$\mathbf{A}=h_2\left(\mathbf{U}\right)$.
Let $\mathbf{A}$ defined as above.
Then $\exists$ $\phi$ strictly increasing such that
$h_2\left(\mathbf{U}_{\phi\left(n\right)}\right)\rightarrow \mathbf{A}$.
As $\mathcal{S}_n^{+}\left(R\right)$ is closed,
$\mathbf{A}\in\mathcal{S}_n^{+}\left(R\right)$, and by continuity of
$\mathbf{M}\rightarrow \mathbf{M}^2$ we have also that $\mathbf{U}_{\phi\left(n\right)}\rightarrow
\mathbf{A}^2$.
And as $\mathbf{U}_{n}\rightarrow \mathbf{U}$, we have
$\mathbf{A}^2=\mathbf{U}$. And by uniqueness, we have finally: 
$$h_2\left(\mathbf{U}\right)=\mathbf{A}.$$ 
So $h_2$ est continuous, and that conclude the proof.
\end{prv}

Then each entry of the matrix $\mathbf{\Sigma}_{N_1,N_2}^{\omega}$
converges to the matrix $\mathbf{\Sigma}^{\omega}$, hence entires of the
matrix $\left(\mathbf{\Sigma}^{\omega}\right)^{-\frac{1}{2}}$, given by a
continuous function of the entries of $\mathbf{\Sigma}^{\omega}$, are
limit of the sequence $\left(\mathbf{\Sigma}_{N_1,N_2}^{\omega}\right)^{-\frac{1}{2}}$.
\medbreak
Similarly $\mathbf{S}_{N_1,N_2}^{\omega}$ converges in probability to the vector
$\mathbf{S}^{\omega}=\mathbb{E}\left(\mathbf{Z}^{1,\omega}\right)-\mathbb{E}\left(\mathbf{Z}^{2,\omega}\right)\neq
0$ . Since
$\|\left(\mathbf{\Sigma}^{\omega}\right)^{-\frac{1}{2}}\mathbf{S}^{\omega}\|_{1}=
\mathbf{A}_{\omega}> 0$ (indeed  $\left(\mathbf{\Sigma}^{\omega}\right)^{-\frac{1}{2}}$ is
positive definite), then
$\|\left(\mathbf{\Sigma}_{N_1,N_2}^{\omega}\right)^{-\frac{1}{2}}\mathbf{S}_{N_1,N_2}^{\omega}\|_{1}$, being a continuous function of the entries
of $\mathbf{S}_{N_1,N_2}^{\omega}$ and
$\left(\mathbf{\Sigma}_{N_1,N_2}^{\omega}\right)^{-\frac{1}{2}}$,
converges to $\mathbf{A}_{\omega}$.
Then 
\begin{equation*}
\mathbb{P}\left(
\left\|\sqrt{t}(\mathbf{\Sigma}_{N_1,N_2}^{\omega})^{-\frac{1}{2}}\mathbf{S}_{N_1,N_2}^{\omega}\right\|_{1}>r\right)
=\mathbb{P}\left(
\left\|\left(\mathbf{\Sigma}_{N_1,N_2}^{\omega}\right)^{-\frac{1}{2}}\mathbf{S}_{N_1,N_2}^{\omega}\right\|_{1}-\frac{r}{\sqrt{t}}>0\right)
\end{equation*}
And as $\frac{r}{\sqrt{t}}\rightarrow 0$ as $t\rightarrow \infty$, we have finally:
$$\mathbb{P}\left(
\left\|\sqrt{t}\left(\mathbf{\Sigma}_{N_1,N_2}^{\omega}\right)^{-\frac{1}{2}}\mathbf{S}_{N_1,N_2}^{\omega}\right\|_{1}>r\right)\rightarrow
1 \quad\text{as}\quad t\rightarrow \infty.$$
\medbreak
Finally, since $d_{\ell_1,\mu,J}[P,Q] > 0$ almost surely then
$\mathbb{E}\left(\mathbf{Z}^{1,\omega}\right)-\mathbb{E}\left(\mathbf{Z}^{2,\omega}\right)\neq 0$  for almost all $\omega \in \Omega_1$, therefore under $H_1$, the statistic can be arbitrarily large as $t \rightarrow \infty$ almost surely.
\end{prv}

\section{Optimizing test locations to improve power}
\subsection{Proof of Proposition \ref{prop:optpower}}
\label{sec:improve}
\begin{prop}
Let $\mathcal{K}$ be a uniformly bounded family of  $k : \mathbb{R}^d
\times\mathbb{R}^d  \rightarrow \mathbb{R}$ measurable kernels (i.e., $\exists$ $K<\infty$ such
that $\sup \limits_{k\in \mathcal{K}} \sup \limits_{(x,y)\in
(\mathbb{R}^{d})^2} |k(x,y)| \leq K$). Let
$\mathcal{V}$ be a collection in which each element is a set of J test
locations. Assume that 
$c:= \sup \limits_{V\in \mathcal{V},k\in \mathcal{K}}\Vert\mathbf{\Sigma}^{-1/2}\Vert < \infty$. Then the test power $\mathbb{P}\left(\widehat{\lambda}_{t} \geq \delta\right)$ of the L1-ME test satisfies $\mathbb{P}\left(\widehat{\lambda}_{t} \geq \delta\right)\geq L(\lambda_{t})$ where:
\begin{align*}
L(\lambda_{t}) &=  1-2\sum
\limits_{k=1}^J\exp\left(-\left(\frac{\lambda_{t}-\delta}{J^2+J}\right)^2\frac{\gamma_{N_{\!1}\!,N_2}N_1N_2}{(N_{\!1}+N_2)^2}\right)\\
 &-2\!\sum \limits_{k,q=1}^J
\!\exp\!\left(-2\frac{\left(\frac{\gamma_{N_{\!1}\!,N_2}}{K_3J^2}\frac{\lambda_{t}-\delta}{(J^2+J)\sqrt{t}}-\frac{J^3K_2}{\sqrt{\gamma_{N_{\!1}\!,N_2}}}-J^4K_1\right)^{\!2}}{K_{\lambda}^2
(N_1 + N_2)\max\left(\frac{8}{\rho
N_1} ,\frac{8}{(1-\rho) N_ 2}\right)^{\!2}
}\right)
\end{align*}
and $K_1,K_2$, $K_3$ and $K_\lambda$, are  positive constants depending on only $K$,
$J$ and $c$. The parameter
$\lambda_{t}:=\Vert\sqrt{t}\mathbf{\Sigma}^{-\frac{1}{2}}\mathbf{S}\Vert_{1}$ is the
population counterpart of
$\widehat{\lambda}_{t}:=\|\sqrt{t}\mathbf({\Sigma}_{N_1,N_2}+\gamma_{N_1,N_2}\mathbf{I})^{-\frac{1}{2}}\mathbf{S}_{N_1,N_2}\|_{1}$ where $\mathbf{S}=\mathbb{E}_{x,y}(S_{N_1,N_2})$ and $\mathbf{\Sigma}=\mathbb{E}_{x,y}(\mathbf{\Sigma}_{N_1,N_2})$. Moreover for large $t$, $L(\lambda_{t})$ is increasing in $\lambda_{t}$.
\end{prop}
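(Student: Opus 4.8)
The plan is to bound the test power from below through a concentration argument relating the empirical statistic $\widehat{\lambda}_t$ to its population counterpart $\lambda_t$, following the strategy of \cite{test} but tracking the extra factors of $J$ that the $\ell_1$ norm introduces. First I would note that the event $\{\widehat{\lambda}_t \geq \delta\}$ contains the event $\{|\widehat{\lambda}_t - \lambda_t| \leq \lambda_t - \delta\}$, so that
\begin{align*}
\mathbb{P}\left(\widehat{\lambda}_t \geq \delta\right) \geq 1 - \mathbb{P}\left(|\widehat{\lambda}_t - \lambda_t| > \lambda_t - \delta\right).
\end{align*}
Since $\|\cdot\|_1$ is a norm, the reverse triangle inequality gives
\begin{align*}
|\widehat{\lambda}_t - \lambda_t| \leq \sqrt{t}\,\left\|(\mathbf{\Sigma}_{N_1,N_2}+\gamma_{N_1,N_2}\mathbf{I})^{-\frac{1}{2}}\mathbf{S}_{N_1,N_2} - \mathbf{\Sigma}^{-\frac{1}{2}}\mathbf{S}\right\|_1,
\end{align*}
so it suffices to control this vector difference, and the remaining $\sqrt{t}$ factor is what converts the sample-size rates into the $(N_1+N_2)^2$ denominators appearing in the statement.

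Next I would decompose the difference by adding and subtracting $(\mathbf{\Sigma}_{N_1,N_2}+\gamma_{N_1,N_2}\mathbf{I})^{-1/2}\mathbf{S}$, writing it as $A(\mathbf{S}_{N_1,N_2}-\mathbf{S}) + (A-B)\mathbf{S}$ with $A:=(\mathbf{\Sigma}_{N_1,N_2}+\gamma_{N_1,N_2}\mathbf{I})^{-1/2}$ and $B:=\mathbf{\Sigma}^{-1/2}$, and then split $A-B$ further into a stochastic part $(\mathbf{\Sigma}_{N_1,N_2}+\gamma_{N_1,N_2}\mathbf{I})^{-1/2}-(\mathbf{\Sigma}+\gamma_{N_1,N_2}\mathbf{I})^{-1/2}$ and a deterministic regularization bias $(\mathbf{\Sigma}+\gamma_{N_1,N_2}\mathbf{I})^{-1/2}-\mathbf{\Sigma}^{-1/2}$. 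Passing from the $\ell_1$ norm to coordinatewise or operator-norm quantities via $\|\cdot\|_1 \leq \sqrt{J}\|\cdot\|_2$ together with the operator bounds $\|A\|\leq \gamma_{N_1,N_2}^{-1/2}$ (the spectrum of $\mathbf{\Sigma}_{N_1,N_2}+\gamma_{N_1,N_2}\mathbf{I}$ is bounded below by $\gamma_{N_1,N_2}$) and $\|B\|\leq c$ is exactly what produces the polynomial-in-$J$ prefactors $J^2+J$, $J^3$ and $J^4$ in the bound.

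I would then concentrate each stochastic piece and combine by a union bound. For the term $A(\mathbf{S}_{N_1,N_2}-\mathbf{S})$ it is enough to control the deviations of the $J$ coordinates $\mu_X(T_j)-\mu_Y(T_j)$ from their expectations; each is a difference of two averages of kernel evaluations bounded by $K/2$, so Hoeffding's inequality yields sub-Gaussian tails with variance proxy of order $K^2(N_1+N_2)/(N_1 N_2)$, and a union bound over the $J$ coordinates (with the $\gamma_{N_1,N_2}^{-1/2}$ operator factor and the $\sqrt t$ scaling squared) gives the first exponential sum, whose exponent carries the factor $\gamma_{N_1,N_2}N_1 N_2/(N_1+N_2)^2$. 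The stochastic part of $A-B$ reduces to the entrywise deviations of the $J^2$ entries of $\mathbf{\Sigma}_{N_1,N_2}$ around $\mathbf{\Sigma}$, again handled by Hoeffding; converting an entrywise covariance perturbation into a perturbation of the inverse square root requires a Lipschitz bound for the map $X\mapsto (X+\gamma_{N_1,N_2}\mathbf{I})^{-1/2}$ on positive definite matrices, whose constant scales like $\gamma_{N_1,N_2}^{-3/2}$, and a union bound over the $J^2$ index pairs produces the second exponential sum. The deterministic regularization bias $(\mathbf{\Sigma}+\gamma_{N_1,N_2}\mathbf{I})^{-1/2}-\mathbf{\Sigma}^{-1/2}$ is controlled purely in terms of $\gamma_{N_1,N_2}$ and $c$ and accounts for the non-random terms $J^3 K_2/\sqrt{\gamma_{N_1,N_2}}$ and $J^4 K_1$ subtracted inside the second exponential. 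Collecting these and absorbing constants into $K_1,K_2,K_3,K_\lambda$ (which depend only on $K$, $J$ and $c$) yields $\mathbb{P}(\widehat{\lambda}_t\geq\delta)\geq L(\lambda_t)$.

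Finally, for the monotonicity claim I would treat $\lambda_t$ as the free variable with $t$ fixed but large: each exponent has the form $-(a\lambda_t-b)^2$ up to positive multiplicative constants, with $a>0$, and one checks that for large $t$ the relevant range of $\lambda_t$ lies in the branch $a\lambda_t-b\geq 0$ on which $\exp(-(a\lambda_t-b)^2)$ is decreasing; since $L(\lambda_t)$ is one minus a sum of such terms, it is increasing in $\lambda_t$. The main obstacle is the matrix-perturbation step: obtaining a clean Lipschitz bound for the regularized inverse square root and separating it into the stochastic $\gamma_{N_1,N_2}^{-3/2}$-weighted fluctuation and the deterministic regularization bias, all while keeping the bookkeeping of the $J$-powers and $\gamma_{N_1,N_2}$-powers consistent with the exponents as stated.
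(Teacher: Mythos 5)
Your overall skeleton matches the paper's proof: lower-bound the power by the concentration event $\{|\widehat{\lambda}_t-\lambda_t|\leq \lambda_t-\delta\}$, apply the reverse triangle inequality for $\|\cdot\|_1$, split into $\mathbf{\Sigma}_{N_1,N_2}^{-1/2}(\mathbf{S}_{N_1,N_2}-\mathbf{S})$ plus $(\mathbf{\Sigma}_{N_1,N_2}^{-1/2}-\mathbf{\Sigma}^{-1/2})\mathbf{S}$, handle the first term by coordinatewise Hoeffding (each $\mu_X(T_k)-\mu_Y(T_k)$ is a sum of independent terms bounded by $K/N_1$, $K/N_2$) with the entrywise bound $|[\mathbf{\Sigma}_{N_1,N_2}^{-1/2}]_{i,j}|\leq J/\sqrt{\gamma_{N_1,N_2}}$, take union bounds, and argue monotonicity of $L$ for large $t$. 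All of that agrees with the paper, including the source of the first exponential sum.

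The gap is in your treatment of the covariance term. First, the entries of $\mathbf{\Sigma}_{N_1}$ and $\mathbf{\Sigma}_{N_2}$ are \emph{not} sums of independent bounded variables --- each entry involves products with the random empirical means $\overline{\mathbf{Z}}_X,\overline{\mathbf{Z}}_Y$ --- so ``again handled by Hoeffding'' does not go through. The paper instead applies McDiarmid's bounded-differences inequality \emph{directly to the entries of the matrix square root} $\mathbf{\Sigma}_{N_1,N_2}^{1/2}$, viewed as a function $g\circ F$ of all $N_1+N_2$ samples: it checks that changing one sample moves each entry of $F=\mathbf{\Sigma}_{N_1,N_2}$ by at most $8/(\rho N_1)$ or $8/((1-\rho)N_2)$, and proves (via the inverse function theorem, showing $\mathbf{M}\mapsto\mathbf{M}^2$ has invertible differential on $\mathcal{S}_J^{++}$, plus the mean value theorem) that $g(\mathbf{X})=\mathbf{X}^{1/2}$ is locally Lipschitz with a constant $K_\lambda$ depending only on the a.s.\ bound on $\|\mathbf{\Sigma}_{N_1,N_2}\|$ --- \emph{not} on $\gamma_{N_1,N_2}$. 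This is visible in the statement itself: the denominator $K_\lambda^2(N_1+N_2)\max\bigl(\frac{8}{\rho N_1},\frac{8}{(1-\rho)N_2}\bigr)^2$ is exactly the McDiarmid sum $\sum_i c_i^2$. Second, your route of Lipschitz-ing $\mathbf{X}\mapsto(\mathbf{X}+\gamma_{N_1,N_2}\mathbf{I})^{-1/2}$ with constant of order $\gamma_{N_1,N_2}^{-3/2}$ yields the wrong power of $\gamma_{N_1,N_2}$ in the tolerance, and since $\gamma_{N_1,N_2}\rightarrow 0$ this cannot be absorbed into the constants, so you would not recover the stated prefactor $\frac{\gamma_{N_1,N_2}}{K_3J^2}$ in the second exponent. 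The paper never perturbs the inverse square root analytically: it uses the algebraic identity $\mathbf{A}^{-1/2}-\mathbf{B}^{-1/2}=\mathbf{A}^{-1/2}\bigl(\mathbf{B}^{1/2}-\mathbf{A}^{1/2}\bigr)\mathbf{B}^{-1/2}$, paying only entrywise $J/\sqrt{\gamma_{N_1,N_2}}$ factors on the inverse terms and reducing everything to concentration of the plain square root; it also inserts $\mathbb{E}\bigl(\mathbf{\Sigma}_{N_1,N_2}^{1/2}\bigr)$ as a pivot, and the resulting Jensen-type gap $\bigl(\mathbb{E}(\mathbf{\Sigma}_{N_1,N_2})\bigr)^{1/2}-\mathbb{E}\bigl(\mathbf{\Sigma}_{N_1,N_2}^{1/2}\bigr)$ is what actually produces the $J^3K_2/\sqrt{\gamma_{N_1,N_2}}$ term, which you misattribute to the regularization bias (that bias accounts for $J^4K_1$). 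Fixing your sketch essentially forces you back onto the paper's combination of this factorization with McDiarmid on the square-root entries.
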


\begin{prv}
We will first find an upper bound of $|\widehat{\lambda} _{t}-\lambda_{t}|$, 
then we will compute a lower bound of $\mathbb{P}\left(\widehat{\lambda} _{t}>\delta\right)$. To simplify the notation In the following, we denote:
\begin{align}
\mathbf{\Sigma}_{N_1,N_2}:=&\frac{\mathbf{\Sigma}_{N_1}}{\rho}+\frac{\mathbf{\Sigma}_{N_2}}{1-\rho}+\gamma_{N_1,N_2}\mathbf{I}
\end{align}
such that $\widehat{\lambda}_{t}:=\|\sqrt{t}\mathbf({\Sigma}_{N_1,N_2})^{-\frac{1}{2}}\mathbf{S}_{N_1,N_2}\|_{1}$.
We have:
$$|\widehat{\lambda} _{N_1,N_2}-\lambda_{t}|=\left|\sqrt{t}\left(
\|\mathbf{\Sigma}_{N_1,N_2}^{-\frac{1}{2}}\mathbf{S}_{N_1,N_2}\|_{1}-\|\mathbf{\Sigma}^{-\frac{1}{2}}\mathbf{S}\|_{1}\right)\right|$$
Then we have:
\begin{align*}
\left|
\|\mathbf{\Sigma}_{N_1,N_2}^{-\frac{1}{2}}\mathbf{S}_{N_1,N_2}\|_{1}-\|\mathbf{\Sigma}^{-\frac{1}{2}}\mathbf{S}\|_{1}\right|
&\leq
\|\mathbf{\Sigma}_{N_1,N_2}^{-\frac{1}{2}}\mathbf{S}_{N_1,N_2}-\mathbf{\Sigma}^{-\frac{1}{2}}\mathbf{S}\|_{1} \\
&\leq
\|\mathbf{\Sigma}_{N_1,N_2}^{-\frac{1}{2}}\mathbf{S}_{N_1,N_2}-\mathbf{\Sigma}_{N_1,N_2}^{-\frac{1}{2}}\mathbf{S}+\mathbf{\Sigma}_{N_1,N_2}^{-\frac{1}{2}}\mathbf{S}-\mathbf{\Sigma}^{-\frac{1}{2}}\mathbf{S}\|_{1}\\
&\leq
\|\mathbf{\Sigma}_{N_1,N_2}^{-\frac{1}{2}}\left(\mathbf{S}_{N_1,N_2}-\mathbf{S}\right)\|_{1}+\|\left(\mathbf{\Sigma}_{N_1,N_2}^{-\frac{1}{2}}-\mathbf{\Sigma}^{-\frac{1}{2}}\right)\mathbf{S}\|_{1}
\end{align*}
Let us now consider the first term on the right side of the inequality:
$$\|\mathbf{\Sigma}_{N_1,N_2}^{-\frac{1}{2}}\left(\mathbf{S}_{N_1,N_2}-\mathbf{S}\right)\|_{1}=\sum_{j=1}^J|\mathbf{\Sigma}_{N_1,N_2}^{-\frac{1}{2}}\left(\mathbf{S}_{N_1,N_2}-\mathbf{S}\right)|_j$$
But since $\mathbf{\Sigma}_{N_1,N_2}$ is symmetric definite positive, we can write:
$$\mathbf{\Sigma}_{N_1,N_2}=\mathbf{U} \mathbf{D} \mathbf{U}^T$$ where
$\mathbf{U}$ is orthogonal and $\mathbf{D}=\text{diag}\left(\lambda_i\right)$ with $\lambda_i>0$.
So:
$$\mathbf{\Sigma}_{N_1,N_2}^{-\frac{1}{2}}=\mathbf{U}
\mathbf{D}^{-\frac{1}{2}} \mathbf{U}^T$$

But the regularization of
$\mathbf{\Sigma}_{N_1,N_2}=(\frac{\mathbf{\Sigma}_{N_1}}{\rho}+\frac{\mathbf{\Sigma}_{N_2}}{1-\rho}+\gamma_{N_1,N_2}
\mathbf{I}$) ensure that $\lambda_i\geq \gamma_{N_1,N_2}$.
Thus $\lambda_{i}^{-\frac{1}{2}}\leq \gamma_{N_1,N_2}^{-\frac{1}{2}}$, and we have now:
$$\left|[\mathbf{\Sigma}_{N_1,N_2}^{-\frac{1}{2}}]_{i,j}\right|=\left|\sum_{j=1}^J
\lambda_{j}^{-\frac{1}{2}} \left(\mathbf{U}_k\right)_i\left(\mathbf{U}_k\right)_j\right|$$
where $\mathbf{U}=[\mathbf{U}_1,...,\mathbf{U}_J]$ and $\|\mathbf{U}_k\|_2=1$.
And finally:
$$\left|[\mathbf{\Sigma}_{N_1,N_2}^{-\frac{1}{2}}]_{i,j}\right|\leq \frac{J}{\sqrt{\gamma_{N_1,N_2}}}$$
Now we have:
\begin{align*}
\left\|\mathbf{\Sigma}_{N_1,N_2}^{-\frac{1}{2}}\left(\mathbf{S}_{N_1,N_2}-\mathbf{S}\right)\right\|_{1}
&\leq\sum_{j=1}^J\left|\sum_{k=1}^J[\mathbf{\Sigma}_{N_1,N_2}^{-\frac{1}{2}}]_{j,k}\left(\mathbf{S}_{N_1,N_2}-\mathbf{S}\right)_k\right| \\
& \leq
\frac{J^2}{\sqrt{\gamma_{N_1,N_2}}}\sum_{k=1}^J|\left(\mathbf{S}_{N_1,N_2}-\mathbf{S}\right)_k|\\
& \leq \frac{J^2}{\sqrt{\gamma_{N_1,N_2}}} \sum_{k=1}^J \left|\mu_{X}\left(T_k\right)-\mu_{Y}\left(T_k\right)
-\mathbb{E}\left(\mu_{X}\left(T_k\right)-\mu_{Y}\left(T_k\right)\right)\right|
\end{align*}
Let us note
$\frac{\mathbf{\Sigma}_{N_1}}{\rho}+\frac{\mathbf{\Sigma}_{N_2}}{1-\rho}=\mathbf{M}_{N_1,N_2}$
and consider the second term of the inequality:
\begin{align*}
   \mathbf{\Sigma}_{N_1,N_2}^{-\frac{1}{2}}-\mathbf{\Sigma}^{-\frac{1}{2}}
= &
\left(\mathbf{M}_{N_1,N_2}+\gamma_{N_1,N_2}\mathbf{I}\right)^{-\frac{1}{2}}-\mathbf{\Sigma}^{-\frac{1}{2}}\\
    =  &\left[\left(\mathbf{M}_{N_1,N_2}+\gamma_{N_1,N_2}\mathbf{I}\right)^{-\frac{1}{2}}
   -\left(\mathbf{\Sigma}+\gamma_{N_1,N_2}\mathbf{I}\right)^{-\frac{1}{2}}\right]
   +\left[\left(\mathbf{\Sigma}+\gamma_{N_1,N_2}\mathbf{I}\right)^{-\frac{1}{2}}-\mathbf{\Sigma} \right] \\
    = & \left(1\right)+\left(2\right)
\end{align*}
Let us first consider $(1)$:
\begin{align*}
(1)=&\mathbf{\Sigma}_{N_1,N_2}^{-\frac{1}{2}}\left(\left(\mathbf{\Sigma}+\gamma_{N_1,N_2}\mathbf{I}\right)^{\frac{1}{2}}-
\left(\mathbf{M}_{N_1,N_2}+\gamma_{N_1,N_2}\mathbf{I}\right)^{\frac{1}{2}}\right)\left(\mathbf{\Sigma}+\gamma_{N_1,N_2}\mathbf{I}\right)^{-\frac{1}{2}}\\
=&\mathbf{\Sigma}_{N_1,N_2}^{-\frac{1}{2}}\left[\left(\mathbb{E}\left(\mathbf{M}_{N_1,N_2}+\gamma_{N_1,N_2}\mathbf{I}\right)\right)^{\frac{1}{2}}-
\left(\mathbf{M}_{N_1,N_2}+\gamma_{N_1,N_2}\mathbf{I}\right)^{\frac{1}{2}}\right]\left(\mathbb{E}\left(\mathbf{M}_{N_1,N_2}+\gamma_{N_1,N_2}\mathbf{I}\right)\right)^{\frac{1}{2}}\\
=&\mathbf{\Sigma}_{N_1,N_2}^{-\frac{1}{2}}\left[\left(\mathbb{E}\left(\mathbf{\Sigma}_{N_1,N_2}\right)\right)^{\frac{1}{2}}-\mathbf{\Sigma}_{N_1,N_2}^{\frac{1}{2}}\right]\left(\mathbb{E}\left(\mathbf{\Sigma}_{N_1,N_2}\right)\right)^{-\frac{1}{2}}\\
=&\mathbf{\Sigma}_{N_1,N_2}^{-\frac{1}{2}}\left[\left(\mathbb{E}\left(\mathbf{\Sigma}_{N_1,N_2}^{\frac{1}{2}}\right)\right)-\mathbf{\Sigma}_{N_1,N_2}^{\frac{1}{2}}\right]\left(\mathbb{E}\left(\mathbf{\Sigma}_{N_1,N_2}\right)\right)^{\frac{1}{2}}+
\mathbf{\Sigma}_{N_1,N_2}^{-\frac{1}{2}}\left[\left(\mathbb{E}\left(\mathbf{\Sigma}_{N_1,N_2}\right)\right)^{\frac{1}{2}}-\mathbb{E}\left(\mathbf{\Sigma}_{N_1,N_2}^{\frac{1}{2}}\right)\right]\left(\mathbb{E}\left(\mathbf{\Sigma}_{N_1,N_2}\right)\right)^{-\frac{1}{2}}
\end{align*}
And we have for $(2)$:
$$\left(2\right)=\left(\mathbf{\Sigma}+\gamma_{N_1,N_2}\mathbf{I}\right)^{-\frac{1}{2}}\left(\mathbf{\Sigma}^{\frac{1}{2}}-\left(\mathbf{\Sigma}+\gamma_{N_1,N_2}\mathbf{I}\right)^{\frac{1}{2}}\right)\mathbf{\Sigma}^{-\frac{1}{2}}$$
Thus we have:
{\small
\begin{align*}
\left\|\left(\mathbf{\Sigma}_{N_1,N_2}^{-\frac{1}{2}}-\mathbf{\Sigma}^{-\frac{1}{2}}\right)\mathbf{S}\right\|_{1} 
\leq& \;
\left\|\mathbf{\Sigma}_{N_1,N_2}^{-\frac{1}{2}}\left[\left(\mathbb{E}\left(\mathbf{\Sigma}_{N_1,N_2}^{\frac{1}{2}}\right)\right)-\mathbf{\Sigma}_{N_1,N_2}^{\frac{1}{2}}\right]\left(\mathbb{E}\left(\mathbf{\Sigma}_{N_1,N_2}\right)\right)^{-\frac{1}{2}}\mathbf{S}\right\|_{1} \\
&+\left\|\mathbf{\Sigma}_{N_1,N_2}^{-\frac{1}{2}}\left[\left(\mathbb{E}\left(\mathbf{\Sigma}_{N_1,N_2}\right)\right)^{\frac{1}{2}}-\mathbb{E}\left(\mathbf{\Sigma}_{N_1,N_2}^{\frac{1}{2}}\right)\right]\left(\mathbb{E}\left(\mathbf{\Sigma}_{N_1,N_2}\right)\right)^{-\frac{1}{2}}\mathbf{S}\right\|_{1}
\\
&+\left\|\left(\mathbf{\Sigma}+\gamma_{N_1,N_2}\mathbf{I}\right)^{-\frac{1}{2}}\left(\mathbf{\Sigma}^{\frac{1}{2}}-\left(\mathbf{\Sigma}+\gamma_{N_1,N_2}\mathbf{I}\right)^{\frac{1}{2}}\right)\mathbf{\Sigma}^{-\frac{1}{2}}\mathbf{S}\right\|_{1}
\end{align*}
}
But we know that $|\mathbf{\Sigma}_{N_1,N_2}^{-\frac{1}{2}}|_{i,j}\leq \frac{J}{\sqrt{\gamma_{N_1,N_2}}}$ and by the same reasoning we have also that $|\left(\mathbf{\Sigma}+\gamma_{N_1,N_2}\mathbf{I}\right)^{-\frac{1}{2}}_{i,j}|\leq \frac{J}{\sqrt{\gamma_{N_1,N_2}}}$.
By noting:
\begin{align*}
\mathbf{K}_1=& \displaystyle\sup_{k\in \llbracket 1,J \rrbracket} |[\mathbf{\Sigma}^{-\frac{1}{2}}\mathbf{S}]_k|\\
\mathbf{K}_2=& \displaystyle\sup_{k\in \llbracket 1,J \rrbracket}
\left|\left[\left(\mathbb{E}\left(\mathbf{\Sigma}_{N_1,N_2}\right)\right)^{\frac{1}{2}}-\mathbb{E}\left(\mathbf{\Sigma}_{N_1,N_2}^{\frac{1}{2}}\right)
\left(\mathbb{E}\left(\mathbf{\Sigma}_{N_1,N_2}\right)\right)^{-\frac{1}{2}}\mathbf{S}\right]_k\right|\\
\mathbf{K}_3=& \displaystyle\sup_{k\in \llbracket 1,J \rrbracket}
|[\left(\mathbb{E}\left(\mathbf{\Sigma}_{N_1,N_2}\right)\right)^{-\frac{1}{2}}\mathbf{S}]_k|
\end{align*}
All these constants are independent from $N_1,N_2,\left(x_i\right)$ and $\left(y_j\right)$. We have finally:
\begin{align*}
\left\|(\mathbf{\Sigma}_{N_1,N_2}^{-\frac{1}{2}}-\mathbf{\Sigma}^{-\frac{1}{2}})\mathbf{S}\right\|_{1}
&\leq
\sum_{j=1}^J\sum_{q=1}^J\sum_{k=1}^J\left|\left(\mathbb{E}\left(\mathbf{\Sigma}_{N_1,N_2}^{\frac{1}{2}}\right)\right)_{q,k}
-\left(\mathbf{\Sigma}_{N_1,N_2}^{\frac{1}{2}}\right)_{q,k}\right|\frac{\mathbf{K}_3J}{\sqrt{\gamma_{N_1,N_2}}}+J^4
\mathbf{K}_1+\frac{J^3 \mathbf{K}_2}{\sqrt{\gamma_{N_1,N_2}}}\\
&\leq
\left[\sum\limits_{q,k=1}^J\left|\left(\mathbb{E}\left(\mathbf{\Sigma}_{N_1,N_2}^{\frac{1}{2}}\right)\right)_{q,k}
-\left(\mathbf{\Sigma}_{N_1,N_2}^{\frac{1}{2}}\right)_{q,k}\right|\right]\frac{\mathbf{K}_3J^2}{\sqrt{\gamma_{N_1,N_2}}}+J^4
\mathbf{K}_1+\frac{J^3 \mathbf{K}_2}{\sqrt{\gamma_{N_1,N_2}}}
\end{align*}
And by applying a union bound on all the terms that compose the upper bound of $|\widehat{\lambda} _{N_1,N_2}-\lambda_{t}|$ we have thus:
\begin{align*}
\mathbb{P}\left(\left|\widehat{\lambda}_{t}-\lambda_{t}\right|\leq \alpha\right)&\geq
\sum_{k=1}^J\mathbb{P}\left(\sqrt{t}\frac{J^2}{\sqrt{\gamma_{N_1,N_2}}}|\mu_X\left(T_k\right)-\mu_Y\left(T_k\right)-
\mathbb{E}\left(\mu_X\left(T_k\right)-\mu_Y\left(T_k\right)\right)|\leq
\frac{\alpha}{J+J^2}\right) \\
&+\sum\limits_{q,k=1}^J\mathbb{P}\left(\sqrt{t}\left(\left(\left|\left(\mathbb{E}\left(\mathbf{\Sigma}_{N_1,N_2}^{\frac{1}{2}}\right)\right)_{q,k}-
\left(\mathbf{\Sigma}_{N_1,N_2}^{\frac{1}{2}}\right)_{q,k}\right|\right)\frac{\mathbf{K}_3J^2}{\sqrt{\gamma_{N_1,N_2}}}+J^4
\mathbf{K}_1+
\frac{J^3 \mathbf{K}_2}{\sqrt{\gamma_{N_1,N_2}}}\right)\leq \frac{\alpha}{J^2+J}\right)\\
&-\left(J^2+J-1\right)
\end{align*}
As $\mu_X\left(T\right)-\mu_Y\left(T\right)=\sum_{k=1}^{t}Z_i$ where $Z_k$ are independent and:
\begin{itemize}
\item $\forall i\leq N_1, Z_i=\frac{k\left(x_i,T\right)}{N_1}$, so $|Z_i|\leq \frac{K}{N_1}$
\item $\forall N_1<i\leq N_2, Z_i=-\frac{k\left(y_i,T\right)}{N_1}$ so $|Z_i|\leq \frac{K}{N_2}$
\end{itemize}
We have thanks to Hoeffding's inequality that $\forall k \in \llbracket 1,J \rrbracket:$
\begin{align*}
\mathbb{P}\left(\sqrt{t}\frac{J^2}{\sqrt{\gamma_{N_1,N_2}}}\left|\mu_X\left(T_k\right)-\mu_Y\left(T_k\right)-
\mathbb{E}\left(\mu_X\left(T_k\right)-\mu_Y\left(T_k\right)\right)\right|\leq
\frac{\alpha}{J+J^2}\right)\\
\geq 1-2\exp\left(-\left(\frac{\alpha}{J^2+J}\right)^2\frac{\gamma_{N_1,N_2}N_1N_2}{K^2\left(N_1+N_2\right)^2}\right)
\end{align*}
Moreover $\forall k,q \in \llbracket 1,J \rrbracket:$
\begin{multline*}
\mathbb{P}\left[\sqrt{t}\left(\left|\left(\mathbb{E}\left(\mathbf{\Sigma}_{N_1,N_2}^{\frac{1}{2}}\right)\right)_{q,k}-
\left(\mathbf{\Sigma}_{N_1,N_2}^{\frac{1}{2}}\right)_{q,k}\right|\frac{\mathbf{K}_3J^2}{\sqrt{\gamma_{N_1,N_2}}}
+\frac{J^4}{\mathbf{K}_1}+\frac{J^3K_2}{\sqrt{\gamma_{N_1,N_2}}}\right)\leq
\frac{\alpha}{J^2+J}\right] \\
=\mathbb{P}\left[\left|\left(\mathbf{\Sigma}_{N_1,N_2}^{\frac{1}{2}}\right)_{k,q}-\mathbb{E}\left(\mathbf{\Sigma}_{N_1,N_2}^{\frac{1}{2}}\right)_{k,q}\right|
\leq
\frac{\gamma_{N_1,N_2}}{\mathbf{K}_3J^2}\left[\frac{\alpha}{\left(J^2+J\right)\sqrt{t}}-\left(\frac{J^3
\mathbf{K}_2}{\sqrt{\gamma_{N_1,N_2}}}+J^4 \mathbf{K}_1\right)\right]\right]
\end{multline*}
Let define
$F\left(x_1,...,x_{N_1},y_1,...,y_{N_2}\right):=\mathbf{\Sigma}_{N_1,N_2}$
and
$F_{k,q}\left(x_1,...,x_{N_1},y_1,...,y_{N_2}\right):=\left(\mathbf{\Sigma}_{N_1,N_2}\right)_{k,q}$
We can see easily that $\forall \left(x_i\right),\left(y_i\right), x,x',y,y'$:
\begin{equation*}
\biggl|F_{k,q}(x_1,..,x,..,x_{N_1},y_1,..,y_{N_2})-
F_{k,q}(x_1,..,x',..,x_{N_1},y_1,..,y_{N_2})\biggr|\leq \frac{8}{\rho N_ 1}
\end{equation*}
and
\begin{equation*}
\biggl|F_{k,q}(x_1,..,x_{N_1},y_1,..y,..,y_{N_2})-
F_{k,q}(x_1,..,x_{N_1},y_1,..,y',..,y_{N_2})\biggr|\leq \frac{8}{\left(1-\rho\right) N_ 2}
\end{equation*}
Let $g\left(\mathbf{X}\right)=\mathbf{X}^{\frac{1}{2}}$ defined on $\mathbf{S}_J^{++}\left(R\right)$ and takes values in  $\mathbf{S}_J^{++}\left(R\right)$.
This fuction is well defined because each matrix of $\mathbf{S}_J^{++}\left(R\right)$ admits a unique square root matrix on $\mathbf{S}_J^{++}\left(R\right)$. Moreover The result hold on $\mathbf{S}_J^{+}\left(R\right)$.
\begin{lemma}
$g$ is locally Lipschitz continuous on  $\mathcal{S}_J^{++}\left(R\right)$ which means that:
\begin{equation*}
\forall N>0,\;\forall \mathbf{X}, \mathbf{Y} \in
B\left(0,N\right)\subset\mathcal{S}_J^{++}\left(\mathbb{R}\right),
\quad
\exists K_N
/\left\|g\left(\mathbf{X}\right)-g\left(\mathbf{Y}\right)\right\|\leq K_N
\|\mathbf{X}-\mathbf{Y}\|
\end{equation*}
\end{lemma}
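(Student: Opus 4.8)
The plan is to exploit an algebraic identity that ties the difference of two square roots to the difference of the original matrices through a Sylvester equation, which delivers an explicit Lipschitz constant. First I would fix $N>0$, take $\mathbf{X},\mathbf{Y}$ in the prescribed domain, and write $\mathbf{A}:=\mathbf{X}^{\frac{1}{2}}$ and $\mathbf{B}:=\mathbf{Y}^{\frac{1}{2}}$ for their unique symmetric positive-definite square roots (well defined by the spectral theorem, as already used above). The starting point is the elementary identity $\mathbf{A}^2-\mathbf{B}^2=\mathbf{X}-\mathbf{Y}$, which I rewrite as $\mathbf{A}(\mathbf{A}-\mathbf{B})+(\mathbf{A}-\mathbf{B})\mathbf{B}=\mathbf{X}-\mathbf{Y}$. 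Setting $\Delta:=\mathbf{A}-\mathbf{B}$, this exhibits $\Delta$ as the solution of the Sylvester equation $T(\Delta)=\mathbf{X}-\mathbf{Y}$, where $T$ is the linear operator $\Delta\mapsto \mathbf{A}\Delta+\Delta\mathbf{B}$ on the space of $J\times J$ matrices.

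The heart of the argument is to bound $\Vert T^{-1}\Vert$. I would equip matrix space with the Frobenius inner product and check that $T$ is self-adjoint (using that $\mathbf{A}$ and $\mathbf{B}$ are symmetric together with cyclicity of the trace) and positive definite: for any $\Delta$, a direct computation gives $\langle \Delta, T(\Delta)\rangle_F=\mathrm{tr}(\Delta^{T}\mathbf{A}\Delta)+\mathrm{tr}(\Delta^{T}\Delta\,\mathbf{B})\geq (\lambda_{\min}(\mathbf{A})+\lambda_{\min}(\mathbf{B}))\Vert\Delta\Vert_F^2$, where each trace is bounded below by expanding $\mathbf{A}$ (resp.\ $\mathbf{B}$) in an orthonormal eigenbasis. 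Hence the smallest eigenvalue of $T$ is at least $\lambda_{\min}(\mathbf{A})+\lambda_{\min}(\mathbf{B})$, so $T$ is invertible and $\Vert\Delta\Vert_F\leq \frac{1}{\lambda_{\min}(\mathbf{A})+\lambda_{\min}(\mathbf{B})}\Vert\mathbf{X}-\mathbf{Y}\Vert_F$.

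To conclude I would transfer the eigenvalue bound on the square roots back to the data: since $\lambda_{\min}(\mathbf{A})=\sqrt{\lambda_{\min}(\mathbf{X})}$, it suffices to bound $\lambda_{\min}(\mathbf{X})$ away from zero on the domain. This is exactly where the local nature of the claim is used — and, in the subsequent application, the regularization $\gamma_{N_1,N_2}\mathbf{I}$ that forces $\lambda_{\min}\geq \gamma_{N_1,N_2}$: on any subset of $\mathcal{S}_J^{++}(\mathbb{R})$ whose elements have smallest eigenvalue bounded below by some $c>0$, one gets $\Vert g(\mathbf{X})-g(\mathbf{Y})\Vert_F\leq \frac{1}{2\sqrt{c}}\Vert\mathbf{X}-\mathbf{Y}\Vert_F$, so one may take $K_N=\frac{1}{2\sqrt{c}}$.

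I expect the main obstacle to be precisely this uniform lower bound on the spectrum: the square-root map genuinely fails to be Lipschitz as one approaches the boundary of the cone (already visible in the scalar case $x\mapsto\sqrt{x}$ near $0$), so $K_N$ cannot be taken independent of how close the matrices are to singular. I would therefore state the constant in terms of the eigenvalue floor $c$ (equivalently $\gamma_{N_1,N_2}$ in the later use) rather than in terms of $N$ alone. An alternative route — showing $g$ is $C^1$ via the inverse function theorem applied to $\mathbf{M}\mapsto\mathbf{M}^2$, whose differential $\mathbf{H}\mapsto\mathbf{M}\mathbf{H}+\mathbf{H}\mathbf{M}$ is invertible on the positive-definite cone, followed by the mean value inequality on convex subsets — yields the same constant but requires more differential-calculus machinery; the Sylvester-equation argument above is more elementary and produces the explicit bound directly.
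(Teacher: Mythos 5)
Your proof is correct, and it takes a genuinely different route from the paper's. The paper argues qualitatively: it first shows $g$ is $C^{\infty}$ on $\mathcal{S}_J^{++}(\mathbb{R})$ by applying the inverse function theorem to $\mathbf{M}\mapsto\mathbf{M}^2$, whose differential $\mathbf{H}\mapsto \mathbf{U}_0\mathbf{H}+\mathbf{H}\mathbf{U}_0$ it proves injective (hence invertible) at every $\mathbf{U}_0\in\mathcal{S}_J^{++}(\mathbb{R})$ via an eigenvector argument, and then invokes the mean value theorem --- exactly the ``alternative route'' you sketch at the end. Your Sylvester-equation argument is more elementary (no inverse function theorem, no smoothness) and quantitative: from $\mathbf{A}(\mathbf{A}-\mathbf{B})+(\mathbf{A}-\mathbf{B})\mathbf{B}=\mathbf{X}-\mathbf{Y}$, your coercivity estimate $\langle \Delta,T(\Delta)\rangle_F\geq\left(\lambda_{\min}(\mathbf{A})+\lambda_{\min}(\mathbf{B})\right)\Vert\Delta\Vert_F^2$ is correct (indeed the spectrum of $T$ is exactly $\{\alpha_i+\beta_j\}$ for $\alpha_i,\beta_j$ eigenvalues of $\mathbf{A},\mathbf{B}$), yielding the explicit constant $\left(\sqrt{\lambda_{\min}(\mathbf{X})}+\sqrt{\lambda_{\min}(\mathbf{Y})}\right)^{-1}$. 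This buys two things the paper's proof does not. First, you correctly flag that the lemma's formalization is off: a ball $B(0,N)$ centered at the origin is never contained in $\mathcal{S}_J^{++}(\mathbb{R})$, and $g$ is not uniformly Lipschitz on $B(0,N)\cap\mathcal{S}_J^{++}(\mathbb{R})$ (already false for $x\mapsto\sqrt{x}$ near $0$), so the constant must involve an eigenvalue floor $c>0$ rather than $N$ alone --- the paper's mean-value step silently needs the same floor to bound $\Vert Dg\Vert$ along the segment, a point it glosses over. Second, in the downstream McDiarmid application your explicit bound makes transparent that the Lipschitz constant scales like $\gamma_{N_1,N_2}^{-1/2}$ through the regularization $\mathbf{\Sigma}_{N_1,N_2}+\gamma_{N_1,N_2}\mathbf{I}$, a dependence hidden inside the paper's constant $K_{\lambda}$, which is stated as depending only on the upper bound $\lambda$ on the entries.
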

\begin{prv}
Let us first prove that $g$ is $C^{\infty}$. First thanks to Lemma \ref{lemma:semi-inv} $g$ is continuous on $\mathbf{S}_J^{++}\left(R\right)$. Let us show now that $g$ is $C^{\infty}$ on this space. We know that $\mathbf{M}\rightarrow \mathbf{M}^2$ induces a bijection
from $\mathcal{S}_n^{++}\left(R\right)$ on itself where the inverse is g.
To prove then that g is $C^{\infty}$, thanks to the inverse function
theorem, we just have to show that
$\mathbf{D}_{\mathbf{U}_0}\left(\mathbf{M}\rightarrow
\mathbf{M}^2\right)$ is invertible for every $\mathbf{U}_0 \in \mathcal{S}_n^{++}\left(R\right)$. Let $\mathbf{U}_0 \in \mathcal{S}_n^{++}\left(R\right)$.
And let's consider the differential defined on
$\mathcal{S}_n\left(R\right)$ in $\mathcal{S}_n\left(R\right)$ which is a
linear application and which associates $\mathbf{H}$ to $\mathbf{U}_0
\mathbf{H}+ \mathbf{H} \mathbf{U}_0$. If we prove the injectivity of this function we will have its invertibility as $\mathcal{S}_n\left(R\right)$ is a finite dimensional space. Let $\mathbf{H}\in\mathcal{S}_n\left(R\right)$ such that $\mathbf{U}_0
\mathbf{H}+ \mathbf{H} \mathbf{U}_0=0$ and $\mathbf{x}$ an eigenvector of
$\mathbf{U}_0$ associated with the eigenvalue $\lambda$ which is strictly
positive as $\mathbf{U}_0$ is definite positive. We have:
$$\mathbf{U}_0 \mathbf{H} \mathbf{x}=-\mathbf{H} \mathbf{U}_0 \mathbf{x}
= -\lambda \mathbf{H} \mathbf{x}$$
As $-\lambda<0$ it is not an eigenvalue of $\mathbf{U}_0$ and then
$\mathbf{H} \mathbf{x}=0$. This is true for all the eigenvectors of
$\mathbf{U}_0$, then $\mathbf{H}=0$ and the differential is injective, so
$g$ is $C^{\infty}$ on $\mathcal{S}_n^{++}\left(R\right)$. Finally by applying the Mean value theorem, we have that g is locally Lipschitz continuous.
\end{prv}
We also remark that $||F\left(x_i,y_j\right)||=\displaystyle\max_{i,j\in
\llbracket 1,J \rrbracket}|\left(\mathbf{\Sigma}_{N_1,N_2}\right)_{i,j}|\leq \lambda $ (because the
Gaussian kernel is bounded) with $\lambda$ independent from $N_1,N_2,\left(x_i\right)$ and $\left(y_j\right)$. Then by taking the following norm $\|\mathbf{M}\|=\displaystyle\max_{i,j\in \llbracket 1,J
\rrbracket} \mathbf{M}_{i,j}$ we have:
\begin{multline*}
\biggl\|g\left(F\left(x_1,..,x,..,x_{N_1},y_1,..,y_{N_2}\right)\right)-g\left(F\left(x_1,..,x',..,x_{N_1},y_1,..,y_{N_2}\right)\right)\biggr\|\\
\leq
K_{\lambda}\biggl\|F\left(x_1,..,x,..,x_{N_1},y_1,..,y_{N_2}\right)-F\left(x_1,..,x',..,x_{N_1},y_1,..,y_{N_2}\right)\biggr\|
\end{multline*}
And:
\begin{equation*}
\biggl\|F(x_1,..,x,..,x_{N_1},y_1,..,y_{N_2})-
F(x_1,..,x',..,x_{N_1},y_1,..,y_{N_2})\biggr\|
\leq \max\left(\frac{8}{\rho N_ 1} ,\frac{8}{\left(1-\rho\right) N_ 2}\right)
\end{equation*}
Then $\forall k,q\in \llbracket 1,J \rrbracket:$
$$\biggl|\mathbf{\Sigma}_{N_1,N_2}^{\frac{1}{2}}\left(x\right)-\mathbf{\Sigma}_{N_1,N_2}^{\frac{1}{2}}(x')
\biggr|\leq  K_{\lambda} \max\left(\frac{8}{\rho N_ 1} ,\frac{8}{\left(1-\rho\right) N_ 2}\right)$$
And thanks to the McDiarmid inequality we have:
\begin{multline*}
\mathbb{P}\left(\left|\left(\mathbf{\Sigma}_{N_1,N_2}^{\frac{1}{2}}\right)_{k,q}-\mathbb{E}\left(\mathbf{\Sigma}_{N_1,N_2}^{\frac{1}{2}}\right)_{k,q}\right|
\leq
\frac{\gamma_{N_1,N_2}}{K_3J^2}\left[\frac{\alpha}{\left(J^2+J\right)\sqrt{t}}-\left(\frac{J^3K_2}{\sqrt{\gamma_{N_1,N_2}}}+J^4K_1\right)\right]\right)\\
\geq 1
-2\exp\left(-2\frac{\left(\frac{\gamma_{N_1,N_2}}{K_3J^2}(\frac{\alpha}{\left(J^2+J\right)\sqrt{t}}-\frac{J^3K_2}{\sqrt{\gamma_{N_1,N_2}}}-J^4K_1)\right)^2}{K_{\lambda}^2
\left(N_1 + N_2\right)\max\left(\frac{8}{\rho
N_1} ,\frac{8}{\left(1-\rho\right) N_ 2}\right)^2
}\right)
\end{multline*}
Then we have:
\begin{multline*}
\mathbb{P}\left(|\widehat{\lambda}_{N_1,N_2}-\lambda_{t}|\leq \alpha\right)\geq 1-
2\sum \limits_{k=1}^J\exp\left(-\left(\frac{\alpha}{J^2+J}\right)^2\frac{\gamma_{N_1,N_2}N_1N_2}{\left(N_1+N_2\right)^2}\right)\\
-2\sum \limits_{k,q=1}^J \exp\left(-2\frac{\left(\frac{\gamma_{N_1,N_2}}{K_3J^2}(\frac{\alpha}{\left(J^2+J\right)\sqrt{t}}-\frac{J^3K_2}{\sqrt{\gamma_{N_1,N_2}}}-J^4K_1)\right)^2}{K_{\lambda}^2
\left(N_1 + N_2\right)\max\left(\frac{8}{\rho
N_1} ,\frac{8}{\left(1-\rho\right) N_ 2}\right)^2
}\right)
\end{multline*}
And finally, by taking $\alpha=\lambda_t-\delta$ we have the result.
\end{prv}

\section{Using smooth characteristic functions (SCF)}
\label{sec:randscf}
\begin{theorem}
Let $k$ be an analytic, integrable kernel with an inverse Fourier transform strictly greater than zero. For any $J > 0$, we define:
\begin{equation*}
d_{\Phi,J}=\left\{d_{\Phi,J}\left[p,q\right]=\frac{1}{J}\sum_{j=1}^{J}|\Phi_p\left(T_j\right)-\Phi_q\left(T_j\right)|
:\quad
p,q\in\mathcal{M}_{+}^{1}\left(\mathbb{R}^d\right),\Phi_p,\Phi_q\in
L_1\left(\mathbb{R}^d\right)\right\}
\end{equation*}
\medbreak
Then for any $J > 0$, $d_{\Phi,J}$ is a random metric on the space of Borel probability measures with integrable characteristic functions.
\end{theorem}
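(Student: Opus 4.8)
The plan is to mirror exactly the argument used for Theorem \ref{th:1}, reducing everything to the abstract random-metric construction of Lemma \ref{lemma:2}. Concretely, I would apply Lemma \ref{lemma:2} to the mapping $\Lambda : P \mapsto \Phi_P$ sending a Borel probability measure with integrable characteristic function to its smooth characteristic function. Lemma \ref{lemma:2} then yields directly that $d_{\Phi,J}[p,q] = \frac{1}{J}\sum_{j=1}^J |\Phi_p(T_j) - \Phi_q(T_j)|$ is a random metric, provided its two hypotheses are met: that $\Lambda$ is injective, and that its image consists of analytic functions on $\mathbb{R}^d$. Note that Lemma \ref{lemma:2} (through Lemma \ref{lemma:1}) requires $\Gamma$ to be absolutely continuous with respect to the Lebesgue measure, which I would take as a standing assumption as in the previous theorems.

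For injectivity, I would invoke Lemma \ref{lem:smooth-rkhs}: under the stated assumptions ($k$ continuous, integrable, translation-invariant with strictly positive inverse Fourier transform, and $P$ with integrable characteristic function), the map $P \mapsto \Phi_P$ is injective and each $\Phi_P$ belongs to the RKHS $H_k$. This is precisely the injectivity hypothesis of Lemma \ref{lemma:2}.

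For analyticity, I would combine two facts. First, $\Phi_P \in H_k$ by Lemma \ref{lem:smooth-rkhs}. Second, Lemma \ref{lemma:analytic} guarantees that every function of $H_k$ is analytic as soon as $k$ is bounded and analytic. Since $k$ is assumed analytic it is in particular continuous, and being a translation-invariant kernel it is positive definite, so $|k(u)| \le k(0) < \infty$ for all $u$; hence $k$ is bounded and Lemma \ref{lemma:analytic} applies. Consequently each $\Phi_P$ is analytic, so the image of $\Lambda$ lies in the space of analytic functions and the second hypothesis of Lemma \ref{lemma:2} holds.

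With both hypotheses verified, Lemma \ref{lemma:2} applied to $\Lambda = (P \mapsto \Phi_P)$ gives symmetry and the triangle inequality by construction, and gives $d_{\Phi,J}[p,q] > 0$ almost surely whenever $p \neq q$: in that case $\Phi_p - \Phi_q$ is a non-zero analytic function, hence non-zero $\Gamma$-almost everywhere by Lemma \ref{lemma:1}, so its values at the i.i.d.\ locations $T_j$ are almost surely non-zero. This establishes that $d_{\Phi,J}$ is a random metric. I expect the only delicate point to be the analyticity step, and within it the need for $k$ to be bounded so that Lemma \ref{lemma:analytic} applies; this is resolved by the elementary fact that a continuous positive-definite kernel satisfies $|k(u)| \le k(0)$, so analyticity (hence continuity) together with positive-definiteness already forces boundedness, without even appealing to integrability.
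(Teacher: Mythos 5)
Your proposal follows essentially the same route as the paper's own proof: injectivity and membership of $\Phi_P$ in the RKHS via Lemma \ref{lem:smooth-rkhs}, analyticity of $\Phi_P$ via Lemma \ref{lemma:analytic}, and the conclusion via the abstract construction of Lemma \ref{lemma:2}. Your additional verification that $k$ is bounded (via positive definiteness, $|k(u)|\leq k(0)$), so that Lemma \ref{lemma:analytic} genuinely applies, is a point the paper's proof leaves implicit, and your explicit mention of the standing assumption that $\Gamma$ is absolutely continuous with respect to the Lebesgue measure is likewise a correct and welcome precision.
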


\begin{prv}
Since $k$ is an analytic, integrable kernel with an inverse Fourier transform strictly greater then zero then by the Lemma \ref{lem:smooth-rkhs} the mapping $\Lambda : P\rightarrow \Phi_P$ is injective and $\Lambda\left(P\right)$ is an element of the RKHS associated with $k$. The Lemma \ref{lemma:analytic} shows that $\Phi_P$ is analytic. Therefore we can use Lemma \ref{lemma:2} to see that $d_{\Lambda,J}\left(P, Q\right)= d_{\Phi,J}\left(P,Q\right)$ is a random metric. This concludes the proof of the Theorem.
\end{prv}

\subsection{Proof of Proposition \ref{prop:finalement-SCF}}
\label{sec:finalement-SCF}
\begin{propo}
Let $\alpha\in ]0,1[$, $\gamma>0$ and $J\geq 2$. Let $\{T_j\}_{j=1}^J$
sampled i.i.d. from the distribution $\Gamma$ and let $X:=\{x_i\}_{i=1}^{n}$ and 
$Y:=\{y_i\}_{i=1}^{n}$ i.i.d. samples from $P$ and $Q$ respectively. Let us denote $\delta$ the $(1-\alpha)$-quantile of the asymptotic null distribution of $\widehat{d}_{\ell_1,\Phi,J}[X,Y]$ and $\beta$ the $(1-\alpha)$-quantile of the asymptotic null distribution of $\widehat{d}_{\ell_2,\Phi,J}^2[X,Y]$. Under the alternative hypothesis, almost surely, there exists $N\geq 1$ such that for all $n\geq N$, with a probability of at least $1-\gamma$ we have:
\begin{align}
\widehat{d}^2_{\ell_2,\Phi,J}[X,Y] >\beta \Rightarrow \widehat{d}_{\ell_1,\Phi,J}[X,Y]>\delta 
\end{align}
\end{propo}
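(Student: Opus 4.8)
The plan is to mirror the proof of Proposition \ref{prop:finalement} almost verbatim, with the $J$-dimensional mean-embedding vector replaced by the $2J$-dimensional frequency-domain vector $\mathbf{z}_i$. First I would record that $\widehat{d}^2_{\ell_2,\Phi,J}[X,Y]=\Vert\sqrt{n}\,\mathbf{S}_n\Vert_2^2$ and $\widehat{d}_{\ell_1,\Phi,J}[X,Y]=\Vert\sqrt{n}\,\mathbf{S}_n\Vert_1$ with $\mathbf{S}_n=\frac{1}{n}\sum_i\mathbf{z}_i\in\mathbb{R}^{2J}$, so that both statistics are built from the \emph{same} random vector. Under $H_0$ the central limit theorem gives $\sqrt{n}\,\mathbf{S}_n\to\mathcal{N}(0,\mathbf{\Sigma})$ with $\mathbf{\Sigma}=\text{Cov}(\mathbf{z})$; applying Lemma \ref{lemma:quantile} to this limiting Gaussian yields $\delta\geq\sqrt{\beta}$, exactly as in the ME case. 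This reduces the claim to checking the hypothesis of Lemma \ref{lem:betterpow} for $\mathbf{x}=\sqrt{n}\,\mathbf{S}_n$ in dimension $2J$ with $\epsilon=\sqrt{(\delta^2-\beta)/(2J(2J-1))}$.

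The substantive step is to show that, under $H_1$, every coordinate of the population limit $\mathbf{S}=\mathbb{E}[\mathbf{z}]$ is almost surely non-zero over the draw of $\{T_j\}_{j=1}^J$. Here $\mathbf{S}_n\to\mathbf{S}$ in probability, and each coordinate of $\mathbf{S}$ is, as a function of its test location, of the form $T\mapsto\int f(x)\cos(x^{T}T)\,d(P-Q)(x)$ or $T\mapsto\int f(x)\sin(x^{T}T)\,d(P-Q)(x)$, i.e. the real and imaginary parts of the smooth-characteristic-function difference $\Phi_P-\Phi_Q$. I would invoke the SCF random-metric construction of section \ref{sec:randscf}: by Lemma \ref{lem:smooth-rkhs} the map $P\mapsto\Phi_P$ is injective and $\Phi_P\in H_k$, and by Lemma \ref{lemma:analytic} it is analytic; hence under $P\neq Q$ the difference $\Phi_P-\Phi_Q$ is a non-zero analytic function, and by Lemma \ref{lemma:1} it vanishes only on a $\Gamma$-null set. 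Since the $T_j$ are sampled independently from $\Gamma$, the relevant coordinates of $\mathbf{S}$ are then non-zero almost surely, so for each such $k$ one gets $\mathbb{P}(|(\sqrt{n}\mathbf{S}_n)_k|\geq\epsilon)\to1$.

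With these ingredients the conclusion follows as before: choose $N$ so that for $n\geq N$ each relevant coordinate exceeds $\epsilon$ with probability at least $1-\gamma/(2J)$, take a union bound to get $\mathbb{P}(\min_k|(\sqrt{n}\mathbf{S}_n)_k|\geq\epsilon)\geq1-\gamma$, and apply Lemma \ref{lem:betterpow} with $t_1=\delta$, $t_2=\sqrt{\beta}$. The main obstacle is the non-degeneracy step: unlike the real-valued mean embedding, $\Phi_P-\Phi_Q$ is complex and the statistic splits it into separate real (cosine) and imaginary (sine) coordinates, so one must argue that \emph{both} parts are non-identically-zero analytic functions. I expect the genuine difficulty to lie exactly here — in particular the argument must handle (or the hypotheses must exclude) degenerate configurations such as a symmetric $P-Q$, for which the sine coordinates vanish identically; in that case Lemma \ref{lem:betterpow} can only be applied to the surviving cosine coordinates, which still suffices for the domination provided at least two of them are non-zero, but the reduction requires the refined bookkeeping that I would carry out in place of the naive $2J$-coordinate union bound.
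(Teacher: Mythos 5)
Your proposal follows the same overall strategy as the paper's proof (CLT under $H_0$, Lemma \ref{lemma:quantile} giving $\delta\geq\sqrt{\beta}$, a.s.\ non-degeneracy of the population vector under $H_1$ via injectivity of $P\mapsto\Phi_P$ from Lemma \ref{lem:smooth-rkhs} plus analyticity from Lemmas \ref{lemma:analytic} and \ref{lemma:1}, then divergence of $\sqrt{n}\,\mathbf{S}_n$, a union bound, and the norm-comparison lemma), and, importantly, you correctly identified the one genuine obstacle: the naive route of applying Lemma \ref{lem:betterpow} to all $2J$ real coordinates with $\epsilon=\sqrt{(\delta^2-\beta)/(2J(2J-1))}$ fails, because under $H_1$ only the complex function $\Phi_P-\Phi_Q$ is guaranteed non-zero, not its real and imaginary parts separately (your symmetric-$P-Q$ example is exactly the degenerate case). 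Where you differ from the paper is in how this is repaired. You propose case-dependent bookkeeping over the ``surviving'' coordinates, which does work asymptotically (since $\epsilon$ is a fixed constant and the non-zero coordinates of $\sqrt{n}\,\mathbf{S}_n$ diverge, any $m\geq 2$ survivors yield cross-term mass $m(m-1)\epsilon^2$), but it requires a generalization of Lemma \ref{lem:betterpow} to a random subset of coordinates and a per-draw case analysis. The paper instead formulates a dedicated variant, Lemma \ref{lem:better-power-scf}, whose hypothesis is placed on $\mathbf{z}:=\min_{j}\bigl(|\mathrm{Re}(x_j)|+|\mathrm{Im}(x_j)|\bigr)$, i.e.\ the real and imaginary parts are grouped \emph{per test location}: since $|\mathrm{Re}|+|\mathrm{Im}|$ dominates the modulus, the a.s.\ statement $|\Phi_P(T_j)-\Phi_Q(T_j)|>0$ for all $j$ (which follows directly from analyticity of the complex difference, with no case analysis on which part vanishes) is exactly what is needed, the cross-terms pair distinct locations to give $J(J-1)\epsilon^2$, and the threshold becomes $\epsilon=\sqrt{(\delta^2-\beta)/(J(J-1))}$ rather than your $2J(2J-1)$ version. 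The union bound is then taken over the $J$ continuous functionals $\phi_k(\mathbf{S}_n)=|(\mathbf{S}_n)_k^{\mathrm{Re}}|+|(\mathbf{S}_n)_k^{\mathrm{Im}}|$, each with tolerance $\gamma/J$. So your plan is sound and your diagnosis of the difficulty is exactly right; the paper's grouping trick is the clean, uniform formalization of the ``refined bookkeeping'' you deferred, and adopting it would close the one step your sketch leaves open (in particular it also covers mixed configurations where the sine part survives at some locations and the cosine part at others, which your ``surviving cosine coordinates'' phrasing does not quite capture).
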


\begin{prv}
Let us first introduce the following Lemma:
\begin{lemma}
\label{lem:better-power-scf}
Let $\mathbf{x}$ a random vector $\in \mathbb{C}^J$ with $J\geq 2$,  $\epsilon>0$, $\gamma>0$ and $\mathbf{z}:= \min\limits_{j\in[|1,J|]}|\text{Re}(x_j)|+|\text{Im}(x_j)|$  where $\text{Im}$ and  $\text{Re}$ are respectively the imaginary and real part functions. Moreover let denote $\mathbf{X}:=(\text{Im}(x_j),\text{Re}(x_j))_{j=1}^J\in\mathbb{R}^{2J}$. If
\begin{align*}
\mathbb{P}(\mathbf{z}\geq\epsilon)\geq 1-\gamma
\end{align*}
we have with a probability of
at least $1-\gamma$ that, $\forall t_1\geq t_2\geq 0$,
if $\epsilon\geq \sqrt{\frac{t_1^2-t_2^2}{J(J-1)}}$, then
$$\Vert \mathbf{X}\Vert_2 >t_2\Rightarrow \Vert \mathbf{X}\Vert_1\geq t_1.$$
\end{lemma}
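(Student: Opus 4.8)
The plan is to mirror the proof of Lemma \ref{lem:betterpow}, its real-valued analogue, the only genuinely new ingredient being that the real and imaginary parts belonging to the same complex coordinate $x_j$ must be treated as a single unit. The reason is structural: the hypothesis controls only the combined magnitude $a_j := |\text{Re}(x_j)| + |\text{Im}(x_j)|$ (indeed $\mathbf{z} = \min_{j} a_j$), and gives no lower bound on the $2J$ real coordinates of $\mathbf{X}$ individually, since one of $\text{Re}(x_j), \text{Im}(x_j)$ could vanish. So the correct quantity on which to run the cross-term argument is $a_j$, not each entry of $\mathbf{X}$ separately.

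First I would record the two elementary facts $\|\mathbf{X}\|_1 = \sum_{j=1}^J a_j$ and $\|\mathbf{X}\|_2^2 = \sum_{j=1}^J |x_j|^2$, together with the pointwise inequality $a_j^2 = |x_j|^2 + 2|\text{Re}(x_j)|\,|\text{Im}(x_j)| \geq |x_j|^2$. Expanding the square of the $\ell_1$ norm over the $J$ grouped magnitudes then yields
\[
\|\mathbf{X}\|_1^2 = \sum_{j=1}^J a_j^2 + \sum_{j\neq k} a_j a_k \geq \|\mathbf{X}\|_2^2 + \sum_{j\neq k} a_j a_k .
\]
On the event $\{\mathbf{z} \geq \epsilon\}$, which has probability at least $1-\gamma$ by hypothesis, every $a_j \geq \epsilon$, so each of the $J(J-1)$ ordered cross terms is at least $\epsilon^2$, giving $\|\mathbf{X}\|_1^2 \geq \|\mathbf{X}\|_2^2 + J(J-1)\epsilon^2$ on this event. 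This is the exact complex counterpart of the bound reached in Lemma \ref{lem:betterpow}.

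I would then close with the same threshold computation: the assumption $\epsilon \geq \sqrt{(t_1^2 - t_2^2)/(J(J-1))}$ rearranges to $J(J-1)\epsilon^2 \geq t_1^2 - t_2^2$, so on the same event $\|\mathbf{X}\|_2 > t_2$ forces $\|\mathbf{X}\|_1^2 \geq t_2^2 + (t_1^2 - t_2^2) = t_1^2$, i.e. $\|\mathbf{X}\|_1 \geq t_1$, and this holds uniformly over all admissible $t_1 \geq t_2 \geq 0$ since the norm bound does not depend on them. The main (and essentially only) subtlety is the grouping step: one must keep each real/imaginary pair together rather than bounding the $2J$ coordinates individually, which is precisely what produces the factor $J(J-1)$ instead of $2J(2J-1)$ and so matches the stated condition on $\epsilon$; the inequality $a_j^2 \geq |x_j|^2$ is what allows $\|\mathbf{X}\|_2^2$ to reappear after this regrouping.
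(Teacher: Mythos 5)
Your proof is correct and takes essentially the same route as the paper's: group each real/imaginary pair into $a_j := |\mathrm{Re}(x_j)|+|\mathrm{Im}(x_j)|$, expand $\Vert\mathbf{X}\Vert_1^2$ over these $J$ grouped magnitudes, lower-bound the $J(J-1)$ cross terms by $\epsilon^2$ on the event $\{\mathbf{z}\geq\epsilon\}$, and close with the same threshold algebra as in the real-valued Lemma. If anything, your write-up is slightly more careful than the paper's, which asserts $\Vert\mathbf{X}\Vert_1^2=\Vert\mathbf{X}\Vert_2^2+\sum_{i\neq j}a_i a_j$ as an equality (silently dropping the nonnegative intra-coordinate terms $2|\mathrm{Re}(x_j)||\mathrm{Im}(x_j)|$), whereas your inequality via $a_j^2\geq|x_j|^2$ states the bound correctly and in the direction needed.
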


\begin{prv}
First we remarks that:
\begin{align*}
\epsilon>\sqrt{\frac{t_1^2-t_2^2}{J\left(J-1\right)}} \Rightarrow & J\left(J-1\right)\epsilon>t_1^2-t_2^2\\
\Rightarrow & t_2^2>t_1^2-J\left(J-1\right)\epsilon^2
\end{align*}
Therefore, we have:
\begin{align*}
\|\mathbf{X}\|_2\geq t_2 \Rightarrow & \|\mathbf{X}\|_{2}^2+J\left(J-1\right)\epsilon^2\geq t_1^2\\
\Rightarrow & \sqrt{\|\mathbf{X}\|_{2}^2+J\left(J-1\right)\epsilon^2}\geq t_1
\end{align*}
But we have that:
$$\|\mathbf{X}\|_{1}^2=\|\mathbf{X}\|_{2}^2+\sum_{i\neq
j}\left(|\text{Im}(x_i)|+|\text{Re}(x_i)|\right)\left(|\text{Im}(x_j)|+|\text{Re}(x_j|\right)$$
Therefore we have with a probability of 1-$\gamma$ that:
$$\|\mathbf{X}\|_{1}^2\geq \|\mathbf{X}\|_{2}^2+J\left(J-1\right)\epsilon^2$$
And:
$$\|\mathbf{X}\|_2\geq t_2\Rightarrow \|\mathbf{X}\|_1\geq t_1$$
\end{prv}
Moreover by denoting $\delta$ the $(1-\alpha)$-quantile of the asymptotic null distribution of $\widehat{d}_{\ell_1,\Phi,J}[X,Y]$ and $\beta$ the $(1-\alpha)$-quantile of the asymptotic null distribution of $\widehat{d}_{\ell_2,\Phi,J}^2[X,Y]$ thanks to Lemma \ref{lemma:quantile}, we have that $\delta\geq\sqrt{\beta}$.  Therefore to show the result we only need to show that the assumption of the Lemma \ref{lem:better-power-scf} is sastified for the random vector $\mathbf{X}:=\sqrt{n}\mathbf{S}_n\in\mathbb{R}^{2J}$, $t_1=\delta$ and $t_2=\sqrt{\beta}$, i.e. for $\epsilon=\sqrt{\frac{\delta^2-\beta}{J(J-1)}}$ under the alternative hypothesis. Under $H_1: P\neq Q$, we have $\mathbf{S}_{n}$ converges in probability to the vector
$\mathbf{S}$ where $\mathbf{\Sigma}:=\mathbb{E}_{(x,y)\sim
(p,q)}(\Sigma_n)$
and  $\mathbf{S}:=\mathbb{E}_{(x,y)\sim (p,q)}(\mathbf{S}_{n})$. Moreover we have $\mathbf{S}=(\text{Im}(\Phi_P(T_j)-\Phi_Q(T_j)),\text{Re}(\Phi_P(T_j)-\Phi_Q(T_j)))_{j=1}^{J}\in\mathbb{R}^{2J}$. Indeed, according to the Definition \ref{def:smooth}, we have for all $j\in[|1,J|]$:
\begin{align*}
\phi_P(T_j)&:=\int_{\epsilon\in\mathbb{R}^d}\psi_P(\epsilon)k(T_j-\epsilon)d\epsilon\\
&=\int_{\epsilon\in\mathbb{R}^d}\int_{x\in\mathbb{R}^d}\exp({ix^{T}\epsilon})k(T_j-\epsilon) dP(x)d\epsilon\\
&=\int_{x\in\mathbb{R}^d} \left(\int_{\epsilon\in\mathbb{R}^d} \exp({ix^{T}(\epsilon-T_j)})k(\epsilon-T_j)\right)\exp({ix^{T}T_j})d\epsilon dP(x)\\
&=\int_{x\in\mathbb{R}^d}f(x)\exp({ix^{T}T_j})dP(x)
\end{align*}
and all these equalities hold as $k$ is integrable. Lemma \ref{lem:smooth-rkhs} guarantees the injectivity of the function $\Gamma:P\rightarrow \Phi_P$, and as $P\neq Q$, therefore $\Phi_P-\Phi_Q$ is a non-zero function.  Moreover $\Phi_P$ and $\Phi_Q$ live in the RKHS $H_k$ associated with $k$. Therefore thanks to Lemma \ref{lemma:analytic}, $\Phi_P-\Phi_Q$ is analytic.
Therefore thanks to Lemma \ref{lemma:1}, $\Phi_P-\Phi_Q$ is almost surely non zero. Moreover the $(T_j)_{j=1}^J$ are independent, therefore almost surely $(|\Phi_P(T_j)-\Phi_Q(T_j)|)_j$ are all non zero, and then
$\left(\left|\text{Im}(\Phi_P(T_j)-\Phi_Q(T_j)))\right|+\left|\text{Re}(\Phi_P(T_j)-\Phi_Q(T_j))\right|\right)_j$ are all non zero. Then by continuity of the functions defined for all $k\in [|1,J|]$ by:
\begin{align}
\phi_k:x:=(x_j^1,x_j^2)_{j=1}^J\in\mathbb{R}^{2J}\rightarrow |x_k^1|+|x_k^1|
\end{align}
We have that for all $k\in[|1,J|]$, $\phi_k(\mathbf{S}_n)$ converge in probability towards $\phi_k(\mathbf{S})$, which are almost surely all non zeros. Then for all $k\in\llbracket 1,J \rrbracket$ we have:
\begin{equation*}
\mathbb{P}\left(
\bigg|(\sqrt{n}\phi_k(\mathbf{S}_n)\bigg|>\epsilon\right)
=\mathbb{P}_{X,Y}\left(
\bigg|(\phi_k(\mathbf{S}_n)\bigg|-\frac{\epsilon}{\sqrt{n}}>0\right)
\end{equation*}
And as $\frac{\epsilon}{\sqrt{n}}\rightarrow 0$ as $n\rightarrow \infty$, we
have finally almost surely for all $k\in\llbracket 1,J \rrbracket$:
\begin{align*}
\mathbb{P}_{X,Y}\left(\bigg|(\sqrt{n}\phi_k(\mathbf{S}_n)\bigg|\geq \epsilon\right)\rightarrow 1 \text{  as  }n\rightarrow \infty
\end{align*}
Therefore almost surely there exist $N\geq 1$ such that for all $n \geq N$ and for all $k\in\llbracket 1,J \rrbracket$:
\begin{align*}
\mathbb{P}_{X,Y}\left(\bigg|(\sqrt{n}\phi_k(\mathbf{S}_n)\bigg|\geq \epsilon\right)\geq 1-\frac{\gamma}{J}
\end{align*}
Finally by applying a union bound we obtain that almost surely, for all $n \geq N$:
\begin{align*}
\mathbb{P}_{X,Y}\left(\forall k\in[|1,J|]\text{,  }\bigg|(\sqrt{n}\phi_k(\mathbf{S}_n)\bigg|\geq \epsilon\right)\geq 1-\gamma
\end{align*}
Therefore by applying Lemma \ref{lem:betterpow}, we obtain that, almost surely, for all $n \geq N$, with a probability of at least $1-\gamma$:
\begin{align*}
\widehat{d}_{\ell_2,\Phi,J}[X,Y] >\sqrt{\beta} \Rightarrow \widehat{d}_{\ell_1,\Phi,J}[X,Y]>\delta 
\end{align*}

\end{prv}

\section{Experiments}

\subsection{Realization of the $\ell_1$-based tests} 
\label{sec: cdf}
Indeed to realize these tests, we need to compute the $1-\alpha$ quantile of the $Nake\left(\frac{1}{2},1,J\right)$. To do so we need to obtain the cumulative distribution function ( CDF ) of the sum of $J$ Nakagami i.i.d. But as we do not have a closed form of this distribution, we need to estimate this CDF by considering the empirical distribution function. Indeed to generate samples from $Nake\left(\frac{1}{2},1,J\right)$, it is sufficient to generate samples from multivariate normal distribution $\mathcal{N}\left(0,I_{d_{J}}\right)$, and to sum the absolute values of the $J$ coordinates of theses vectors.

Moreover, we have the following result:
\begin{theorem}{(\textbf{Dvoretzky–Kiefer–Wolfowitz inequality})}
Let $x_1$,...,$x_n$ be real-valued independent and identically distributed random variables with cumulative distribution function $F\left(.\right)$
Let $F_n$ denote the associated empirical distribution function defined by:
$$F_n\left(t\right)=\frac{1}{n}\sum_{i=1}^n\mathbf{1}_{x_i\leq t}$$
Then we have $\forall \epsilon>0$:
$$\mathbb{P}\left(||F_n-F||_{\infty}>\epsilon\right)\leq 2e^{-2n\epsilon^{2}}$$
\end{theorem}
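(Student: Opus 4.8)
The plan is to reduce the statement to the uniform empirical process and then to control the supremum over the threshold at no extra cost in $n$. First I would treat the \emph{pointwise} behaviour: for a fixed threshold $t$, the quantity $n F_n(t)$ is a sum of $n$ i.i.d. Bernoulli variables of mean $F(t)$, so $F_n(t)-F(t)$ is an average of $n$ independent variables taking values in $[0,1]$. Hoeffding's inequality (the same tool already used in Proposition~\ref{prop:finit-control-prop}) then gives $\mathbb{P}(|F_n(t)-F(t)|>\epsilon)\le 2e^{-2n\epsilon^2}$, which already produces both the exponent $-2n\epsilon^2$ and the prefactor $2$ in the claimed bound. The entire difficulty is to replace the fixed $t$ by $\sup_t$ without paying any multiplicative price.

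Second, I would eliminate the dependence on the unknown $F$ by the probability integral transform. When $F$ is continuous, $(F(x_1),\dots,F(x_n))$ has the law of $n$ i.i.d.\ uniforms on $[0,1]$, and $\|F_n-F\|_\infty$ equals in distribution $\sup_{u\in[0,1]}|G_n(u)-u|$, where $G_n$ is the empirical distribution of the uniform sample; the general (possibly atomic) case follows because the continuous $F$ is the stochastically extremal one. This reduces everything to the Kolmogorov--Smirnov statistic of a uniform sample, whose deviation process has jumps only at the $n$ order statistics, so the supremum is attained on a finite set.

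Third comes the crux. A naive union bound over those $n$ jump points, combined with the pointwise Hoeffding estimate, yields only $2n\,e^{-2n\epsilon^2}$, i.e.\ a spurious polynomial prefactor. To recover the clean constant $2$ I would split the two-sided event into the one-sided deviations $D_n^{+}=\sup_t(F_n(t)-F(t))$ and $D_n^{-}=\sup_t(F(t)-F_n(t))$ and establish the sharp one-sided bound $\mathbb{P}(D_n^{+}>\epsilon)\le e^{-2n\epsilon^2}$; by symmetry the same holds for $D_n^{-}$, so a union bound over just these two events gives $\mathbb{P}(\|F_n-F\|_\infty>\epsilon)\le 2e^{-2n\epsilon^2}$.

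The main obstacle is precisely this sharp one-sided inequality: removing the polynomial factor in $n$ while retaining the optimal constant $2$ in the exponent. It cannot be reached by elementary union bounds and is the content of Massart's refinement of the original Dvoretzky--Kiefer--Wolfowitz estimate, requiring a delicate analysis of the boundary-crossing probabilities of the uniform empirical process (with the regimes $\epsilon\le n^{-1/2}$ and $\epsilon>n^{-1/2}$ handled separately). Since the downstream use here is only to control the error of the empirical CDF of the Nakagami sum, one may alternatively invoke the original DKW theorem with an unspecified constant $C$, whose proof via the reflection principle for the empirical process is substantially shorter, and whose weaker constant is harmless for estimating the $(1-\alpha)$-quantile.
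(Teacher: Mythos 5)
The paper offers no proof of this statement: the Dvoretzky--Kiefer--Wolfowitz inequality is quoted in the supplementary material (\S\ref{sec: cdf}) as a known external theorem, used only to certify that $10^5$ Monte Carlo draws estimate the CDF of Naka$(\frac{1}{2},1,J)$ within $\epsilon\leq 0.0051<\alpha$. So there is no internal argument to compare yours against, and your attempt should be judged on its own terms.

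On those terms it is sound, and commendably honest about where it stops. The pointwise Hoeffding bound, the probability-integral-transform reduction to the uniform empirical process (with the correct remark that an atomic $F$ only makes $\Vert F_n-F\Vert_\infty$ stochastically smaller, since $F_n(t)-F(t)=G_n(F(t))-F(t)$ restricts the supremum to the range of $F$), the observation that a union bound over the $n$ jump points leaves a spurious factor $n$, and the derivation of the two-sided constant $2$ from the sharp one-sided bound via $D_n^{+}\overset{d}{=}D_n^{-}$ are all correct. You also correctly identify that the one-sided bound $\mathbb{P}(D_n^{+}>\epsilon)\leq e^{-2n\epsilon^2}$ \emph{is} Massart's 1990 theorem, and you invoke it rather than prove it; since that step is a genuinely delicate boundary-crossing analysis, deferring to the literature there is functionally the same move the paper makes in citing the inequality wholesale. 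One caveat you should make explicit rather than gesture at: Massart's one-sided inequality holds only for $\epsilon\geq\sqrt{\ln 2/(2n)}$, and for smaller $\epsilon$ the two-sided claim is trivial because $2e^{-2n\epsilon^2}\geq 1$ --- this is presumably what your split into the regimes $\epsilon\leq n^{-1/2}$ and $\epsilon>n^{-1/2}$ was meant to capture, but as written the split is attributed to the internals of Massart's proof rather than to this elementary dichotomy, which is all that is needed to pass from the conditional one-sided bound to the unconditional two-sided statement. Your closing remark is also apt for this paper: the downstream use tolerates any constant $C$ in place of $2$, at the mild cost of more simulation samples, so the weaker classical DKW bound would suffice.
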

Finally we have, $F\left(x\right)-\epsilon\leq F_n\left(x\right)\leq F\left(x\right)+\epsilon$ with a probability of $1-\delta$ where $\epsilon=\sqrt{\frac{ln\left(\frac{2}{\delta}\right)}{2n}}$.

Then with a probability of $99\%$, and by taking $n=100\,000$ samples i.i.d of the $Naka\left(\frac{1}{2},1,J\right)$, we can estimate the CDF with an error of $\epsilon\leq 0.0051$,
which is less than $\alpha=0.01$.

\textbf{Optimization:} The lower bounds that we optimize to perform
$\textbf{L1-opt-ME}$ and $\textbf{L1-opt-SCF}$ are non-convex, as in the
prior art \cite{test}. However, the use of the $\ell_1$-norm makes optimization even harder, as it is no longer a smooth. Moreover we need to differentiate through the inverse square root matrix operation which can lead is some cases to degenerate matrices during the gradient ascent. Therefore to avoid this, we decide to check at each step the convergence of the inverse square root matrix operation. Further work should consider dedicated optimization algorithms.

Table \ref{table-time} gives the run times of the different optimized tests on the Blobs problem when the test sample size is $n^{te}=1e6$.

\begin{table}[!h]
\centering\small%
\begin{tabular}{c |  c c c c}
 & \textbf{L1-opt-ME} & \textbf{ME-full} & \textbf{L1-opt-SCF} & \textbf{SCF-full}\\ 
\hline
 Run Time (s) & 164.23 & 157.97 & 599.77 & 579.42\\ 
    \end{tabular}
\caption{Run times of the optimized tests when $n^{te}=1e6$ and $J=2$ for the blobs problem.\label{table-time}}
\end{table}

\textbf{Software implementation}: as the expression of the optimization
objective is rather complicated, we use the automatic differentiation of
pytorch \cite{paszke2017automatic}, to compute its gradient, and then
proceed with a gradient ascent where the step size after $t$ iterations is the inverse of the euclidean  norm of the gradient times $\sqrt{t}$. The specific code can be found at
\url{https://github.com/meyerscetbon/l1_two_sample_test}.

\subsection{Experimental verification of the Propostion \ref{prop:finalement}}
To show the validity of the Proposition \ref{prop:finalement}
experimentally, we examine the behavior of the unormalized $\ell_2$ and
$\ell_1$ based tests respectively defined in eq. \ref{eq:ME-stat} and
\ref{l_1:comparison}. In Figure \ref{fig:unor-nsample}, we compare the
unormalized tests on the GVD problem where we increase the test sample
size with $d=100$ and $J=2$. Here the locations are chosen at random and
are sampled from a standard normal distribution. Moreover here
$\alpha=0.05$. Compared to the normalized tests studied in
\autoref{sec:experimental_study} is that here we no longer have a direct access to the quantiles of the asymptotic null distribution. Being a problem where we can generated the data ourselves, we have therefore estimate the quantiles of our interest. Moreover, when comparing the $\ell_2$ and $\ell_1$ tests, we sample at random the locations and we evaluate the two statistics at the same locations.
We see that as the test sample size increases, the $\ell_1$-based tests rejects better the null distribution.

\begin{figure}[!h]
\begin{minipage}{0.4\textwidth}
\caption{Plot of type-II error against the test sample size $n^{te}$ in the GVD toy problem: $P=\mathcal{N}\left(0,I_d\right)$ and 
$Q=\mathcal{N}\left(0,\text{diag}(2,1,...,1)\right)$ with $d=100$\label{fig:unor-nsample}.}
\end{minipage}%
\hfill%
\begin{minipage}{0.55\textwidth}
\includegraphics[width=\textwidth]{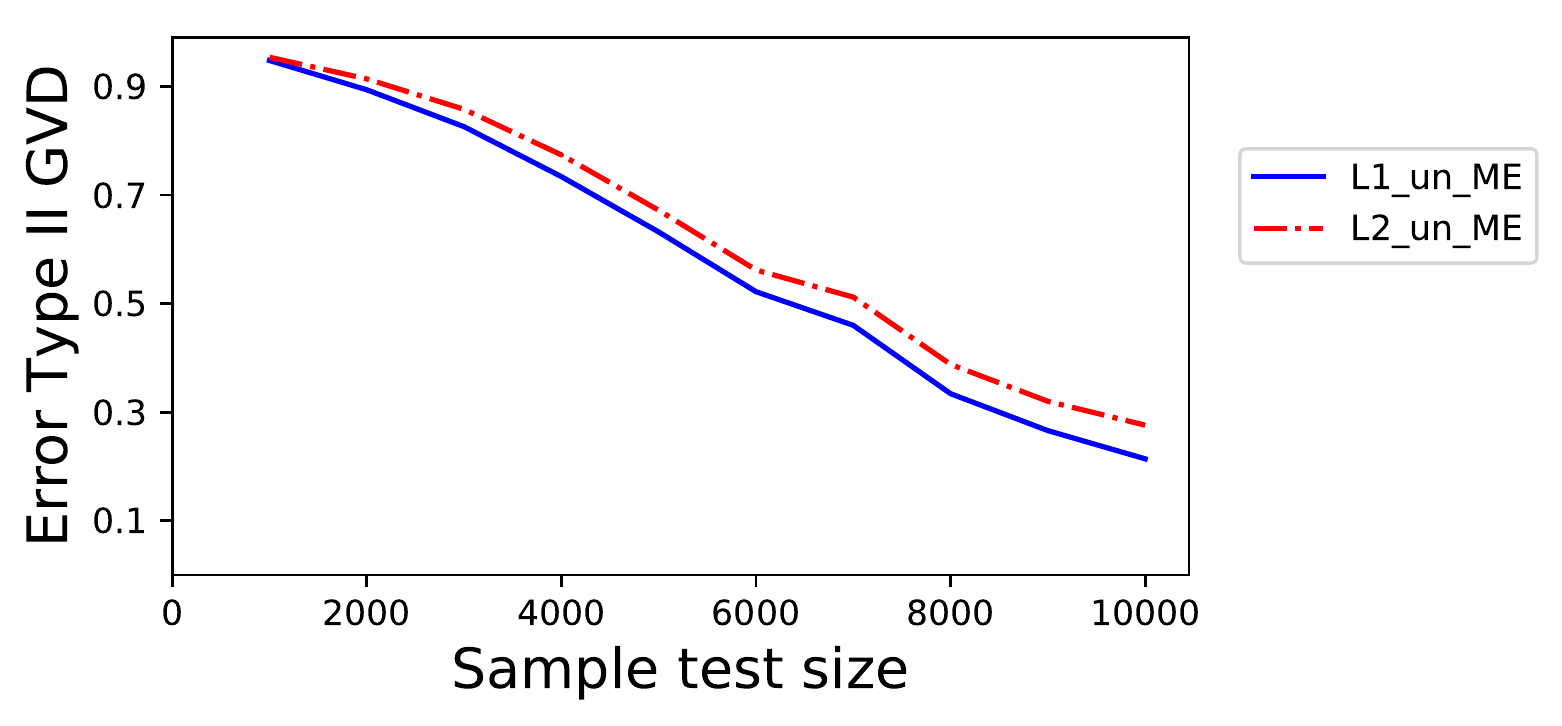}
\end{minipage}%
\end{figure}

\subsection{Experiments on a more difficult problem}
\label{sec:GMD}
\begin{figure}[!h]
\begin{minipage}{0.45\textwidth}
\caption{Plot of type-II error against the test sample size $n^{te}$ in the following toy problem: $P=\mathcal{N}\left(0,I_d\right)$ and 
$Q=\mathcal{N}\left(\left(0.3,0,..,0\right)^{T},I_d\right)$ with $d=100$\label{fig:GMD}}
\end{minipage}%
\hfill%
\begin{minipage}{0.5\textwidth}
\includegraphics[width=\textwidth]{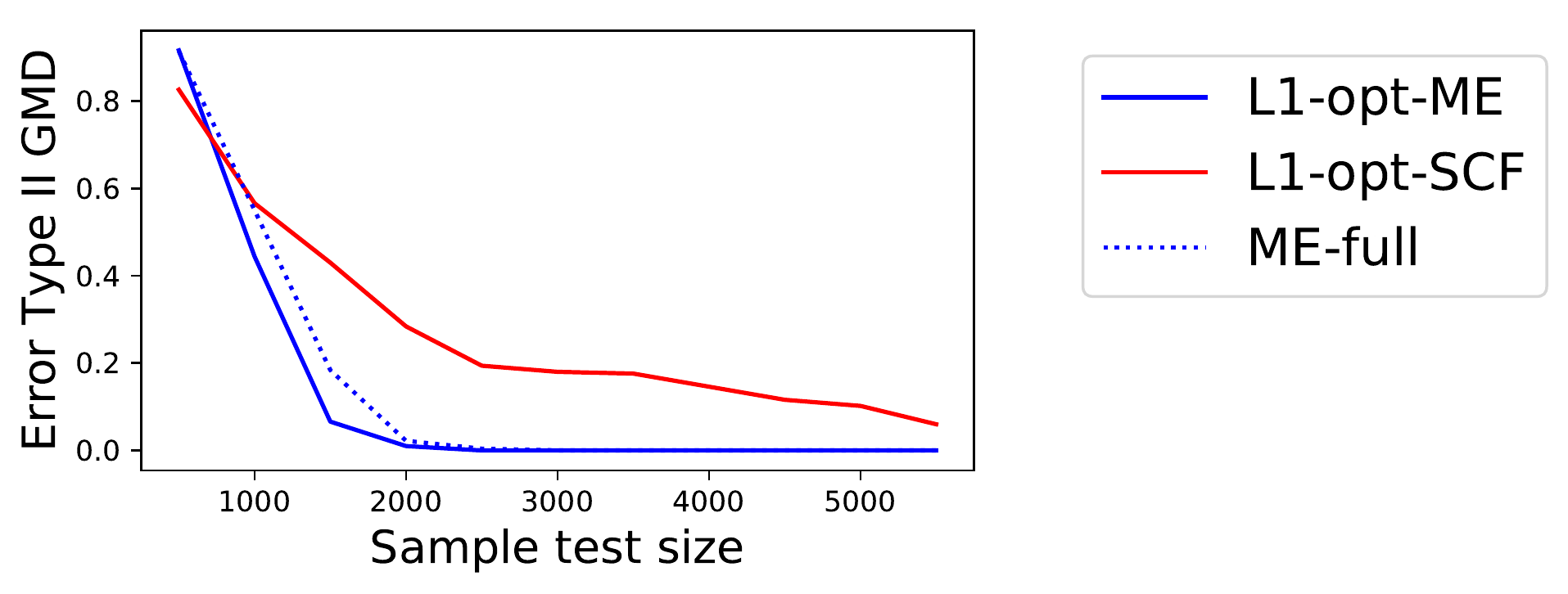}
\end{minipage}%
\end{figure}

In Figure \ref{fig:GMD}, we consider the following GMD problem: $P\sim\mathcal{N}\left(0,I_d\right)$, 
$Q\sim\mathcal{N}\left(\left(0.3,0,..,0\right)^{T},I_d\right)$ with $d=100$.
The figure shows that when the problem of GMD is more difficult, we can see that \textbf{L1-opt-ME} performs the best. 

\subsection{Informative features}
\label{sec:info}
We show that the optimization of the proxy
$\widehat{\lambda}_{t}^{tr}\left(\theta\right)$ for the test power in the
$\ell_1$ case is informative for revealing the difference of the two
samples in the ME test as in \cite{test} with the $\ell_2$ version.
We consider the Gaussian Mean Difference (GMD) problem (see Table \ref{tab:synthetic_pb}), where both $P$ and $Q$ are two-dimensional normal distributions with different means. We use $J = 2$ test locations $T_1$ and $T_2$ , where $T_1$ is fixed to the location indicated by the black triangle in Figure \ref{info}. The contour plot shows $T_2 \rightarrow \widehat{\lambda}_{t}^{tr}\left(T_1,T_2\right)$.

\begin{figure}[!h]
    \begin{minipage}{.265\linewidth}
    \caption{\textbf{Illustrating interpretable features}, replicating
    in the $\ell_1$ case the figure of \cite{test}. A contour plot of $ \widehat{\lambda}_{t}^{tr}\left(T_1,T_2\right)$ as a function of $T_2$,
    when $J=2$,
    and $T_1$ is fixed. The red and black dots represent the samples from
    the $P$ and $Q$ distributions, and the big black triangle the position of
    $T_1$.
    \label{info}}
    \end{minipage}%
    \hfill%
    \begin{subfigure}{0.35\linewidth}
	\hspace*{-.09\linewidth}%
        \includegraphics[width=1.2\linewidth]{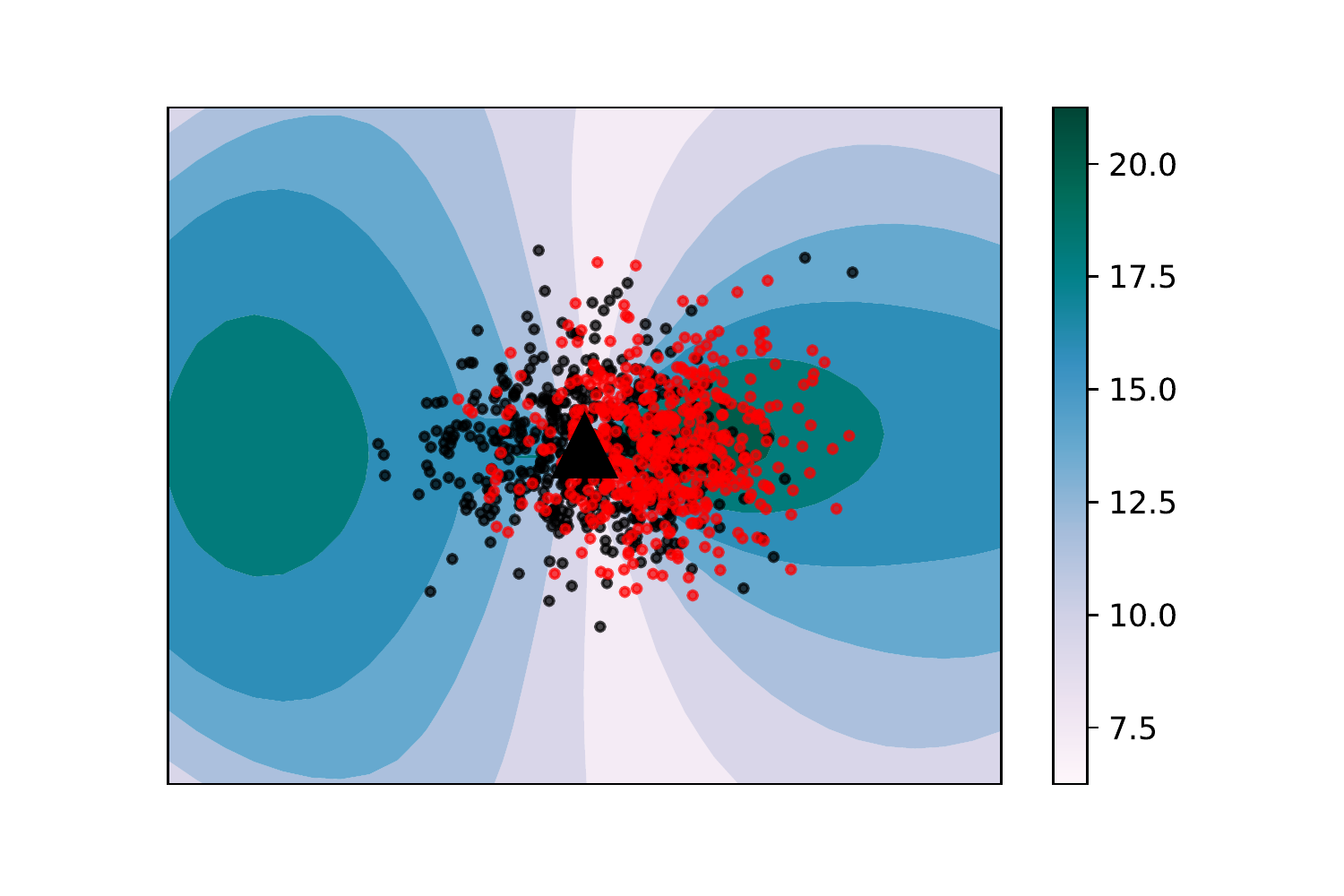}
        \caption{$T_1$ is centered\label{infoa}}
    \end{subfigure}%
    \begin{subfigure}{0.35\linewidth}
	\hspace*{-.04\linewidth}%
        \includegraphics[width=1.2\linewidth]{informative_4.pdf}
        \caption{$T_1$ lives in the left region\label{infob}}
    \end{subfigure}
\end{figure}

Figure \ref{infoa} suggests that $\widehat{\lambda}_{t}^{tr}\left(T_1,T_2\right)$ is maximized when $T_2$ is placed in either of the two regions that
captures the difference of the two samples i.e., the region in which the probability masses of $P$ and
$Q$ have less overlap. In Figure \ref{infob}, we consider placing $T_1$ in one of the two key regions. In this case, the contour plot shows that $T_2$ should be placed in the other region to maximize $\widehat{\lambda}_{t}^{tr}\left(T_1,T_2\right)$, implying that placing multiple test locations in the same neighborhood does not increase the discriminability.
The two modes on the left and right suggest two ways to place the test location in a region that
reveals the difference. The non-convexity of the $\widehat{\lambda}_{t}^{tr}\left(T_1,T_2\right)$ is an indication of many informative ways to 
detect differences of $P$ and $Q$, rather than a drawback. A convex objective would not capture this multimodality.

\subsection{Real problem: 20 newsgroups text dataset}
\label{sec:20group}

\begin{table}[!t]
\center
\begin{tabular}{lccccccccc}
P  & Q  &   L1-opt-ME & L1-grid-ME & L1-opt-SCF & L1-grid-SCF & ME-full&SCF-full\\ 
\hline
sci (1187) & sci (1187) & \textbf{0.00} & \textbf{0.00} & 0.004 &
\textbf{0.00} & 1 &0.002\\
sci (1187) & comp (292) & \textbf{0.00} & 0.496 & \textbf{0.00} & 0.170&
\textbf{0.00} & 0.634 \\
sci (1187) & alt (240) & \textbf{0.00} & 0.370 & \textbf{0.00} & 0.064 &
\textbf{0.00} & 0.510 %\\
%sci (1187) & rec (596) & 0. & 0.37 & 0. & 0.524
\end{tabular}

\caption{Type-I errors and Type-II errors of various the L1-tests in the problem of distinguishing the newsgroups text dataset. $\alpha= 0.01$. $J = 2$. The number in brackets denotes the test sample size of each samples.\label{tab:text}}
\label{Tab:text}
\end{table}

In this experiment we use the 20 newsgroups text dataset from \cite{Lang95} which comprises around 18000 newsgroups posts on 20 topics. We consider 3
%4
categories which are: "comp", "sci", and "alt"
%"rec"
. The first category is about components in hardware systems, the second is about sciences and spaces, 
%the third is about cars and motorcycles, 
and the last is about religion. To perform the tests we need to embed
these documents in a metric space. For this, we use the TF-IDF matrix by
group of two categories with a $df\geq 30$, which lead to embed the
documents in  spaces of 3\,000 dimensions approximately. Then we perform
the two-sample tests on the embedded documents.
We compare the distribution of "sci" documents  with
others, as well as with itself to evaluate the level of the tests.
The number of samples of each category is not the same, hence to perform the tests from \cite{test}, we take
randomly $n_{\text{min}}$ samples for both distributions without
replacement (where $n_{\text{min}}$ is in fact the number of samples of
the distributions compared to the sci distribution). We set the number of location $J=2$.

Type-I errors and type-II errors are summarized in Table \ref{Tab:text} The two first columns indicates the categories of the papers in the two samples. This task represents a case in which $H_0$ holds. In this case all the tests are conservative except the \textbf{ME-full} test which totally rejecting the null hypothesis.
In the other problems, we show the Type-II errors of our tests.
The $\ell_1$ optimized tests perform very well, which shows that the
locations learned are indeed discriminant. The $\ell_1$ approaches bring
a clear gain in statistical control compared to their $\ell_2$
counterparts.

\subsection{Real problem: fast-food distribution}

\begin{table*}[!ht]
\centering\small
\begin{tabular}{lccccccc}
Problem &  \hspace*{-3.5ex}L1-opt-ME\hspace*{-.5ex} &
\hspace*{-.5ex}L1-grid-ME\hspace*{-.5ex}  &
\hspace*{-.5ex}L1-opt-SCF\hspace*{-.5ex} &
\hspace*{-.5ex}L1-grid-SCF\hspace*{-.5ex} &
\hspace*{-.5ex}ME-full\hspace*{-.5ex} &
\hspace*{-.5ex}SCF-full\hspace*{-.5ex} &
\hspace*{-.5ex}MMD-quad\hspace*{-1ex} \\
\hline
McDo vs McDo (2002)    &   0.010 &    0.000 &    0.000 &     0.000 &    0.012 &     0.000 &     0.000 \\
\end{tabular}
\caption{\textbf{Fast food dataset:} Type-I errors for
distinguishing the distribution of fast food restaurants. $\alpha= 0.01$.
$J = 3$. The number in brackets denotes the sample size of the
distribution on the right. We consider MMD-quad as the gold standard.}
\label{Tab:exp3result-level}
\end{table*}
Table \ref{Tab:exp3result-level} summarizes Type-I errors observed on the Mac Donald's
vs Mac Donald's problem. It shows that the optimized tests based on mean
embeddings stay roughly at the specified level $\alpha=0.01$ when $H_0$
hold, and others are more conservative.

\begin{figure}[b!]
\begin{minipage}{.3\linewidth}%
\caption{\textbf{Fast food data:} Visualizing interpretable locations for
    differences in Mc Donald's vs
    Burger King and Mc Donald's vs Wendy's. The lines correspond to the
    distribution of the locations chosen for the $T_J$ features by the
    L1-opt-ME procedure. The distributions are estimated with a kernel
    density estimate. The lines represent 
    the contours probabilities 80\% and 90\%.
\label{fig:usa}}
\end{minipage}%
\begin{minipage}{.7\linewidth}%
   \includegraphics[width=\linewidth]{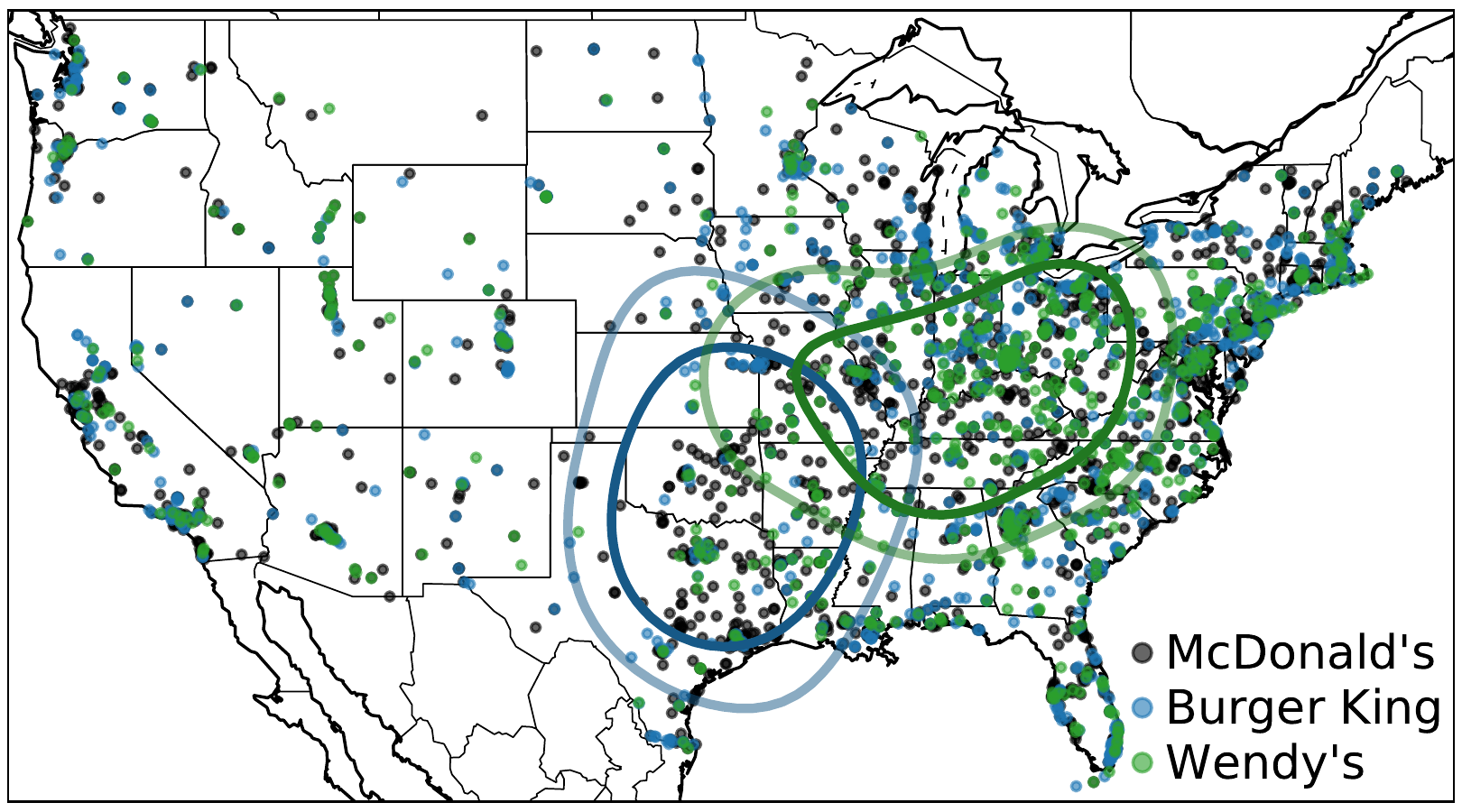}%
\end{minipage}
\end{figure}

\begin{figure}
\begin{minipage}{.3\linewidth}
    \caption{\textbf{Mc Donald's vs Burger King}\label{fig:burger_king}}
\end{minipage}
\hfill%
\begin{minipage}{.6\linewidth}
    \includegraphics[width=\linewidth]{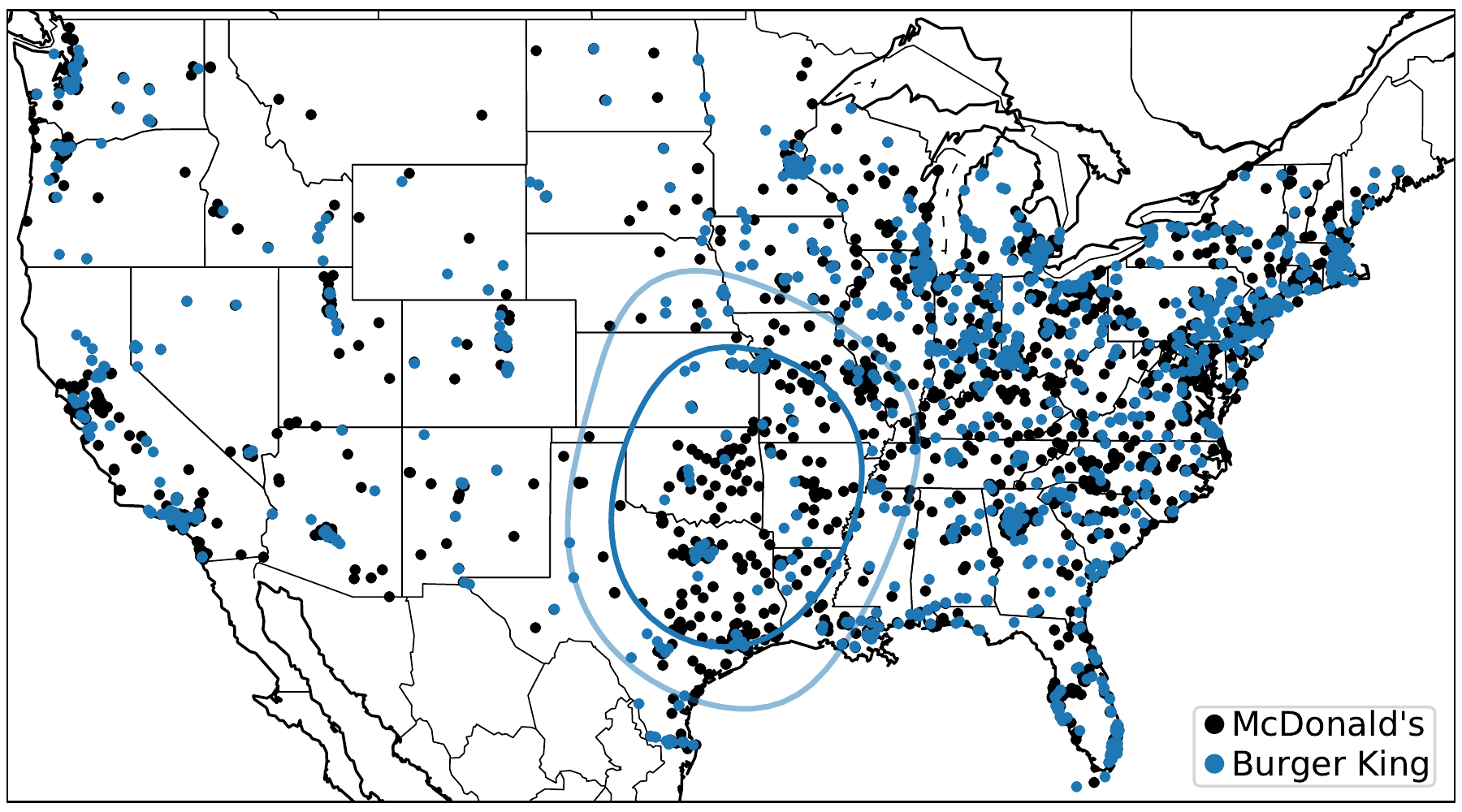}
\end{minipage}
\end{figure}

\begin{figure}
\begin{minipage}{.3\linewidth}
    \caption{\textbf{Mc Donald's vs Taco Bell}\label{fig:taco_bell}}
\end{minipage}
\hfill%
\begin{minipage}{.6\linewidth}
    \includegraphics[width=\linewidth]{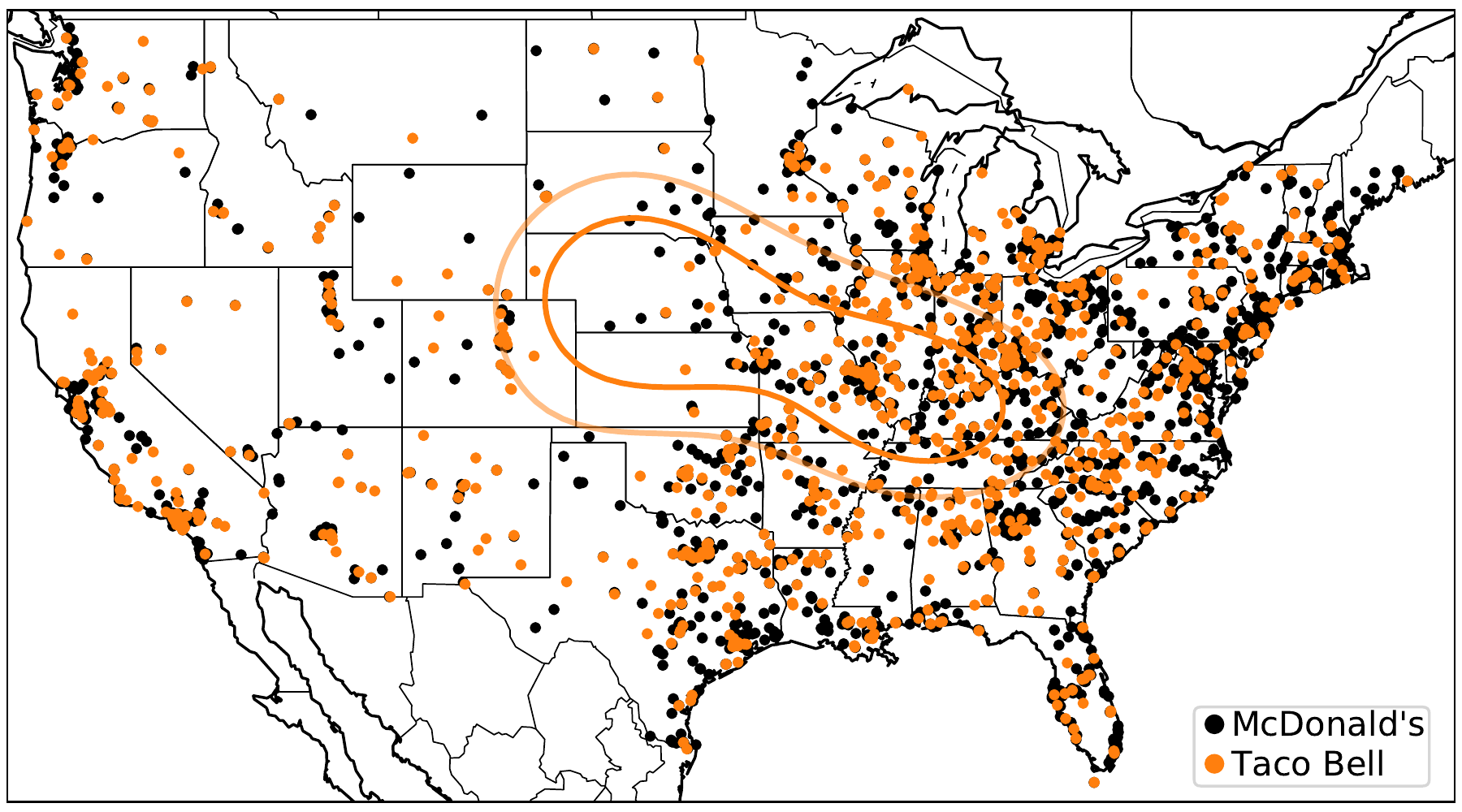}
\end{minipage}
\end{figure}

Figures \ref{fig:burger_king}, 
\ref{fig:taco_bell}, \ref{fig:wendys}, \ref{fig:arbys}, \ref{fig:kfc} give the
distributions of the data (restaurant locations) and of the $T_J$ for
each of the problems that we consider.

\begin{figure}[p]
\begin{minipage}{.3\linewidth}
    \caption{\textbf{Mc Donald's vs Wendy's}\label{fig:wendys}}
\end{minipage}
\hfill%
\begin{minipage}{.6\linewidth}
    \includegraphics[width=\linewidth]{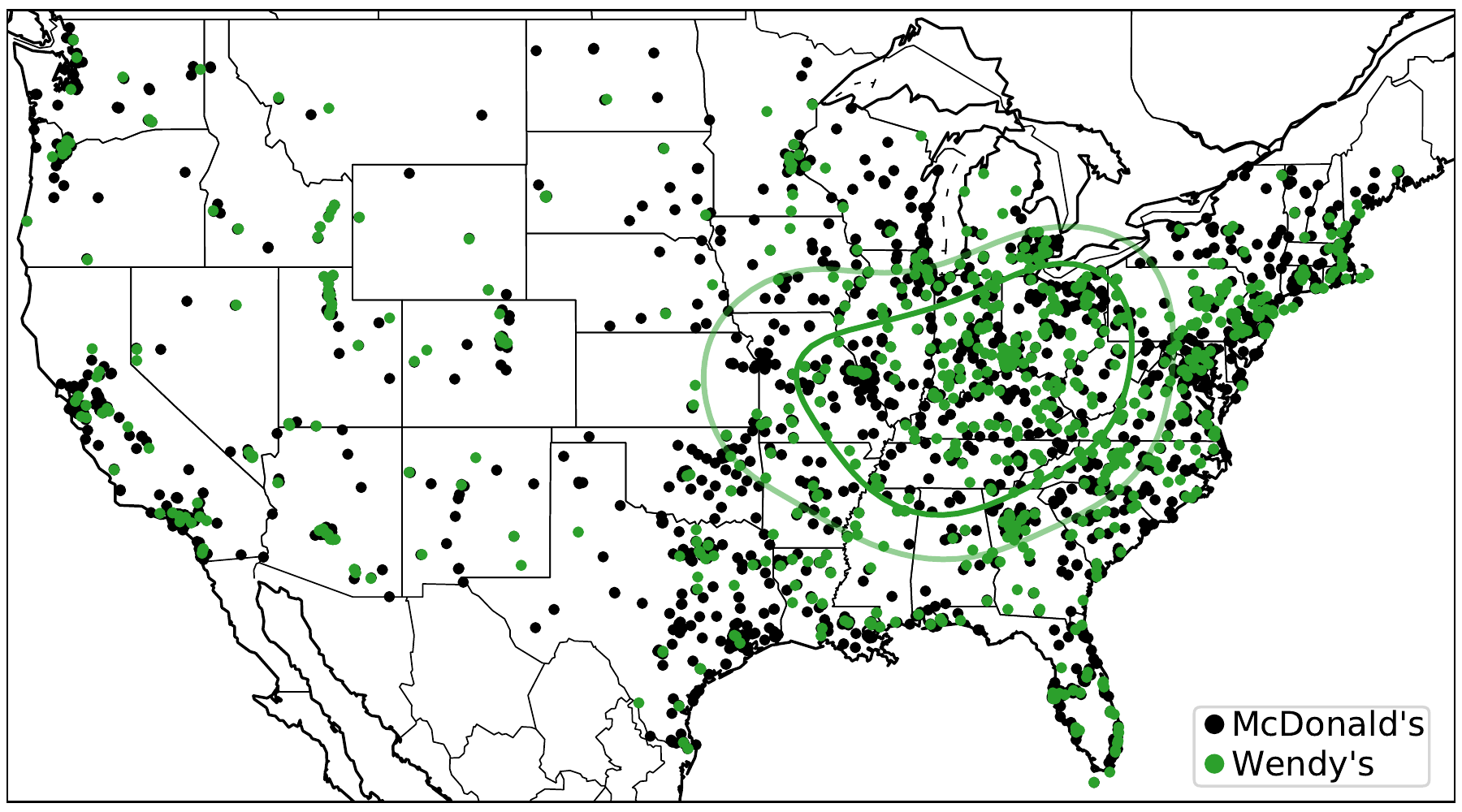}
\end{minipage}
\end{figure}

\begin{figure}[p]
\begin{minipage}{.3\linewidth}
    \caption{\textbf{Mc Donald's vs Arby's}\label{fig:arbys}}
\end{minipage}
\hfill%
\begin{minipage}{.6\linewidth}
    \includegraphics[width=\linewidth]{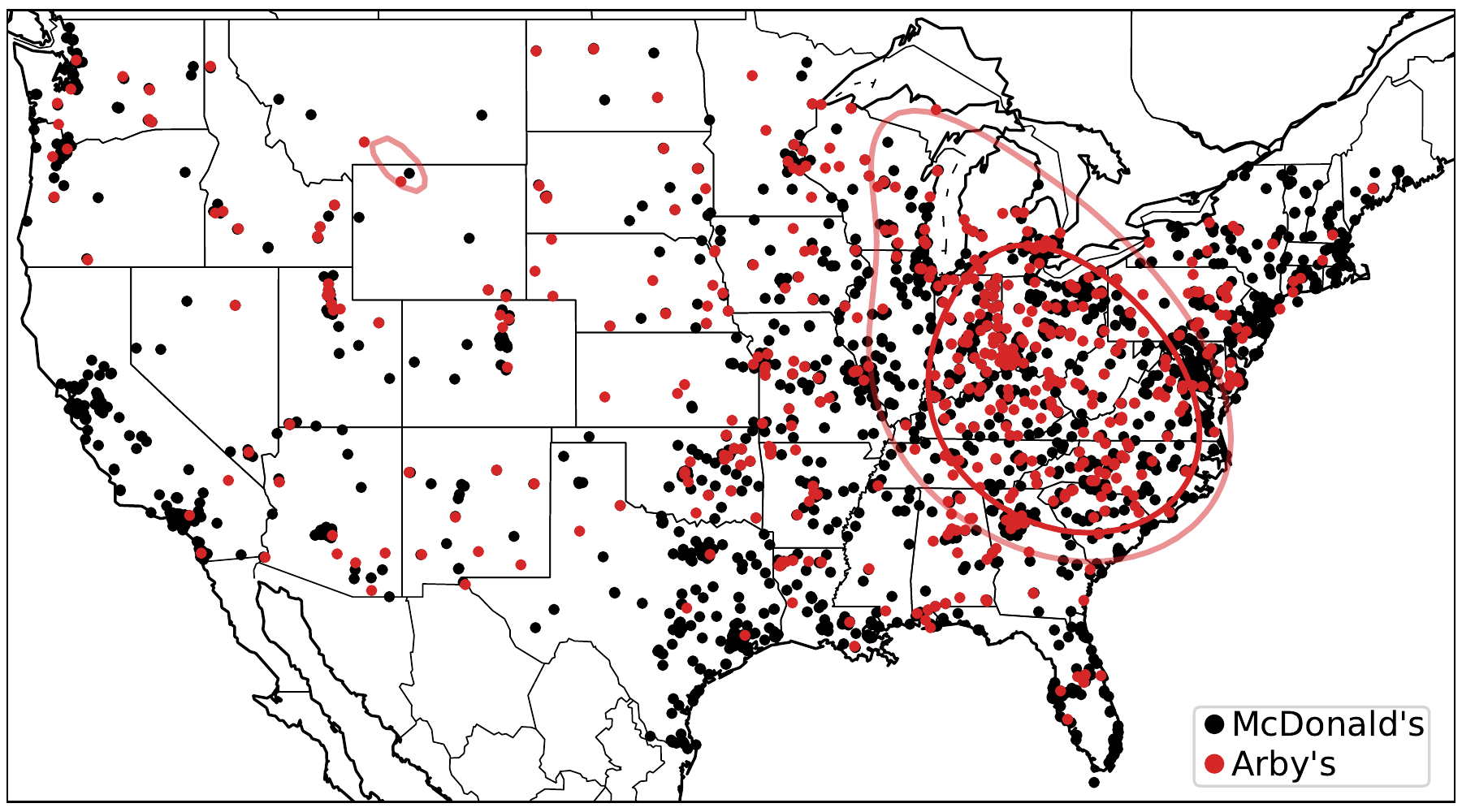}
\end{minipage}
\end{figure}

\begin{figure}[p]
\begin{minipage}{.3\linewidth}
    \caption{\textbf{Mc Donald's vs KFC}\label{fig:kfc}}
\end{minipage}
\hfill%
\begin{minipage}{.6\linewidth}
    \includegraphics[width=\linewidth]{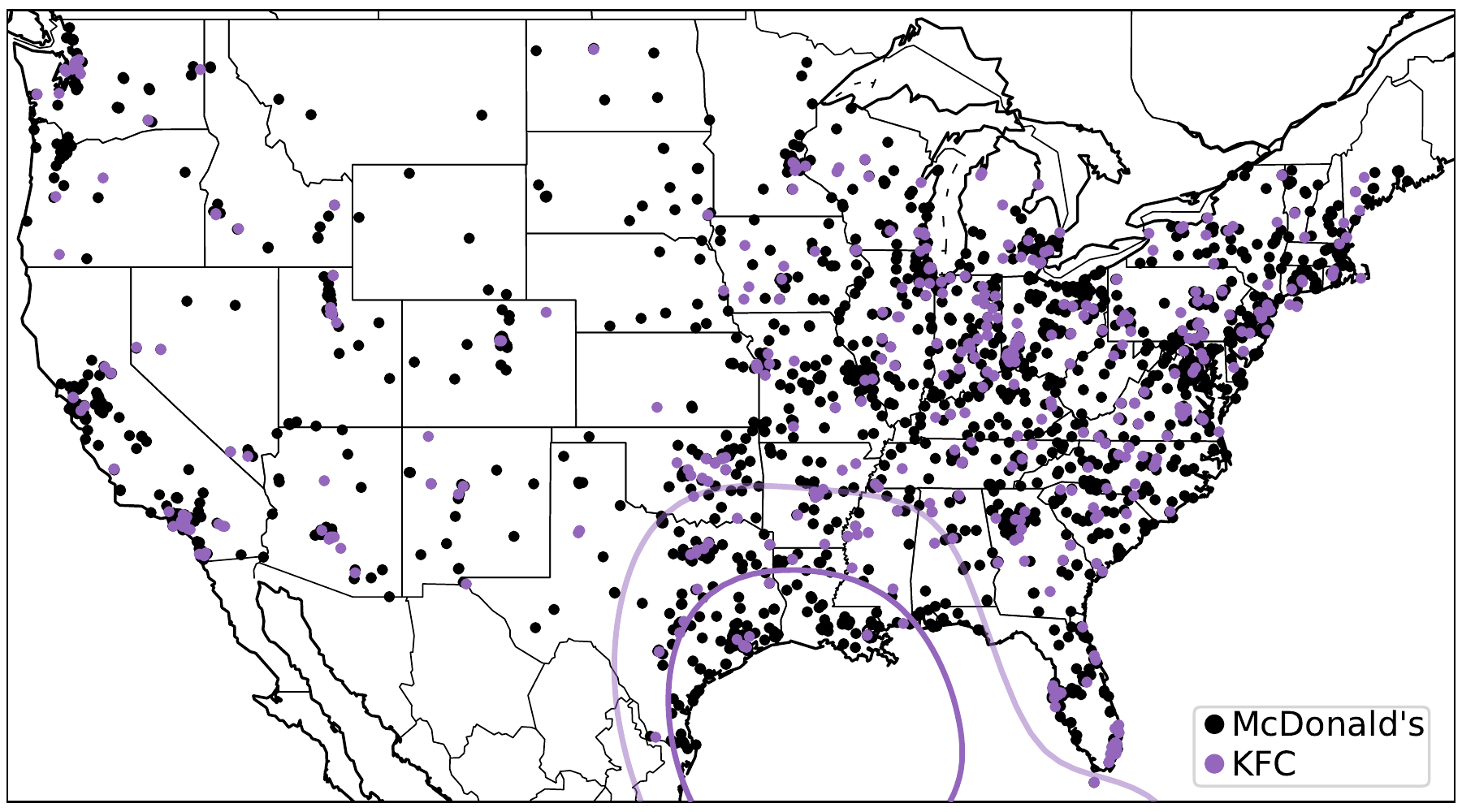}
\end{minipage}
\end{figure}

}{}

\end{document}